\documentclass[twoside,11pt]{article}

\usepackage{blindtext}

%

%
%
%
\usepackage[abbrvbib, preprint]{jmlr2e}

\usepackage[T1]{fontenc}





%
\usepackage{hyperref}


\hypersetup{colorlinks=true, pdfauthor={Name}}
\usepackage[]{xcolor}

\usepackage{amsmath}

\usepackage{enumerate}
\usepackage{algorithm}
\usepackage{algorithmic}

\usepackage{subcaption}
\usepackage{placeins}

\DeclareMathOperator{\dir}{\text{Dir}}

\DeclareMathOperator{\KL}{\textbf{KL}}

\DeclareMathOperator{\p}{\mathbb{P}}
\DeclareMathOperator{\M}{\mathcal{M}}


\newcommand{\bA}{\mathbf{A}}
\newcommand{\bZ}{\mathbf{Z}}
\newcommand{\bW}{\mathbf{W}}
\newcommand{\bTheta}{\mathbf{\Theta}}
\newcommand{\bpi}{\boldsymbol{\pi}}
\newcommand{\brho}{\boldsymbol{\rho}}
\newcommand{\bbeta}{\boldsymbol{\beta}}
\newcommand{\balpha}{\boldsymbol{\alpha}}

\usepackage{ulem}


\usepackage{lastpage}
\jmlrheading{VV}{2024}{1-\pageref{LastPage}}{MM/YY; Revised MM/YY}{MM/YY}{XX-XXXX}{Kylliann De Santiago, Marie Szafranski and Christophe Ambroise}

\usepackage{bbold}

\ShortHeadings{Mixture of multilayer SBM models for multiview clustering}{De Santiago, Szafranski and Ambroise}
\firstpageno{1}

\begin{document}

\title{Mixture of multilayer stochastic block models for multiview clustering}

\author{\name Kylliann De Santiago \email kylliann.desantiago@univ-evry.fr \\
       \addr Université Paris-Saclay, CNRS, Univ Evry,\\ Laboratoire de Mathématiques et Modélisation d'Evry,\\
       91037, \'Evry-Courcouronnes, France.\\
       \AND
       \name Marie Szafranski \email marie.szafranski@univ-evry.fr \\
       \addr ENSIIE,  91025, \'Evry-Courcouronnes, France.\\
       Université Paris-Saclay, CNRS, Univ Evry,\\ Laboratoire de Mathématiques et Modélisation d'Evry,\\
       91037, \'Evry-Courcouronnes, France.\\
        \AND
        \name Christophe Ambroise \email christophe.ambroise@univ-evry.fr \\
       \addr Université Paris-Saclay, CNRS, Univ Evry,\\ Laboratoire de Mathématiques et Modélisation d'Evry,\\
       91037, \'Evry-Courcouronnes, France.}
\editor{My editor}

\maketitle

\begin{abstract}
In this work, we propose an original method for aggregating multiple clustering coming from different sources of information.  Each partition is encoded by a co-membership matrix between observations.  
Our approach uses a  mixture of multilayer Stochastic Block Models (SBM) to group co-membership matrices with similar information into components and to partition observations into different clusters, taking into account their specificities within the components. The identifiability of the model parameters is established and a variational Bayesian EM algorithm is proposed for the estimation of these parameters. The Bayesian framework allows for selecting an optimal number of clusters and components. The proposed approach is compared using synthetic data with consensus clustering and tensor-based algorithms for community detection in large-scale complex networks. Finally, the method is utilized to analyze global food trading networks, leading to structures of interest.
\end{abstract}

\begin{keywords}
 Stochastic Block Model,
 Multiview clustering, Multilayer Network, Bayesian Framework, Integrated Classification Likelihood
\end{keywords}

\section{Introduction}

Most everyday learning situations are achieved by integrating different sources of information, such as vision, touch and hearing. A source of information in a given format will be referred to as a {modality} or 
a \textit{view}.  
Multimodal or multiview machine learning aims to learn models from multiple views (e.g. text, sound, image, etc.) in order to represent, translate, align, fusion, or co-learn~\citep[see][for instance]{zhao2017multi, baltruvsaitis2018multimodal, cornuejols2018collaborative}.

Graphs provide a powerful and intuitive way to represent complex systems of relationships between individuals. They provide an effective and informative representation of the system. Constructing graphs from each view allows to use graph machine learning for multimodal clustering \citep{ektefaie2023multimodal}.



In clustering framework, the output of algorithms are often a partition or a membership matrix $\bZ$. This information, although useful, has the drawback of strongly depending on the number of clusters chosen when using the algorithm. To avoid this problem, $\bZ$ can be transform into an adjacency matrice $\bA$ with 
\begin{equation*}
\label{adjacencySBM}
    {A}_{ij}=
    \begin{cases}
      1, & \text{if}\ \text{individuals $i$, $j$ are linked in the same cluster}, \\
      0, & \text{otherwise}.
    \end{cases}
\end{equation*}

The process of combining numerous data clusters that have already been discovered using various clustering algorithms or approaches is known as \textit{meta clustering}. Finding connections and similarities across clusters that might not be immediately obvious when looking at them separately is the aim of meta or \textit{consensus clustering} \citep{montiConsensusClusteringResamplingBaseda,li2015large,liu2018multi}.
Model-based consensus clustering offer advantages: knowing the redundancy of information sources and their complementarity, obtaining a final clustering from all the  outputs already carried out, allowing the best possible grouping of individuals through different information sources.
Moreover, model-based consensus clustering allows to have an evaluation criterion on the performance of the model (e.g. log-likelihood, evidence, etc.) and, at least in the Bayesian framework, criteria for model selection \citep{biernacki2010exact}. 

The corresponding learning models vary based on their fusion strategy. The three main categories of methods are early, intermediate, and late fusion of views. 
Late fusion is well suited to clustering since each view 
is often associated to dedicated efficient clustering algorithms. 
 
\paragraph{Contribution.} 
In this work we propose to estimate a coordinated representation produced by learning separate clustering for each view and then coordination through a probabilistic model: MIxture of Multilayer Integrator Stochastic Block Model (mimi-SBM).  Our model is a Bayesian mixture of multilayer SBM  
that takes into account several sources of information, and where the membership clustering is traversing as illustrated in Figure~\ref{fig: mimi-sbm}. 


In simpler terms, each individual belongs exclusively to one group, and not to a multitude of groups across views or sources. In the context of meta-clustering, this has the advantage of allowing the model to find common information for each group, by striving for clustering redundancy across sources.
Moreover, by applying a mixture model on the views, we can take into account the particularities of each source of information, and define the redundant and complementary information sources in order to draw a maximum of information from them.
Finally, with the Bayesian framework, the development of a model selection criterion, both for the mixture of views and the number of clusters is possible by deriving it from the evidence lower bound. The identifiability of the model parameters is established and a variational Bayesian EM algorithm is proposed for the estimation of these parameters.

\begin{figure}[!ht]
    \centering
    \includegraphics[scale=0.75]{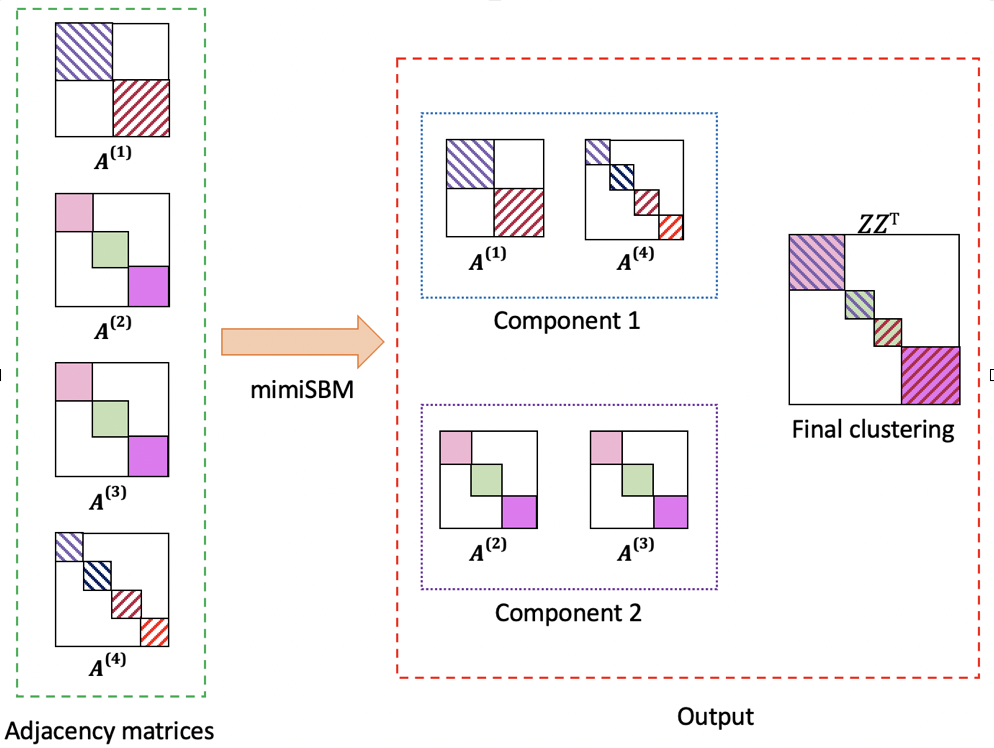}
    \caption{Illustration of mimi-SBM. Left: Four adjacency matrices $\bA^{(1)}, \cdots, \bA^{(4)}$ coming from four different views organized into two components. Right: identification of the two components from the views (local and complementary information) and clustering of the observations described by the classification matrix $\bZ$ (global and consensus information).}
    \label{fig: mimi-sbm}
\end{figure}

\paragraph{Organization of the paper.}
The paper is structured as follows: Firstly, we delve into the related works in more detail, providing a selective survey of the existing literature and research in the field.  Next, we present the description of our innovative \textit{mimi-SBM} model, outlining its key components and parameter estimation.  We then focus on model selection and variational parameter initialization, where we compare different criteria to determine the most effective approach. To evaluate the performance of our proposed approach, we conduct synthetic experiments that allow for a thorough comparison and evaluation. Finally, we engage in a discussion about the results obtained, and their implications, and provide insights into potential future directions for further research.


\section{Background}

In multiview clustering, various fusion strategies have been devised to effectively combine information from different sources. One possible taxonomy of these strategies is related to the timing of fusion: early, intermediate, or late. \textit{Early fusion} starts by merging the different views before clustering. \textit{Intermediate fusion} considers the integration of views within the clustering algorithm. \textit{Late fusion} consists of clustering each view separately and then integrating all the resulting partitions into a unique integrated clustering. This paper focuses on late multiview clustering.  In this scenario, all layers represented as adjacency matrices collectively form a tensor.

This latter strategy harnesses the benefits of employing specific and well-suited clustering approaches for each view and takes advantage of their complementarity and redundancies by merging their results in a subsequent phase. In this context, \textit{consensus clustering} serves as a baseline and will be first described in this section. 
On the other hand, \textit{latent} or \textit{stochastic block models} provide advantageous characteristics for multiview clustering in terms of model selection. In this section, we also provide a focused overview of these approaches, to which our method, tailored to late fusion clustering, belongs.






\subsection{Consensus clustering}



Consensus clustering \citep{montiConsensusClusteringResamplingBaseda,fred2005combining},  also known as cluster ensemble \citep{strehl2002cluster, golalipour2021clustering}, is a technique used to find a single partition from multiple clustering solutions. 

It is used to integrate and analyze multiple clustering results obtained from different algorithms, parameter settings, or subsets of the data. It aims to find a consensus or agreement among the individual clustering solutions to obtain a more robust and reliable clustering result. Consensus clustering is like asking for multiple opinions (clusters) and then finding a common answer (consensus) that represents an overall agreement.

Each run generates a set of clusters, which can be represented as a partition matrix where each entry indicates the cluster assignment of each data point. An agreement matrix is constructed from all the partition matrices. The entries in this matrix indicate the frequency with which a pair of data points co-occur in the same cluster across all solutions.
The final clusters are determined by applying a clustering algorithm 
on the agreement matrix. 

In particular, the Monte Carlo reference-based consensus clustering (M3C) \citep{montiConsensusClusteringResamplingBaseda} combines multiple clustering solutions generated by applying different clustering algorithms and parameters to the same dataset using random (re)sampling techniques.

\subsection{Block models for multiview clustering}

In the framework of block models for multiview clustering, the views are commonly denoted as a collection of $V$ \textit{graphs} or more often as $V$ \textit{layers} within a single network, and the terminologies of  \textit{multigraph}, \textit{multilayer} or even \textit{multiplex} may be employed. 

With a wide expanse of literature existing on this subject, we narrow our focus on studies in which the distinct views represent varying types of interactions among a common set of $N$ observations. However, it's important to note that our scope excludes studies that aim to establish partitions with overlaps or mixed memberships, as well as those where views show specific dependencies (spatial or temporal for instance).
Works of this nature can be discovered in the references provided below.

\subsubsection{Multilayer SBM} 


Multilayer SBM (MLSBM) approaches are focused on identifying a partition  $\bZ$ with $K$ blocks of observations that encompass the different layers. A popular kind of inference for block model estimation relies on Variational Expectation Maximization (VEM) algorithms ~\citep{daudin2008mixture}. Besides, when the data originate from a MLSBM, different estimation techniques can also be applied to identify the partition. Out of these, 
spectral clustering finds widespread usage.~\citep{von2007tutorial, von2008consistency}.

\paragraph{VEM inference.} 
In this setting, different SBM based approaches have been proposed for 
multilayer~\citep{han2015consistent, subhadeep2016consistent} or similarly for multiplex~\citep{barbillon2017stochastic} networks. \cite{han2015consistent} 
propose a consistent maximum-likelihood estimate (MLE) and explore the asymptotic properties of class memberships when the number of relations grows. \cite{subhadeep2016consistent} also study the consistency 
of two other MLEs when the number of nodes or the number of types of edges grow together. \cite{barbillon2017stochastic}  introduce an Erd\"{o}s-Rényi model that may also integrate covariates related to pairs of observations and use an Integrated Completed Likelihood (ICL)~\cite{biernacki2010exact} 
for the purpose of model selection. Also based on VEM estimation, the work of~\cite{boutalbi2021implicit} is grounded on Latent Block Models (LBM).

\paragraph{Spectral clustering.}   
Here, we shed light on several extensive research efforts focused on spectral clustering under the assumption of data generated by a Multilayer Stochastic Block Model.  \cite{han2015consistent} investigate the asymptotic characteristics related to spectral clustering. \cite{chen2017multilayer} introduce a framework for multilayer spectral graph clustering that includes a scheme for adapting layer weights, while also offering statistical guarantees for the reliability of clustering. \cite{mercado2018power} presents a spectral clustering algorithm for multilayer graphs that relies on the matrix power mean of Laplacians. \cite{paul2020spectal} show the consistency of the global optimizers of co-regularized spectral clustering and also for the orthogonal linked matrix factorization. Finally, \cite{huang2022spectral} propose integrated spectral clustering methods based on convex layer aggregations.



\subsubsection{Multiway block models} 

In contrast to the works presented above, where the aim is to establish a partition across the observations, the approaches presented below focus on establishing multiway structures, especially between- and within-layer partitions. In this context, the MLSBM can evolve into either a Mixture of Multilayer SBM (MMLSBM) or expand into a Tensor Block Model (TBM), depending on the specific research communities and their focus.

\paragraph{Mixture of multilayer SBM.} \cite{stanley2016clustering} introduced one of the first approaches that integrated a multilayer SBM with a mixture of layers, using a two-step greedy inference method. In the initial step, it infers a  SBM for each layer and groups together SBMs with similar parameters. In the second step, these outcomes serve as the starting point for an iterative procedure that simultaneously identifies $Q$ strata spanning the $V$ layers. In each stratum $s$, the nodes are independently distributed into $K_s$ blocks, leading to $Q$ membership matrices $\{\bZ^1, \cdots, \bZ^Q\}$. 

In pursuit of the same goal, \cite{fan2022alma} develop an alternating minimization algorithm that offers theoretical guarantees for both between-layer and within-layer clustering errors. 
\cite{rebafka2023modelbased} proposes a Bayesian framework for a finite mixture of MLSBM and employs a hierarchical agglomerative algorithm for the clustering process. It initiates with individual singleton clusters and then progressively merges clusters of networks according to 
an ICL criterion also used for model selection. 

Also, \cite{pensky2021clustering} presents a versatile model for diverse multiplex networks, including both MLSBM and MMLSBM. Note that in the latter scenario, they make the assumption that the number of blocks within each group of layers remains consistent, such that $K_s=K$, $\forall s$. They perform a spectral clustering on the layers and then aggregate the resulting block connectivity matrices to determine the between-layer partition of observations. Using this model as a foundation, \cite{noroozi2022sparse} introduces a more efficient resolution technique rooted in sparse subspace clustering~\citep{elhamifar2013sparse}. They demonstrate that this algorithm consistently achieves strong between-layer clustering results. 
In both studies, they provide valuable insights comparing their assumption that $K_s=K$ for all components with the scenario where $K_s$ is considered a known value for each component $s$. 
 This discussion is particularly relevant in the context of methods that are not designed for the task of model selection.

\paragraph{Tensor block models.}  A different strategy for addressing the challenge of late fusion multiview clustering involves tensorial modeling and estimation techniques.  \cite{wang2019multiway} position their research within the context of higher-order tensors. They introduce a least-square estimation method for (sparse) TBM and demonstrate the reliability of block structure recovery as the data tensor's dimension increases by providing consistency guarantees. \cite{han2022exact} suggest employing high-order spectral clustering as an initialization of a high-order Lloyd algorithm. They establish convergence guarantees and statistical optimality under the assumption of sub-Gaussian noise. 

\cite{boutalbi2020tensor} introduce an extension of Latent Block Models to handle tensors. They consider multivariate normal distributions for continuous data and Bernoulli distributions for categorical data, implementing a VEM algorithm for this purpose. 

Finally, \cite{jingCommunityDetectionMixture2020a} employs the Tucker decomposition to conduct alternating regularized low-rank approximations of the tensor. This technique consistently uncovers connections both within and across layers under near-optimal network sparsity conditions. They also establish a consensus clustering of observations by applying a $k$-means algorithm to the local membership decomposition matrix.

\section{Mixture of  Multilayer SBM}

Our model builds on SBM and 
considers two sets of latent variables corresponding respectively to the structure of the observations and the structure of the views. This proposal is at the crossroads of MLSBM, which involves the discovery of a traversing membership matrix of observations spanning all layers, and MMLSBM, which involves uncovering structural patterns within the layers.

\paragraph{Observations.}
We consider the observed data to be a tensor $\bA \in \{0,1\}^{N\times N \times V}$   where $N$ is the number of observations (vertices), and $V$ the number of views.  Each of the $V$ slices of $\bA$ is an adjacency matrix corresponding to a graph $\mathcal{G}^v$. The tensor is thus a stack of adjacency matrices for multiple view graphs $(\mathcal{G}^1,\cdots,\mathcal{G}^V)$ with corresponding vertices. Let denote $(i,j)$ an edge between observations $i$ and $j$, we have by definition  $A_{ijv} = \mathbb I_{((i,j)\in E^v)}$ where $E^v$ is the set of edges of the graph $\mathcal{G}^v$. 


\paragraph{Latent structures.}
Let $\bZ \in  \{0,1\}^{N\times K}$ be the indicator membership matrix of observations, where $K$ is the number of view traversing clusters. We have by definition  $Z_{ik}= \mathbb{I}_{ (i \in k)}$, where $i$ denotes an observation and $k$ is a cluster across the views. 

Let denote $\bW \in  \{0,1\}^{V\times Q}$, the indicator membership matrix for the views where $Q$ is the number of components of the view mixture. We have $W_{vs}= \mathbb{I}_{(  v \in s)}$, where  $v$ is a view and $s$ a cluster of views.

%

\subsection{A mixture of observations through a mixture of views} 

The $V$ views are assumed to be generated by a mixture model of $Q$ components. Each component  $s$  is a SBM.  Each line of  matrix $\bW$  is assumed to follow a multinomial distribution, $\displaystyle{\bW_v \sim \mathcal{M}(1, {\brho} = (\rho_1,\dots, \rho_Q))}$, with   
\begin{equation*}
    \mathbb{P}(\bW \mid {\brho} ) = \prod_{v = 1}^V \prod_{s = 1}^Q  \rho_v^{W_{vs}}.
\end{equation*}

Although we use multiple views with their known cluster structure (SBM), we assume a \textit{traversing structure}  \textit{across all views} described by the latent variable $\bZ$. By leveraging all available sources of information, we aim to achieve a more comprehensive understanding of the data and obtain community structures that are consistent across all views. The individuals are thus assumed to come from a number $K$ of sub-populations. 

Each  latent class vector for the observation follows a multinomial distribution, with  $\displaystyle{\bZ_i \sim \mathcal{M}(1, {\bpi} = (\pi_1,\dots, \pi_K))}$, and 
\begin{equation*}
    \mathbb{P}(\bZ \mid \boldsymbol{\pi}) = \prod_{i = 1}^N \prod_{k = 1}^K  \pi_k^{Z_{ik}}.
\end{equation*}


Each observation $\displaystyle{A_{ijv}}$ conditionally to the latent structure $\bZ$ follows a Bernoulli distribution: $\displaystyle{A_{ijv} \mid Z_{ik} = 1, Z_{jl} = 1 \sim \mathcal{B}(\alpha_{kls})}$.  The probability of all observations given the latent variables $\bZ$, $\bW$ and a vector of parameters $\bTheta$, is  thus
%
%
\begin{align*} 
\mathbb{P}(\bA \mid \bZ, \bW, 
{\bTheta})  & = \prod_{\substack{i=1,\\ i<j}}^N  \prod_{\substack{k=1\\l=1}}^K   \prod_{v=1}^V  \prod_{s=1}^Q \left(  \alpha_{kls}^{A_{ijv}}  \left(1- \alpha_{kls} \right)^{1-A_{ijv}}   \right)^{Z_{ik}Z_{jl}W_{vs}}.
\end{align*}

\subsection{Identifiability}
\label{sec:identifiability}

\begin{theorem}
Let $N \geq \max(2K,4Q)$ and $V \geq 2K$. Assume that for any $k,l \in \{1,\dots,K \}$ and every $s \in \{1,\dots,Q \}$, the coordinates of $\boldsymbol{\pi}^T  \boldsymbol{\alpha}_{k..}  \boldsymbol{\rho}$ are all different, $(\boldsymbol{\pi}^T  \boldsymbol{\alpha}_{..s} \boldsymbol{\pi})_{s=1:Q}$ are distinct, and each $(\boldsymbol{\alpha}_{kl.} \boldsymbol{\rho})_{k,l=1:K}$  differs. Then, the mimi-SBM parameters $\boldsymbol{\Theta} =\left( \boldsymbol{\pi},\boldsymbol{\rho},\boldsymbol{\alpha}\right)$ are identifiable.
\end{theorem}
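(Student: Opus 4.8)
The plan is to show that the map sending $\boldsymbol{\Theta}=(\boldsymbol{\pi},\boldsymbol{\rho},\boldsymbol{\alpha})$ to the law of the tensor $\bA$ is injective up to the two label symmetries inherent to the model, namely a permutation of the $K$ observation clusters and a permutation of the $Q$ view components. I would work entirely with marginal probabilities (moments) of $\bA$, since these are determined by the law, and recover the parameters in three stages that mirror the three hypotheses. Throughout write $\bar\alpha_{kl}=\boldsymbol{\alpha}_{kl.}\boldsymbol{\rho}=\sum_{s}\rho_s\alpha_{kls}$ for the $\boldsymbol{\rho}$-averaged connectivity and $\bar{\boldsymbol{\alpha}}=(\bar\alpha_{kl})$ for the corresponding $K\times K$ matrix.

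First I would recover the observation structure. Conditioning on the (latent) cluster labels, a single entry has mean $\mathbb{E}[A_{ijv}\mid Z_i=k,Z_j=l]=\bar\alpha_{kl}$, so the first-moment structure of $\bA$ is exactly that of a standard SBM on $N$ nodes with connectivity $\bar{\boldsymbol{\alpha}}$ and proportions $\boldsymbol{\pi}$. The $k$-th expected degree of this aggregated SBM is precisely $\boldsymbol{\pi}^{T}\boldsymbol{\alpha}_{k..}\boldsymbol{\rho}$, whose coordinates are assumed distinct (first hypothesis); together with $N\ge 2K$ (and $V\ge 2K$, which supplies enough independent views to pin down the shared partition) this lets me invoke the classical SBM identifiability argument to recover $\boldsymbol{\pi}$, the partition of the nodes into $K$ clusters, and the matrix $\bar{\boldsymbol{\alpha}}$, up to relabeling of the clusters. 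The assumption that the $\bar\alpha_{kl}$ are pairwise distinct (third hypothesis) then tags each block $(k,l)$ unambiguously by its averaged connectivity, which fixes the cluster labelling used in the next stages.

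Next I would disaggregate $\bar\alpha_{kl}$ into the component-specific values $\alpha_{kls}$ and recover $\boldsymbol{\rho}$. Because, conditionally on $\bZ$, the $V$ views are i.i.d. draws from a $Q$-component mixture in which component $s$ makes the edges of block $(k,l)$ Bernoulli$(\alpha_{kls})$, products of several edges lying in the same view and the same block produce the mixed moments $m_{kl}^{(r)}=\sum_s\rho_s\alpha_{kls}^{\,r}$. Reading these off from the law of $\bA$ for $r=1,\dots,2Q$ --- for which $N\ge 4Q$ guarantees enough within-view edges in each block --- turns the recovery of $\{(\rho_s,\alpha_{kls})\}_s$ into the classical one-dimensional moment (Hankel/Vandermonde) problem, whose solution is unique once the support points are distinct. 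The second hypothesis, that the component densities $\boldsymbol{\pi}^{T}\boldsymbol{\alpha}_{..s}\boldsymbol{\pi}$ are distinct, guarantees that the $Q$ components are genuinely different and pins down a single permutation of the component labels that is consistent across all blocks, yielding $\boldsymbol{\rho}$ and the full tensor $\boldsymbol{\alpha}$.

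I expect the main obstacle to be the two coupled layers of latent variables: a single view is not itself an SBM, because the shared component $s$ correlates all of its edges, so the classical SBM identifiability results cannot be applied verbatim and one must carefully separate the variation induced by the node partition (shared across views) from that induced by the view component (independent across views). Keeping the two permutation symmetries coherent across the stages, and establishing that the moment system for the disaggregation step admits a unique solution under the stated distinctness and size conditions, is the technically delicate part; an alternative would be to cast the conditionally-independent structure as a three-way array and invoke a Kruskal/Allman--Matias--Rhodes uniqueness theorem, with the three distinctness hypotheses supplying the required Kruskal-rank (linear independence) conditions and the size bounds $N\ge\max(2K,4Q)$, $V\ge 2K$ ensuring the array has enough slices.
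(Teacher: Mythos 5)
Your first stage is essentially the paper's: the paper also recovers $\boldsymbol{\pi}$ by a Vandermonde/moment argument \`a la Celisse et al., and it resolves the within-view correlation you correctly flag by placing each of the $2K-1$ edges used in the joint moments in a \emph{different} view (edges $A_{121},A_{132},\dots$ all hanging off node $1$), which is exactly why $V\geq 2K$ appears. The problem is your second stage. The block-wise univariate moment problem $m^{(r)}_{kl}=\sum_s\rho_s\alpha_{kls}^{\,r}$ has a unique solution only when the support points $\alpha_{kl1},\dots,\alpha_{klQ}$ are pairwise distinct \emph{for that fixed block} $(k,l)$ --- and this is not among the theorem's hypotheses. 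The stated assumptions only give distinctness of the aggregates $\boldsymbol{\pi}^T\boldsymbol{\alpha}_{..s}\boldsymbol{\pi}$ across $s$ and of $\boldsymbol{\alpha}_{kl.}\boldsymbol{\rho}$ across $(k,l)$; two components can perfectly well coincide on a given block while differing globally, in which case your Hankel system for that block is degenerate. Moreover, even where the block-wise problems are nondegenerate, they return unordered sets of pairs $(\rho_s,\alpha_{kls})$ separately for each block, and the consistent alignment of component labels across blocks cannot be extracted from the second hypothesis alone, since $\boldsymbol{\pi}^T\boldsymbol{\alpha}_{..s}\boldsymbol{\pi}$ is a quantity you can only form \emph{after} that alignment is known.

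The paper avoids both issues by never disaggregating block by block. It first recovers $\boldsymbol{\rho}$ and the component-level edge probabilities $t_s=\boldsymbol{\pi}^T\boldsymbol{\alpha}_{..s}\boldsymbol{\pi}$ by a second, independent Vandermonde argument using products of edges on \emph{disjoint node pairs within a single view} (this is where $N\geq 4Q$ enters, and where the second hypothesis is actually used). It then reparametrizes node pairs into $\tilde K=K(K+1)/2$ edge-clusters, turning the model into a latent block model, and forms a matrix $\mathbf{U}$ of joint probabilities mixing edges that share a node (rows) with edges that share a view (columns). This yields the bilinear factorization $\mathbf{U}=\tilde{\mathbf{R}}\operatorname{Diag}(\tilde{\boldsymbol{\pi}})\,\tilde{\boldsymbol{\alpha}}\operatorname{Diag}(\boldsymbol{\rho})\mathbf{T}^T$, where the third hypothesis (distinctness of the $\boldsymbol{\alpha}_{kl.}\boldsymbol{\rho}$) serves to make the Vandermonde matrix $\tilde{\mathbf{R}}$ invertible --- not to ``tag'' blocks after an SBM recovery --- and $\tilde{\boldsymbol{\alpha}}$ is obtained in one shot by inverting the four outer factors, which settles the cross-block label matching automatically. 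Your suggested fallback via Kruskal/Allman--Matias--Rhodes is a plausible alternative route, but as written it is only gestured at; the argument you actually develop would need the extra per-block distinctness assumption to go through.
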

\begin{proof}
The proof of this theorem is given in Appendix \ref{sec:proof_identifiability}.
\end{proof}

\subsection{Bayesian modeling} 

Bayesian modeling provides a natural framework for incorporating prior knowledge which can improve the accuracy of the estimated block structure, mainly when the available data is limited or noisy. 


In this context, we follow \cite{latoucheVariationalBayesianInference2010} and define the chosen conjugate distributions both for the proportion of the mixture and the proportions of the blocks. Conjugate priors lead to closed-form posterior distributions. 
\begin{align}
    \mathbb{P}(\boldsymbol{\pi} \mid  \boldsymbol{\beta}^0 &=(\beta^0_1,\dots,\beta^0_K)  ) = \dir(\boldsymbol{\pi};\boldsymbol{\beta}^0), \\
    \mathbb{P}(\boldsymbol{\rho} \mid  \boldsymbol{\theta}^0 &=(\theta^0_1,\dots,\theta^0_Q)  ) = \dir(\boldsymbol{\rho};\boldsymbol{\theta}^0),
\end{align}
where $\dir(.)$ stands for the Dirichlet distribution. 
\begin{equation}
    \mathbb{P}(\boldsymbol{\alpha} \mid  \boldsymbol{\eta}^0 = (\eta^0_{kls}), \boldsymbol{\xi}^0=(\xi^0_{kls})  ) = \prod_{k,k<l} \prod_{s} \mathrm{Beta}(\alpha_{kls}; \eta^0_{kls},\xi^0_{kls}).
\end{equation}

\begin{figure}[!ht]
    \centering
    \includegraphics[scale=0.28]{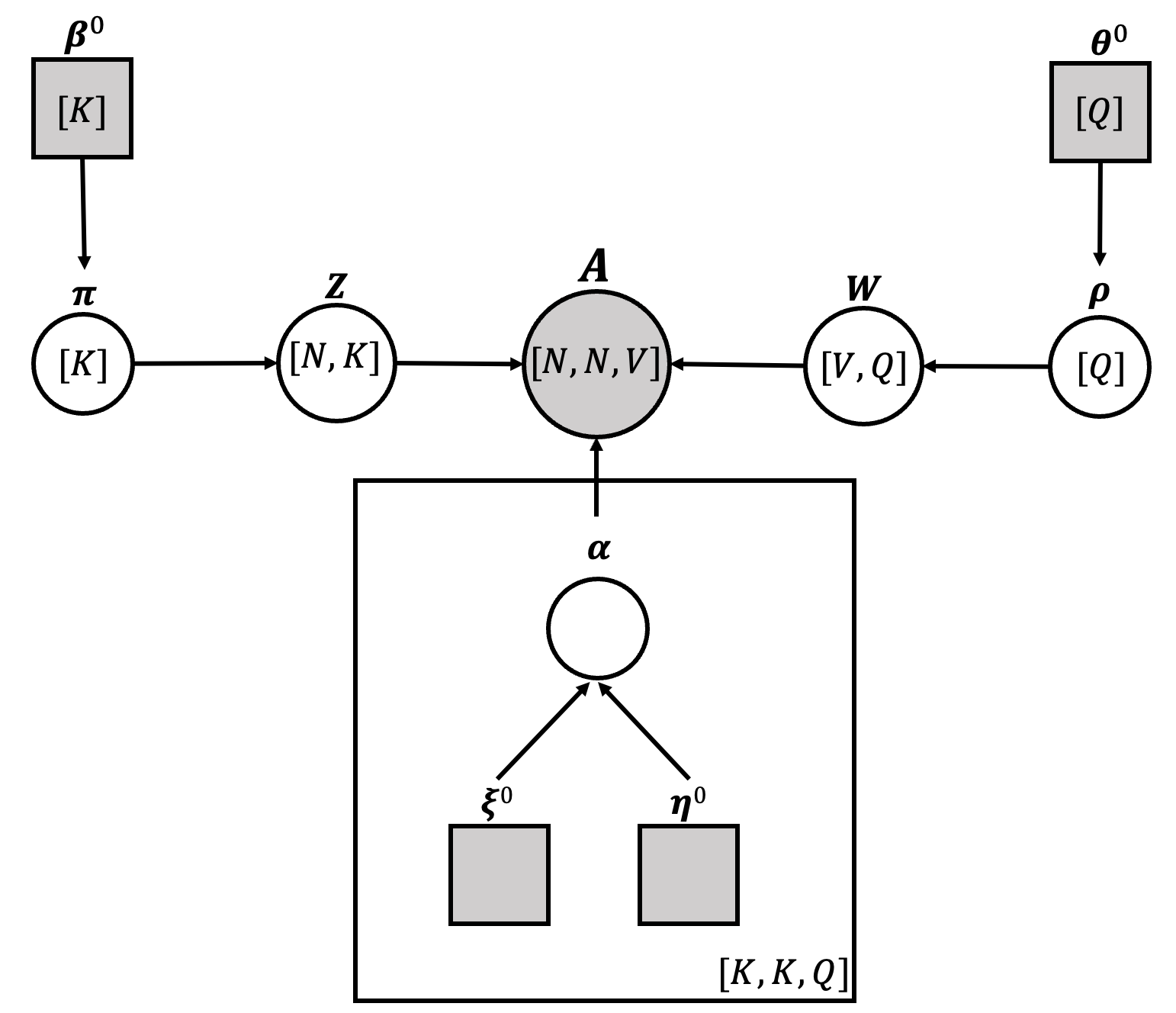}
    \caption{Illustration of mimi-SBM with bayesian notations}
\end{figure}


The parameters 
${{\boldsymbol{\beta}^0,\boldsymbol{\theta}^0},\boldsymbol{\eta}^0,\boldsymbol{\xi}^0 }$ are chosen according to Jeffreys priors which are often considered non-informative or weakly informative. They do not introduce strong prior assumptions or biases into the analysis. 

For the Dirichlet distribution, a suitable choice for $\beta^0_k$ and $\theta^0_s$ is setting them both to 1/2, which directly corresponds to an objective Jeffreys prior distribution. Similarly, for the Beta distribution, $\eta^0_{kls}$ and $\xi^0_{kls}$ can be chosen as 1/2 for all appropriate indices k, l, and s.

\section{Variational Bayes Expectation Maximisation for mimi-SBM}

Computing the marginal likelihood is a challenging problem in Stochastic Block Models:
\begin{equation}
\begin{aligned} 
\p\left( \bA \right) = \sum_\bZ \sum_\bW \int \int \int \p\left( \bA,\bZ,\bW, \boldsymbol{\alpha},\boldsymbol{\pi},\boldsymbol{\rho} \right)\  d\boldsymbol{\alpha} \ d\boldsymbol{\pi} \ d\boldsymbol{\rho}.
\end{aligned}
\end{equation}
The computation of integrals in the formula for this marginal likelihood presents analytical challenges or becomes unfeasible, while sums over $\bZ$ and $\bW$ become impractical when the number of parameters or observations is substantial.

Approximation of complex posterior distributions is usually performed either by sampling (Markov Chain Monte Carlo or related approaches) or by  Variational Bayes inference introduced by ~\cite{attias1999variational}.
Variational Bayes Expectation Maximization algorithm offers several advantages such as reduced computation time and the ability to work on larger databases.

\subsection{Evidence Lower Bound}

Variational inference  is computationally efficient and scalable to large datasets and work especially well for SBM models. It formulates the problem as an optimization task, where the goal is to find the best approximation to the true posterior distribution. This optimization framework allows for efficient computation of the variational parameters by maximizing a lower bound on the log-likelihood, known as the evidence lower bound (ELBO). Optimization techniques like stochastic gradient descent (SGD) or Expectation-Maximisation algorithm can be employed to find the optimal variational parameters.

The distribution $\p(\bZ, \bW| \bA,\boldsymbol{\alpha},\boldsymbol{\pi},\boldsymbol{\rho})$ is intractable when taking SBM into account, hence we approximate the entire distribution $\p(\bZ,\bW,\boldsymbol{\alpha},\boldsymbol{\pi},\boldsymbol{\rho} \mid \bA)$. Given a variational distribution $q$ over $\{\bZ,\bW,\boldsymbol{\alpha},\boldsymbol{\pi},\boldsymbol{\rho}\}$, we can decompose the marginal log-likelihood into Evidence Lower BOund (ELBO) part and KL-divergence between variational and posterior distribution :
\begin{eqnarray*}
\log P(\bA) &= &  \mathbb E_{q}\left[ \log \frac{P\left( \bA,\bZ,\bW, \boldsymbol{\alpha},\boldsymbol{\pi},\boldsymbol{\rho} \right)}{P(\bZ,\bW, \boldsymbol{\alpha},\boldsymbol{\pi},\boldsymbol{\rho}  | \bA)}\right]\\
&= & \underbrace{\mathbb E_{q}\left[ \log \frac{P\left( \bA,\bZ,\bW, \boldsymbol{\alpha},\boldsymbol{\pi},\boldsymbol{\rho} \right)}{q(\bZ,\bW,\boldsymbol{\alpha},\boldsymbol{\pi},\boldsymbol{\rho})}\right]}_{ELBO = \mathcal{L}\left(q(.) \right) } + \underbrace{\mathbb E_{q}\left[ \log \frac{q(\bZ,\bW,\boldsymbol{\alpha},\boldsymbol{\pi},\boldsymbol{\rho}) }{P(\bZ,\bW, \boldsymbol{\alpha},\boldsymbol{\pi},\boldsymbol{\rho}  \mid \bA)}\right]}_{\KL \left( q(\cdot) \| \p(\cdot\mid \bA)\right)}
\end{eqnarray*}
where
$\KL \left(q(.)  \mid \p(\cdot\mid \bA)\right) = -\mathbb E_q[\log{\frac{p}{q}}]\geq -\log \mathbb E_q[{\frac{p}{q}}]\geq 0$
from Jensen inequality.

The ELBO is given by
\begin{equation}
\label{ELBO}
\mathcal{L}\left(q(\cdot)\right) = \sum_{\bZ,\bW} \int \int \int q(\bZ,\bW,\boldsymbol{\alpha},\boldsymbol{\pi},\boldsymbol{\rho}) \log\dfrac{p\left( \bA,\bZ,\bW, \boldsymbol{\alpha},\boldsymbol{\pi},\boldsymbol{\rho} \right)}{q(\bZ,\bW,\boldsymbol{\alpha},\boldsymbol{\pi},\boldsymbol{\rho})} \  d\boldsymbol{\alpha} \ d\boldsymbol{\pi} \ d\boldsymbol{\rho}.
\end{equation}



The variational distribution is typically selected from an easier-to-handle family of distributions, such as the exponential family. The variational distribution's parameters are then adjusted to reduce the KL divergence to the posterior distribution.
If $q(.)$ is exactly $p(.|\bA)$ the $\KL$ term is equal to $0$, and the ELBO is maximized.



We assume a mean-field approximation for $q(\cdot)$: 
\begin{equation}
\begin{aligned} 
\label{meanfield}
q(\bZ,\bW,\boldsymbol{\alpha},\boldsymbol{\pi},\boldsymbol{\rho}) &=  \prod_{i=1}^N q(\bZ_i) \  \prod_{v=1}^V q(\bW_v) \ \prod_{s=1}^Q \prod_{k,k\leq l}^K q(\alpha_{kls}) \  q(\boldsymbol{\pi}) \  q(\boldsymbol{\rho})\\
&=  \dir(\boldsymbol{\pi};\boldsymbol{\beta}) \  \dir(\boldsymbol{\rho};\boldsymbol{\theta}) \prod_{i=1}^N \M(\bZ_i;1,\boldsymbol{\tau}_i) \  \prod_{v=1}^V \M(\bW_i;1,\boldsymbol{\nu}_v) \\ 
&\quad \prod_{s=1}^Q \prod_{k,k\leq l}^K  \mathrm{Beta}(\alpha_{kls}; \eta_{kls},\xi_{kls}),
\end{aligned}
\end{equation}
where $\tau_{ik}$ (resp. $\nu_{vs}$) are variational  parameters indicating the probability that individual $i$ (resp. a view $v$) belongs to cluster $k$ (resp. component $s$).


According to (\ref{ELBO}), given a distribution $q(.)$, the ELBO is given by
\begin{equation}
\label{equation:ILvb}
\begin{aligned}
\mathcal{L}\left(q(.)\right) &=  \log \left\{\frac{\Gamma\left(\sum_{k=1}^K \beta_k^0\right) \prod_{k=1}^K \Gamma\left(\beta_k\right)}{\Gamma\left(\sum_{k=1}^K \beta_k\right) \prod_{k=1}^K \Gamma\left(\beta_k^0\right)}\right\}
+\log \left\{\frac{\Gamma\left(\sum_{s=1}^Q \theta_s^0\right) \prod_{s=1}^Q \Gamma\left(\theta_s\right)}{\Gamma\left(\sum_{s=1}^Q \theta_s\right) \prod_{s=1}^Q \Gamma\left(\theta_s^0\right)}\right\}\\
&\quad +\sum_{k \leq l}^K  \sum_{s=1}^Q  \log \left\{\frac{\Gamma\left(\eta_{k l s}^0+\xi_{k l s }^0\right) \Gamma\left(\eta_{k  l s}\right) \Gamma\left(\xi_{k l s}\right)} {\Gamma\left(\eta_{k l s} +\xi_{k  l s} \right) \Gamma\left(\eta_{k l s}^0\right) \Gamma\left(\xi_{k l s}^0\right)}\right\} \\
&\quad -\sum_{i}^N \sum_{k}^K  \tau_{i k} \log \tau_{i k} \ - \sum_{v}^V \sum_{s}^Q  \nu_{v s} \log \nu_{v s},
\end{aligned}
\end{equation}
where $\Gamma(.)$ is the Gamma function.
This function is also called Integrated Likelihood variational Bayes (ILvb, \cite{latoucheVariationalBayesianInference2010}), since it can be used for model selection.

\subsection{Lower bound optimization}

We consider a Variational Bayes EM algorithm for estimating the parameters. The algorithm starts by initializing the model parameters and then iteratively performs two steps: the Variational Bayes Expectation step (VBE-step) and the Maximization step (M-step) (See Algorithm \ref{algo}). 

In the VBE-step, the variational distributions are optimized over latent variables: 
\(q(\bZ_i) \ \forall i \in \{1,\dots,N\}$ and $q(\bW_v) \ \forall v \in \{1,\dots,V\}\)  to approximate the true posterior distribution.

In the M-step, the parameters of the model are updated to maximize a lower bound on the log-likelihood, with respect to parameters computed in the VBE-step: $\boldsymbol{\beta}$, $\boldsymbol{\theta}$, $\boldsymbol{\eta}$, and $\boldsymbol{\xi}$.






There exist multiple techniques for initializing the EM algorithm. One prevalent approach involves using random initial values, where the model parameters are assigned random values drawn from a designated distribution. Nevertheless, this method may lack reliability and fail to provide satisfactory starting values for the algorithm.
According to the initialization method proposed in \cite{stanley2016clustering}, the parameters $(\tau_{ik})$ and $(\nu_{vs})$ are initialized based on the outcomes of a stochastic block model (SBM) applied separately to each view. The objective is to capture the overall structure of the data from each view using SBM, combine this information using K-means clustering, and subsequently refine the obtained results using our model.





\begin{algorithm}
\caption{mimi-SBM \label{algo}}

\begin{algorithmic} 
\REQUIRE Tensor of adjacency matrices \textbf{A},  Number of clusters $K$,  Number of components of the views $Q$, precision \textit{eps}.
\STATE Initialization : $\tau_{ik}^{(old)}$ and $\nu_{ik}^{(old)}$ 
\WHILE{$\| \mathcal{L}\left(q^{new}(.) \right) - \mathcal{L}\left(q^{old}(.)\right) \| < \textit{eps}  $}
\STATE \textbf{VBE-step}
\STATE Compute $\tau_{ik}^{(new)} \ \forall i \in \{1,\dots,N\}$ and $\forall k \in \{1,\dots,K\}$
\STATE Compute $\nu_{vs}^{(new)} \ \forall v \in \{1,\dots,V\}$ and $\forall s \in \{1,\dots,Q\}$
\STATE \textbf{M-step}
\STATE Optimize  $\boldsymbol{\beta}$, $\boldsymbol{\theta}$, $\boldsymbol{\eta}$, $\boldsymbol{\xi}$ with respect to $(\tau_{ik}^{(new)})$ and $(\nu_{vq}^{(new)})$
\STATE \textbf{ELBO}
\STATE Compute $\mathcal{L}\left(q^{new}(.)\right)$
\ENDWHILE
\end{algorithmic}
\end{algorithm}

 
\section{Model selection}

In the context of clustering, model selection often refers to the process of determining the ideal number of clusters for a given dataset. In our situation, the key decision lies in selecting appropriate values for $K$ and $Q$ to strike a balance between data attachment and model complexity. To achieve this, several criteria based on penalized log-likelihood can be employed, such as the Akaike Information Criterion (AIC) \citep{akaike1998information}, Bayesian Information Criterion (BIC) \citep{schwarz1978estimating} and more recently the Integrated Completed Likelihood (ICL) \citep{biernacki2000assessing}. We specifically consider the ICL criterion and its associated penalties as they frequently yield good trade-offs in the selection of mixture models \citep{biernacki2010exact}.


The ICL is based on the log-likelihood integrated over the parameters of the complete data. Furthermore, if we assume that parameters of component-connection probability $\boldsymbol{\alpha}$, parameters for mixture of communities $\boldsymbol{\pi} $ and parameters for views mixture $\boldsymbol{\rho}$ are independent, we have:
\begin{equation}
\begin{aligned}
    \operatorname{ICL}(\textbf{A},K,Q)  &= \log \mathbb{P}(\textbf{A},\textbf{Z},\textbf{W} \mid K,Q) \\
    &=  \log \int_{\boldsymbol{\alpha}} \p (\textbf{A} \mid \textbf{Z},\textbf{W}, \boldsymbol{\alpha}) \p(\boldsymbol{\alpha}) d\boldsymbol{\alpha} \\
    &\quad + \log \int_{\boldsymbol{\pi}} \p (\textbf{Z} \mid  \boldsymbol{\pi}) \p(\boldsymbol{\pi}) d\boldsymbol{\pi} \\
    &\quad + \log \int_{\boldsymbol{\rho}} \p (\textbf{W} \mid  \boldsymbol{\rho}) \p(\boldsymbol{\rho}) d\boldsymbol{\rho}.
    \end{aligned}
\end{equation}

In our variational framework, $\mathbf{Z}$ and $\mathbf{W}$ must be estimated. $\hat{\textbf{Z}}$ (resp. $\hat{\textbf{W}}$)  can be chosen as the variational parameters $\boldsymbol{\tau}$ (resp. $\boldsymbol{\nu}$) directly or by a Maximum a Posteriori (MAP):
\begin{equation*}
    \hat{\textbf{Z}}_i = \operatornamewithlimits{argmax}_{k \in 1:K} \tau_{ik}.
\end{equation*}

By using approximations, such as Stirling's approximation formula on $\p(\boldsymbol{\pi})$ and $\p(\boldsymbol{\rho})$ and the Laplace asymptotic approximation on $\p(\boldsymbol{\alpha})$, we can define an \textit{approximate ICL}:
\begin{equation}
\begin{aligned}
    \operatorname{ICL}(\textbf{A},K,Q) 
    &\approx \log \mathbb{P}(\textbf{A},\hat{\textbf{Z}},\hat{\textbf{W}} \mid K,Q) - \operatorname{pen}(K,Q) \\
    &\approx \mathcal{L}\left(q(.)\right) -  \operatorname{pen}(K,Q),
\end{aligned}
\end{equation}
where 
\begin{equation*}
    \operatorname{pen}(K,Q) =  
    \dfrac{1}{2}\dfrac{K(K+1)}{2} Q \log(V\dfrac{N(N-1)}{2}) + 
    \dfrac{1}{2}(K-1)\log(N) +  \dfrac{1}{2}(Q-1)\log(V).
\end{equation*}

The penalization in this \textit{approximate ICL} is composed of a part depending on the number of parameters of component-connection probability tensor $\boldsymbol{\alpha}$ and the number of vertices taken into account,  and a part that takes into account the number of degree of freedom in mixture parameters and the number of variables related to them.
Recall that our model is based on undirected (symmetric) adjacency matrices, so we only consider the upper triangular matrices (without the diagonal).

However, in the Bayesian framework with conjugate priors, it is possible to define an exact ICL~\citep{come2015model}.
Moreover, it can be obtained from the previously defined ILvb~\eqref{equation:ILvb} when the entropy of the latent variables is zero and the Expectation-Maximization algorithm is a Classification EM~\cite[CEM,][]{celeux1992classification}. In other words, variational parameters are equal to $1$ if it is the MAP and $0$ otherwise. 
Thus, this \textit{exact ICL} can be defined as:
\begin{equation}
\begin{aligned}
\operatorname{ICL_{exact}}(\textbf{A},K,Q)  &=  \log \left\{\frac{\Gamma\left(\sum_{k=1}^K \beta_k^0\right) \prod_{k=1}^K \Gamma\left(\beta_k\right)}{\Gamma\left(\sum_{k=1}^K \beta_k\right) \prod_{k=1}^K \Gamma\left(\beta_k^0\right)}\right\}\\
&\quad +\log \left\{\frac{\Gamma\left(\sum_{s=1}^Q \theta_s^0\right) \prod_{s=1}^Q \Gamma\left(\theta_s\right)}{\Gamma\left(\sum_{s=1}^Q \theta_s\right) \prod_{s=1}^Q \Gamma\left(\theta_s^0\right)}\right\}\\
&\quad +\sum_{k \leq l}^K  \sum_{s=1}^Q  \log \left\{\frac{\Gamma\left(\eta_{k l s}^0+\xi_{k l s }^0\right) \Gamma\left(\eta_{k  l s}\right) \Gamma\left(\xi_{k l s}\right)} {\Gamma\left(\eta_{k l s} +\xi_{k  l s} \right) \Gamma\left(\eta_{k l s}^0\right) \Gamma\left(\xi_{k l s}^0\right)}\right\}.
\end{aligned}
\end{equation}

Instead of using a CEM, it is possible to use the v
Variational parameters directly, and to derive a \textit{variational ICL} from the previous criterion.
Figure~\ref{ICL fig} summarizes the links between the various selection criteria and clearly shows that in a particular context \textit{exact ICL}, \textit{Variational ICL}, and \textit{ILvb} criteria are identical.

\begin{figure}[!h]
    \centering
    \includegraphics[scale=0.33]{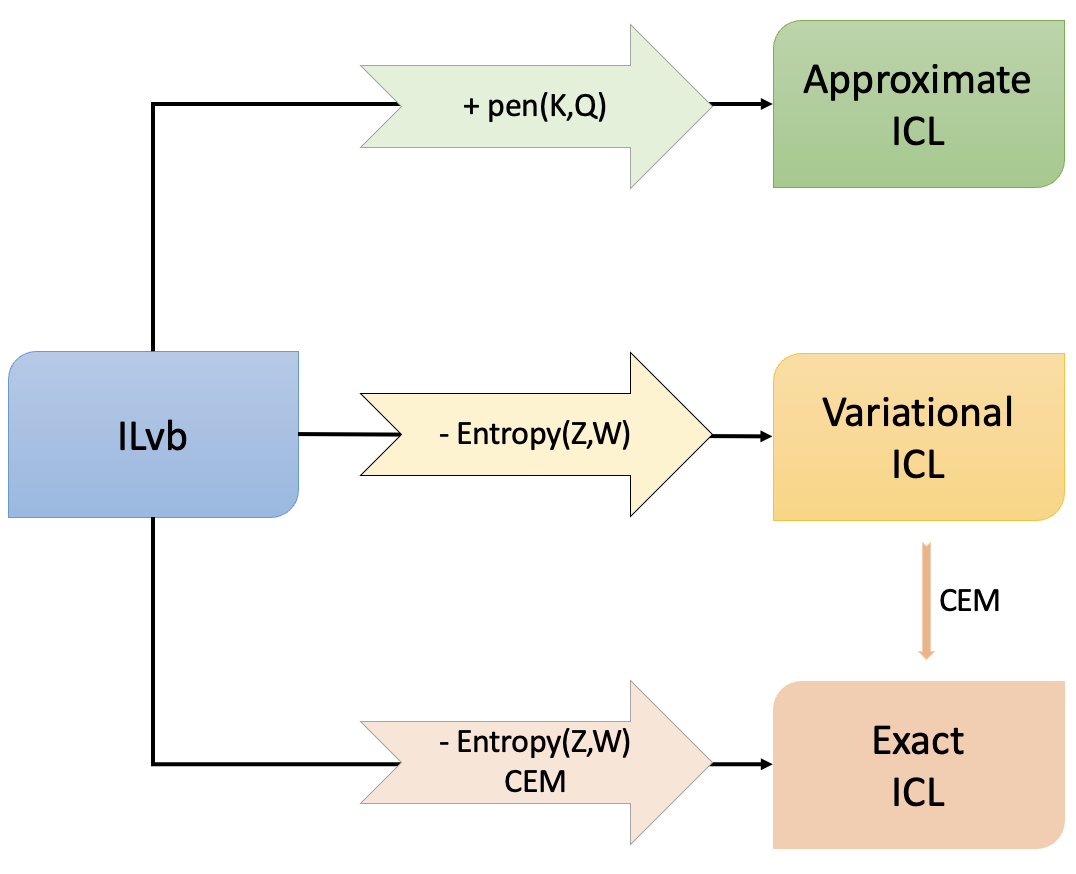}
    \caption{Diagram of links between different model selection criteria.  }
    \label{ICL fig}
\end{figure} 


\section{Experiments}


\subsection{Simulation study}
 

To ensure that \textit{mimi-SBM} behaves consistently, we have
developed a simulation scheme, as depicted in Figure~\ref{fig:schema_XP}. The simulated data have been designed to reflect the
complexity found in real-world data clustering challenges. In
particular, this scheme allows for diverse clustering patterns across
different view components. Also, it includes the possibility of
controlling clustering errors, with observations being inaccurately
assigned to an incorrect  group, implying inconsistencies in the
adjacency matrices.




\paragraph{Simulated data.} Artificial adjacency matrices are generated from observations and views. We aim to establish a link between the simulated adjacency matrices and the final clustering that most accurately represents a problem of meta-clustering.

Various parameter values are tested for  $N$ (observations), $V$ (views), $K$ (clusters), and $Q$ (components) in different scenarios. Besides, $\boldsymbol{\pi},\boldsymbol{\rho}$ correspond to an equiprobability of belonging to a cluster or component, thus $\displaystyle \{\pi_k\}_{k=1}^{K}=1/K$ and $\displaystyle \{\rho_s\}_{s=1}^{Q}=1/Q$ (Figure~\ref{fig:schema_XP}, \textit{Parameters}).


First of all, it is assumed that the number of real clusters ($K$) will always be equal or higher than the number of clusters coming from each component. Also, each component has a precise number of clusters ($K^q$), and each view belonging to this component will have this number of clusters $\displaystyle K^q \sim \mathcal{U}(\{2,\dots,K\})$, where $\mathcal{U}$ is the discrete uniform distribution (Figure~\ref{fig:schema_XP}, \textit{Clusters per component}). 


Now, for each component, we randomly associate a link between the final consensus clusters $\bZ$ and clusters coming from the component $\bZ^q$, ensuring that no component cluster remains empty (Figure~\ref{fig:schema_XP}, \textit{Links between clusters}).

Eventually, 
for each pair of nodes $(i,j)$ and each layer $v$, an edge is generated with probability $\alpha_{Z_i Z_j W_v}$, leading to set the corresponding entry in the multilayer adjacency tensor $\mathbf{A}$ to $1$ or $0$  (Figure~\ref{fig:schema_XP}, \textit{Generation of edges}).

\begin{figure}[!htt]
     \centering
     \begin{subfigure}[b]{\textwidth}
         \centering
         \includegraphics[width=\textwidth]{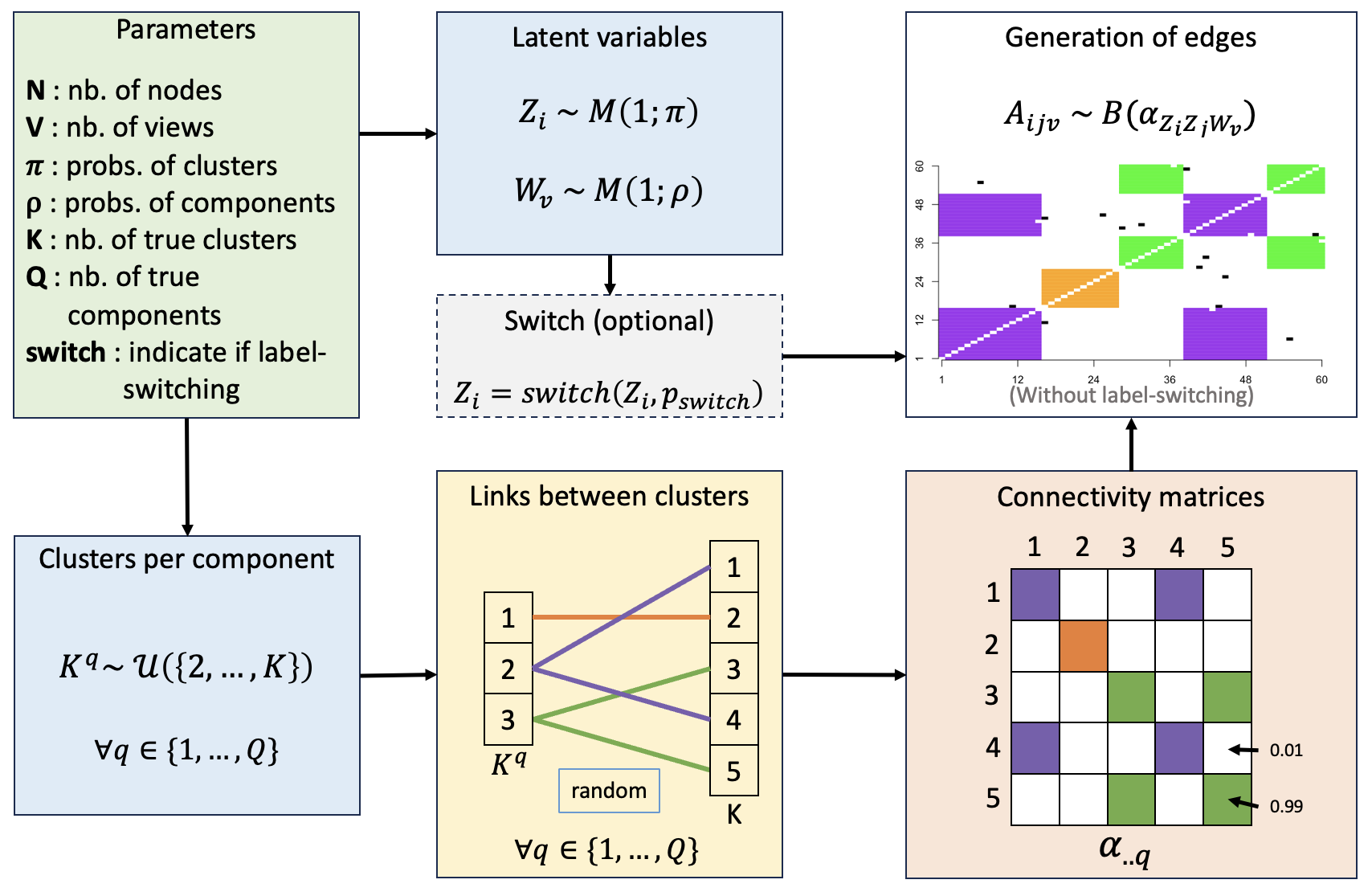}
     \end{subfigure}
     \caption{
     Diagram of the simulation process. Example of adjacency matrices resulting from 
     mixing and traversing clusters across views, with potentially label-switching, for $K = 5$. For each view component, a number of clusters $K^q$ is randomly drawn (discrete uniform distribution). 
     Each cluster in the $q^{\textrm{th}}$ component is then linked to certain clusters in the final partition. 
    For this component, $K^q=3$, and final clusters $1$ (respectively $3$) and $4$ (resp. $5$) are merged into cluster $2$ (resp. $3$) of the component, and the first cluster of the component corresponds perfectly to the final consensus cluster $2$.
     Afterwards, these links are represented by a very strong connectivity within the $\balpha_{..q}$ matrix ($p = 0.99$) and a very weak one ($p=0.01$) for the others.
     \label{fig:schema_XP}}
\end{figure}

\medskip
The simulated data are used to assess three aspects:
\begin{enumerate}
    \item \textbf{\textit{Model selection}.} In Section~\ref{sec:model_selection_XP}, various criteria are examined to 
    to recover the true parameters $K$ and $Q$.
    \item \textbf{\textit{Clustering ability}.} In Sections~\ref{sec:SOTA_XP} and~\ref{sec:SOTA_noisy_XP}, \textit{mimi-SBM} is evaluated against other state-of-the-art techniques regarding the clustering of observations and the clustering of views using ARI scores (see below). 
    \item \textbf{\textit{Robustness}.} In Section~\ref{sec:robustness_XP}, we investigate further the model ability to handle noisy configurations inherent in real-world clustering problems. 
\end{enumerate}

The code for the simulations is available on the CRAN, and on GitHub in the repository \textit{mimiSBM}.~\footnote{\url{https://github.com/Kdesantiago/mimiSBM}.}

\paragraph{Adjusted Rand Index.}

The Adjusted Rand Index \cite[ARI,][]{hubert1985comparing} quantifies the similarity between two partitions. In the simulation to follow, it quantifies the similarity between the prediction by our clustering models and the true partition. It corresponds to the proportion of pairs $(i, j)$ of observations jointly grouped or separated. The more similar the partitions, the closer the ARI is to $1$.


\subsubsection{Comparing model selection criteria}
\label{sec:model_selection_XP}
In this section, our goal is to undertake a comparative analysis of model selection criteria to determine the optimal choice of criterion. 

The use of simulations gives us a complete control over the hyperparameters that generated the data. 
To do this, we generated $50$ different datasets with hyperparameters $K=10$ and $Q=5$. 
The model selected for each criterion is the one that maximizes its value.

\begin{figure}[!htt]
     \centering
     \begin{subfigure}[b]{0.49\textwidth}
         \centering
         \includegraphics[width=\textwidth]{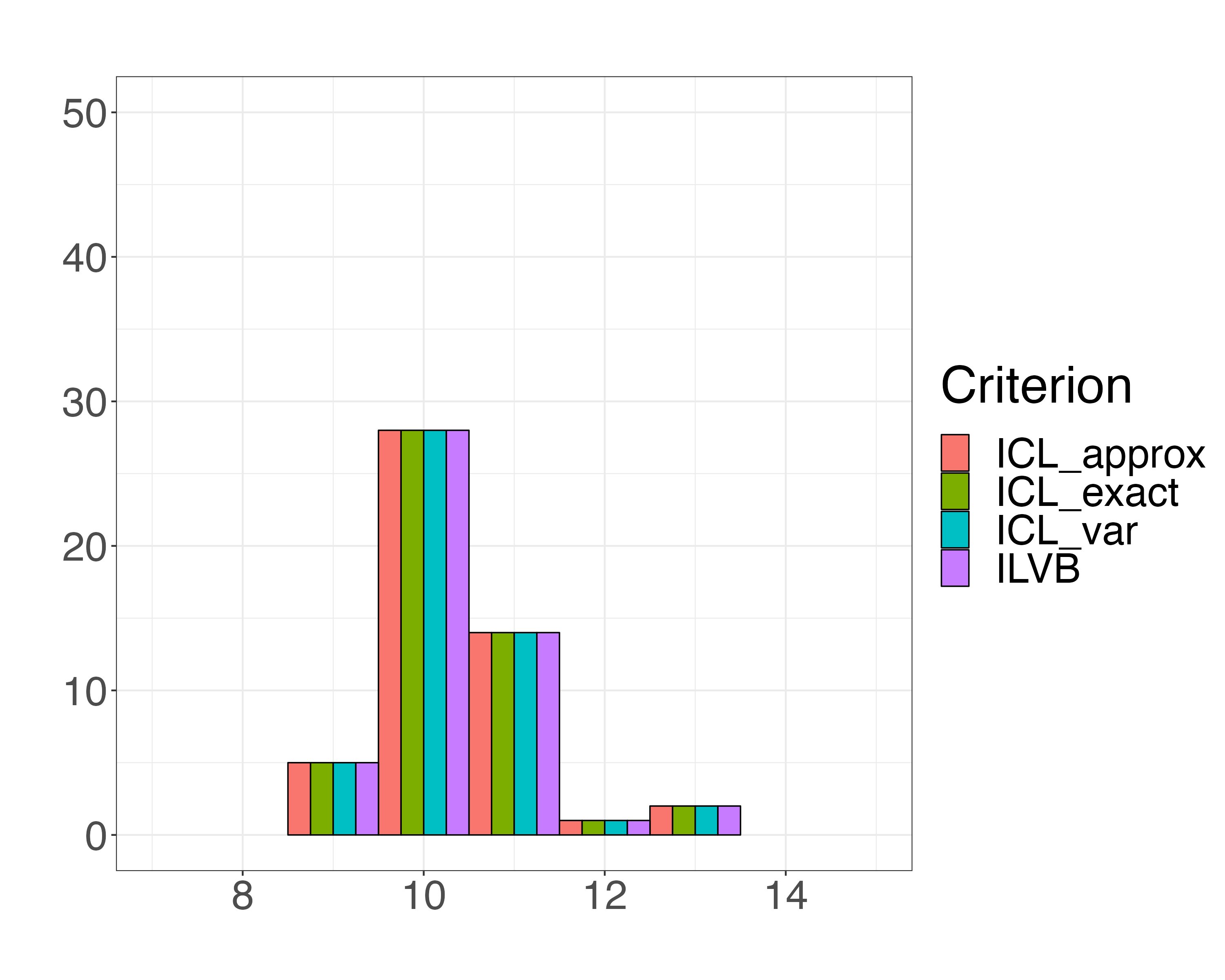}
         \caption{Model selection on $K$, with $Q$ fixed}
        \label{fig: Choix model a}
     \end{subfigure}
     \hfill
     \begin{subfigure}[b]{0.49\textwidth}
         \centering
         \includegraphics[width=\textwidth]{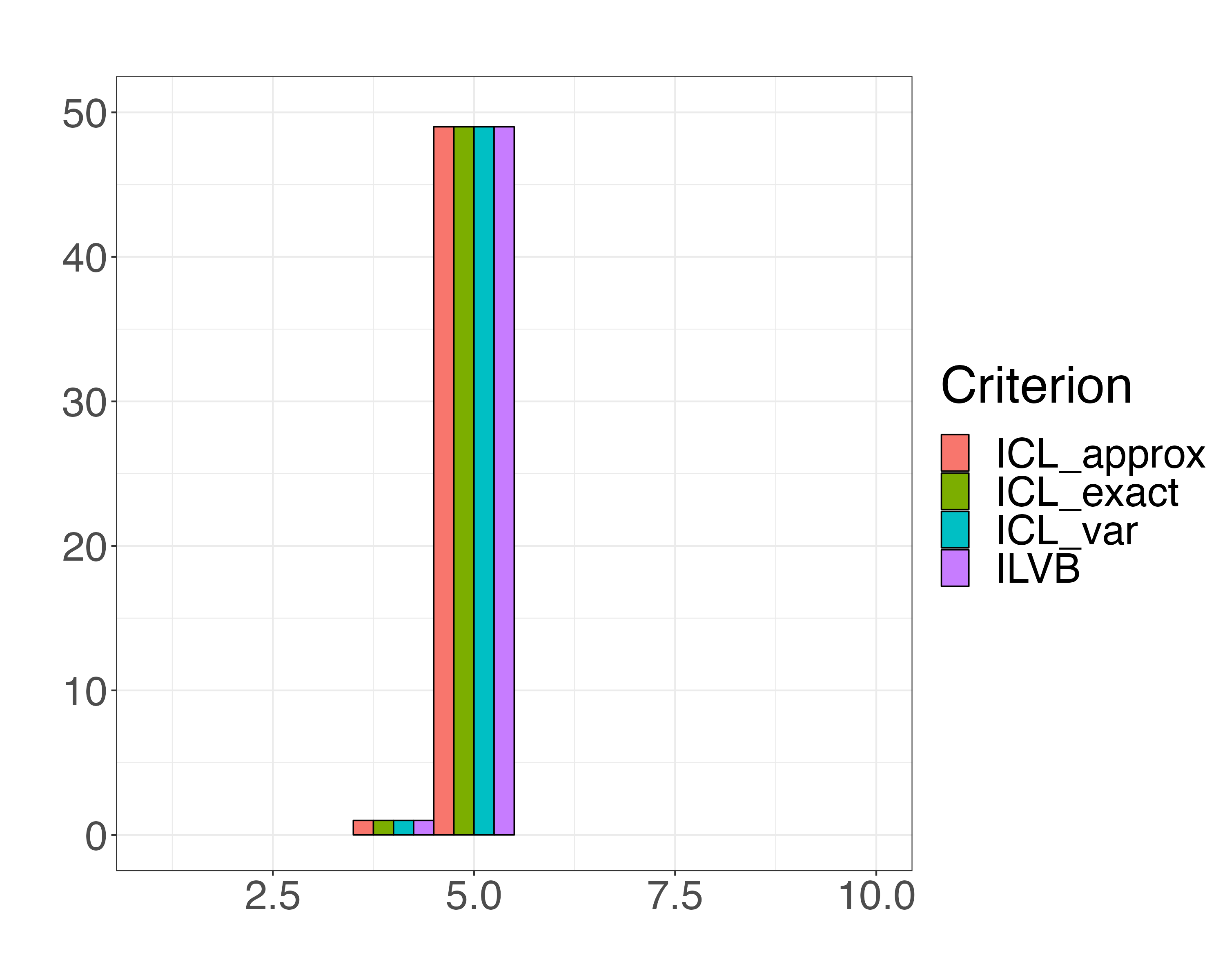}
         \caption{Model selection on $Q$, with $K$ fixed}
        \label{fig: Choix model b}
     \end{subfigure}
     \hfill
     \begin{subfigure}[b]{0.49\textwidth}
         \centering
         \includegraphics[width=\textwidth]{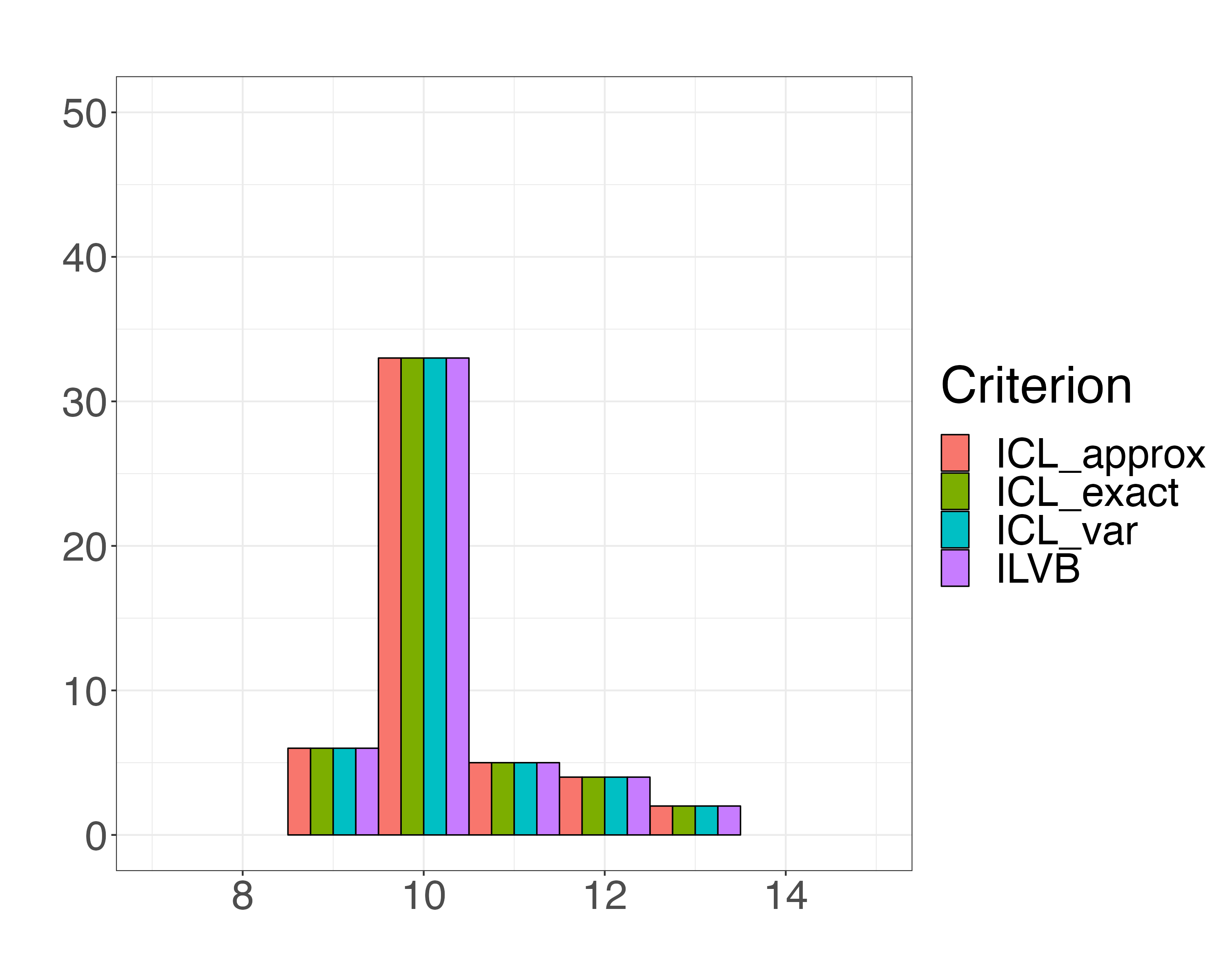}
         \caption{Model selection on $K$, with $Q$ free}
            \label{fig: Choix model c}
     \end{subfigure}
     \hfill
      \begin{subfigure}[b]{0.49\textwidth}
         \centering
         \includegraphics[width=\textwidth]{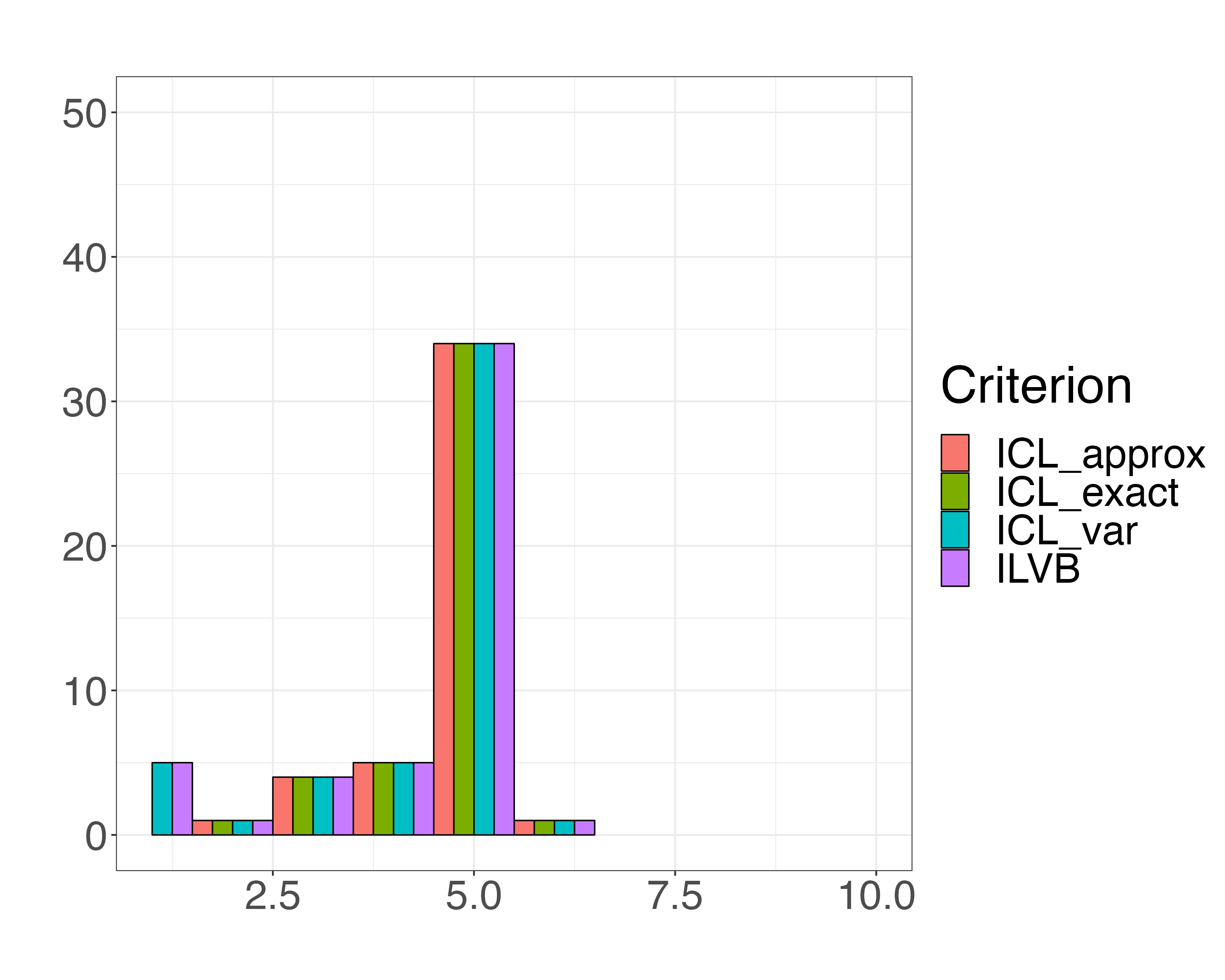}
         \caption{Model selection on $Q$, with $K$ free}
        \label{fig: Choix model d}
     \end{subfigure}
        \caption{Bar plots of model selection criteria on $50$ simulations with $10$ true clusters for observations and $5$ true components for views. 
        Figure (a) (resp. (b)) indicates the number of times the $K$ (resp. $Q$) value selected while the other parameters is set to the true value. Figures (c) and (d) show the same information when hyperparameters are optimized at the same time.}
        \label{fig: Choix model}
\end{figure}

Simulation results in Figure~\ref{fig: Choix model}, clearly show that, in all scenarios, each criterion delivers consistent and comparable performance. Without exception, the criteria consistently produce the same selection of clusters and number of view components.

In Figure~\ref{fig: Choix model a}, only the hyperparameter $K$ varies. This parameter was mostly well estimated because, during model selection, the true number of clusters was typically identified in the majority of cases. However, it was crucial to note that in the context of individual clustering, the criteria tended to overestimate the number of clusters.

In Figure~\ref{fig: Choix model b},  the number of components parameter $Q$ is variable, while $K$ remains constant.
In the majority of scenarios, it was observed that the number of
components was accurately estimated. Furthermore, when a fixed
parameter for clustering was considered, the task inherently became
more tractable due to the use of abundant information for the estimation of view components.

In Figures~\ref{fig: Choix model c} and~\ref{fig: Choix model d}, the selection of hyperparameters is aligned with fixed-parameter results. The criteria consistently demonstrate an aptitude for identifying the optimal cluster and view component quantities. Nonetheless, akin to previous instances, the model occasionally exhibits errors in hyperparameter estimation, often underestimating the number of components while overestimating the number of classes.


\subsubsection{Simulations without label-switching}
\label{sec:SOTA_XP}

\paragraph{Comparison of clustering.} In Figure~\ref{fig: competitors clustering}, it has been observed that \textit{mini-SBM} achieved the best clustering results for each considered experimental configuration. Indeed,  \textit{mini-SBM} recorded the highest ARI score for all data sizes, number of clusters and sources. 
On the other hand, the \textit{M3C} model improved as the number of views, clusters and sources increased. In contrast, the \textit{TWIST} model showed poorer performances as the clustering problem became more complex, suggesting that this model may be less suitable for difficult clustering problems.

\begin{figure}[!ht]
     \centering
     \begin{subfigure}[b]{0.49\textwidth}
         \centering
         \includegraphics[width=\textwidth]{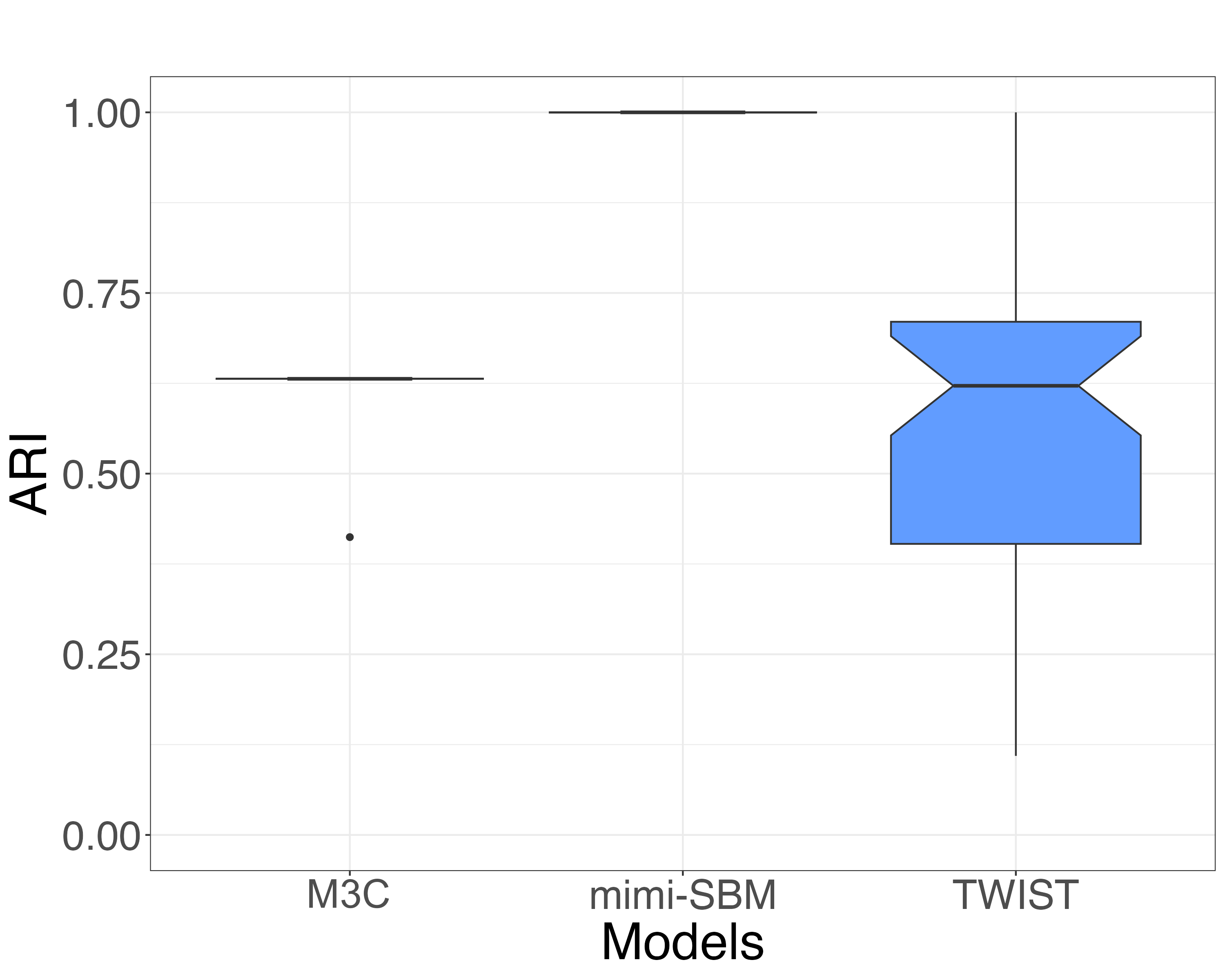}
         \caption{$N=50,V=15,K=5,Q=3$}

     \end{subfigure}
     \hfill
     \begin{subfigure}[b]{0.49\textwidth}
         \centering
         \includegraphics[width=\textwidth]{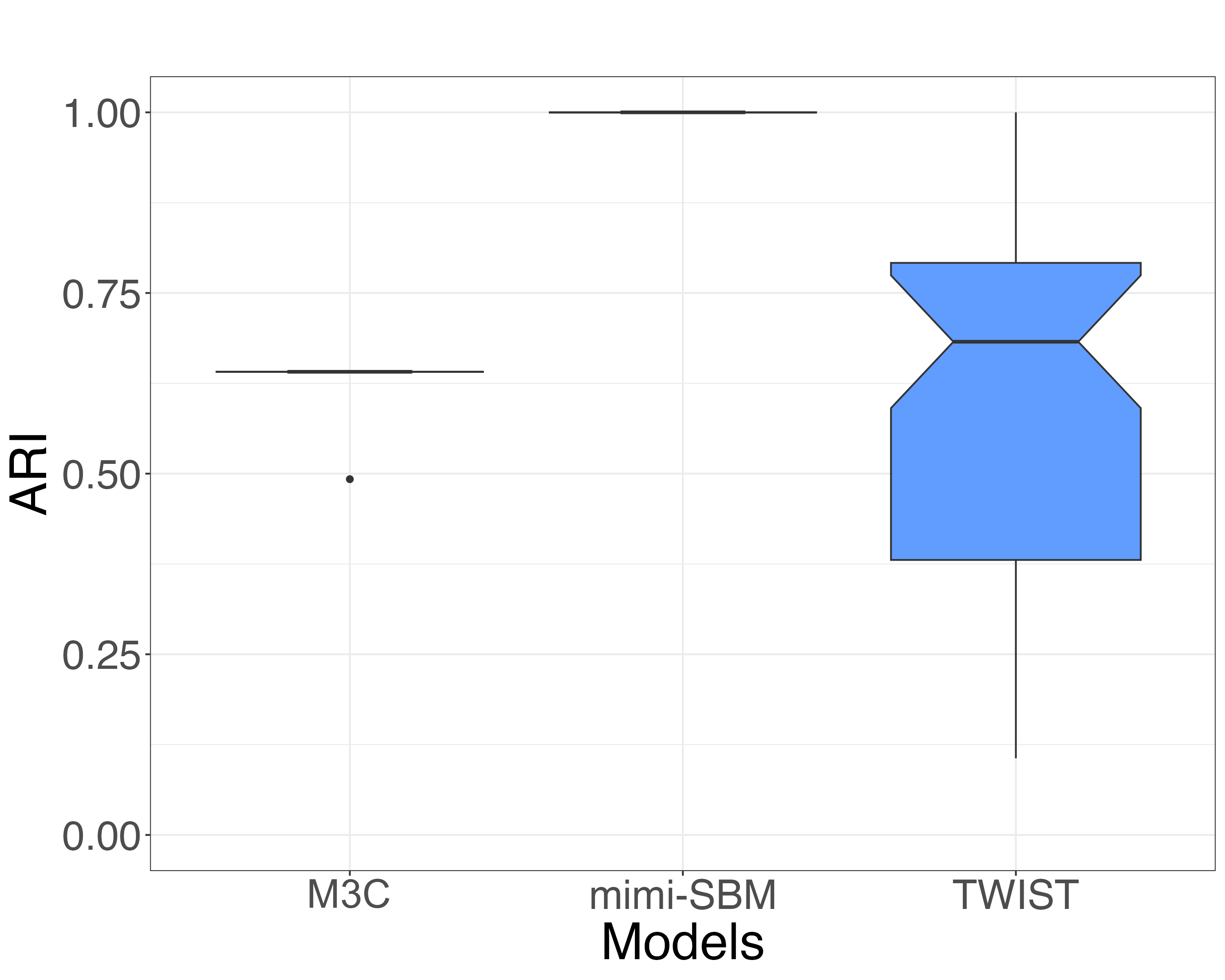}
         \caption{$N=200,V=15,K=5,Q=3$}

     \end{subfigure}
     \hfill
     \begin{subfigure}[b]{0.49\textwidth}
         \centering
         \includegraphics[width=\textwidth]{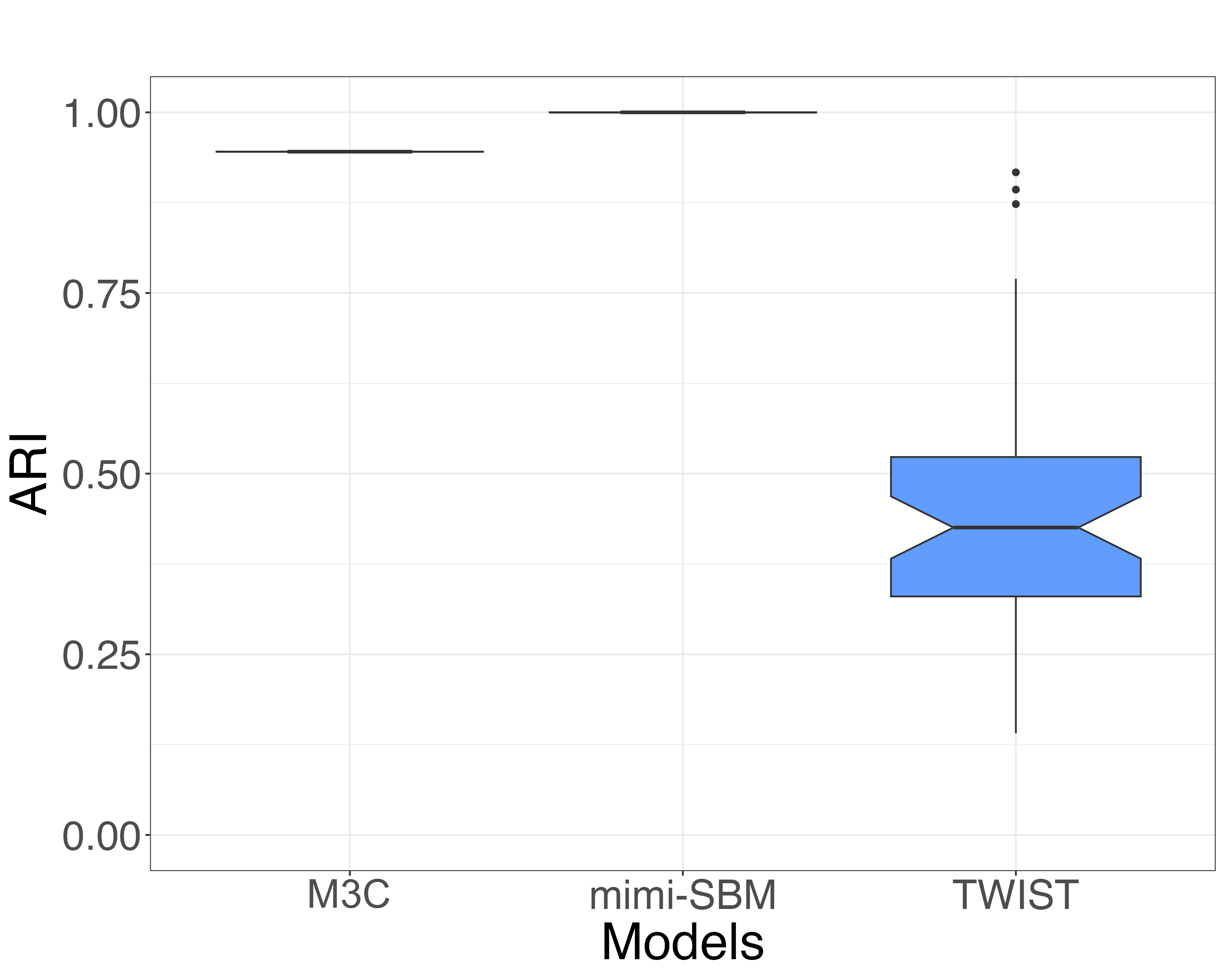}
         \caption{$N=200,V=50,K=10,Q=10$}

     \end{subfigure}
     
        \caption{Boxplot of ARI measure between true partition and output partition of \textit{M3C}, \textit{mimi-SBM} and \textit{TWIST} models.}
        \label{fig: competitors clustering}
\end{figure}

\paragraph{Comparison of view components.} In  Figure~\ref{fig: competitors views},  as the number of observations increases, the performances of the models generally tends to improve. However, the \textit{graphclust} model appears to identify the true sources less frequently than the \textit{mimi-SBM} and \textit{TWIST} models. While \textit{TWIST} often identifies the true members of the sources perfectly, it does make some errors, visible as outliers on the boxplot.

\begin{figure}[!ht]
     \centering
     \begin{subfigure}[b]{0.49\textwidth}
         \centering
         \includegraphics[width=\textwidth]{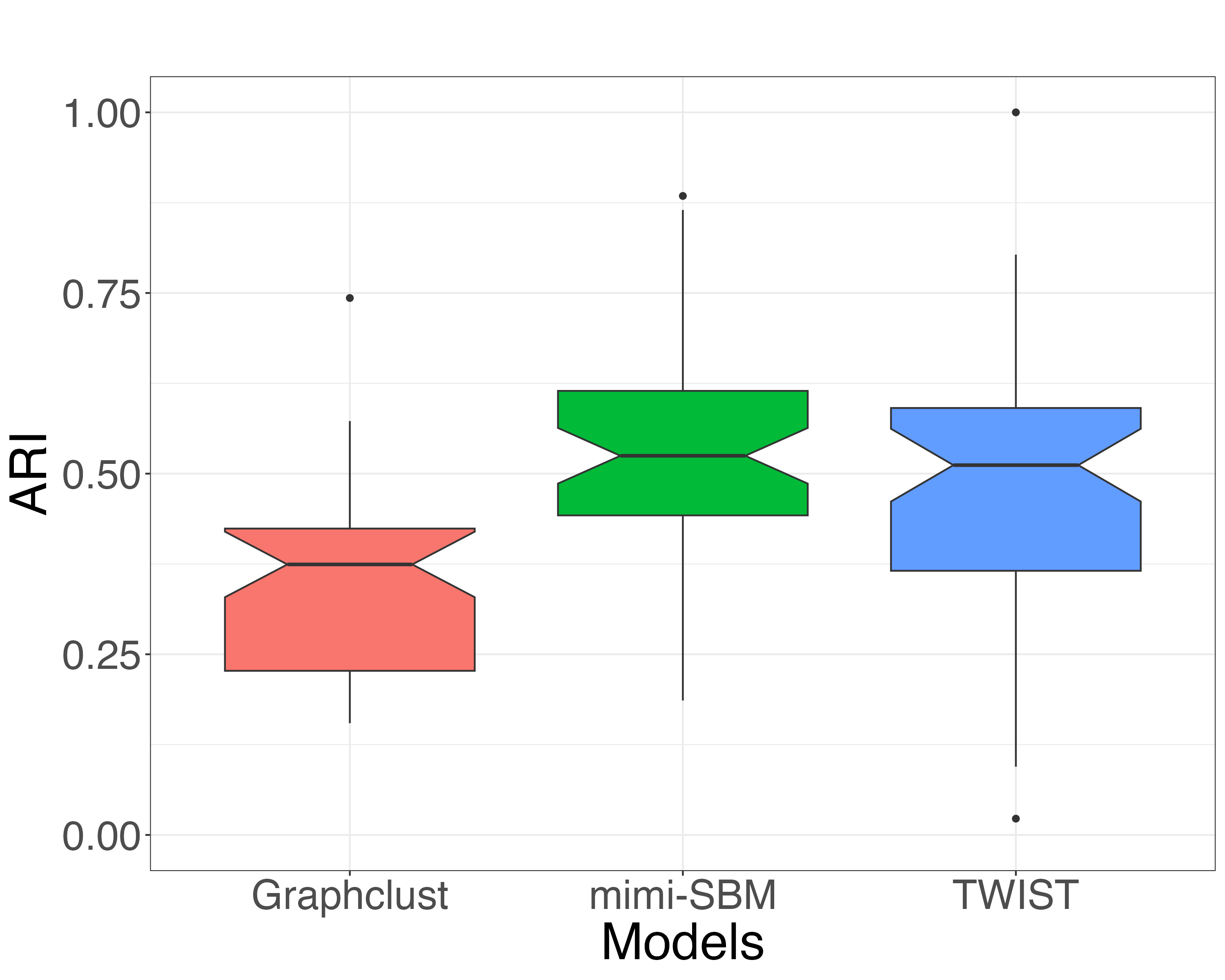}
         \caption{$N=50,V=15,K=5,Q=3$}

     \end{subfigure}
     \hfill
     \begin{subfigure}[b]{0.49\textwidth}
         \centering
         \includegraphics[width=\textwidth]{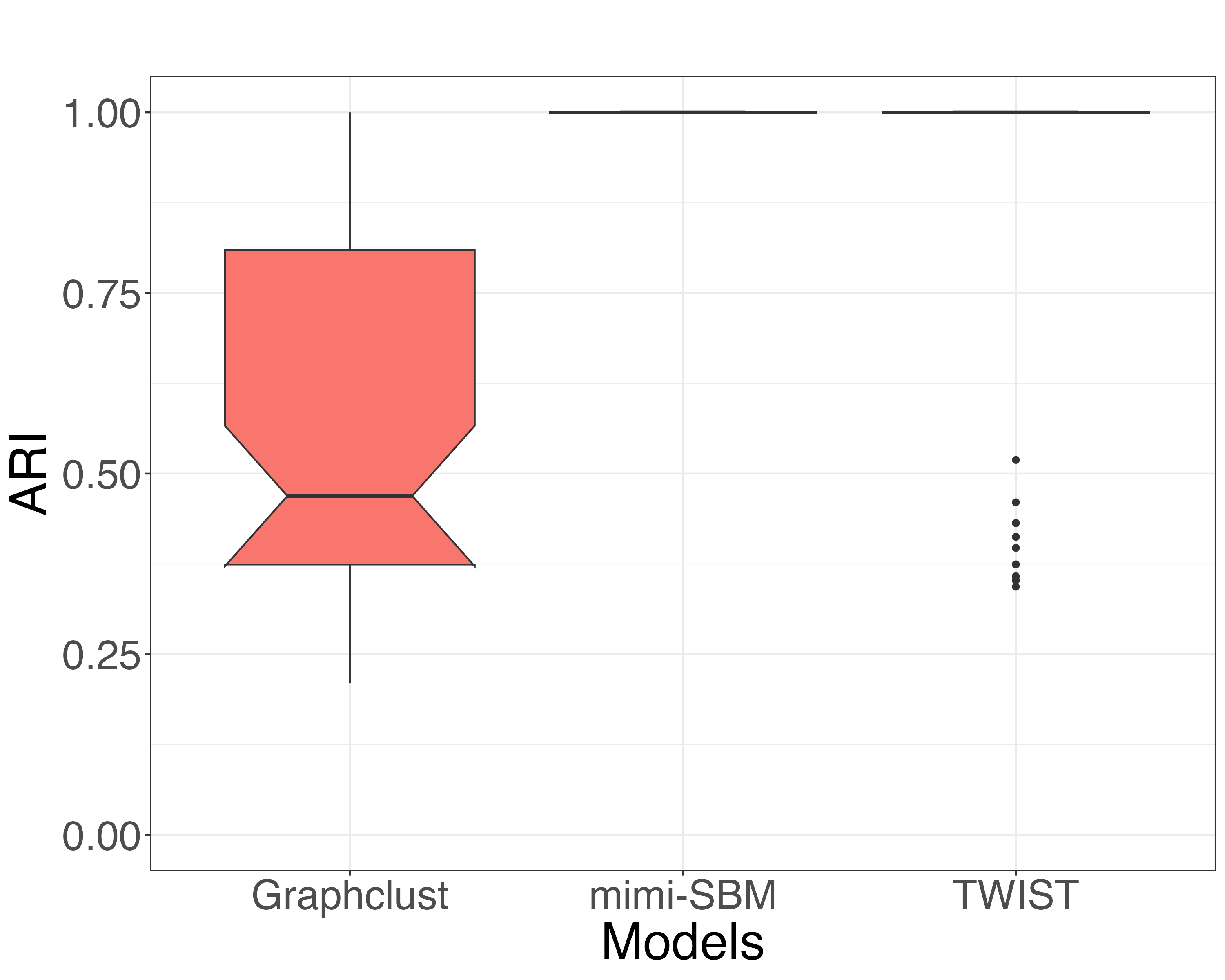}
         \caption{$N=200,V=15,K=5,Q=3$}

     \end{subfigure}
     \hfill
     \begin{subfigure}[b]{0.49\textwidth}
         \centering
         \includegraphics[width=\textwidth]{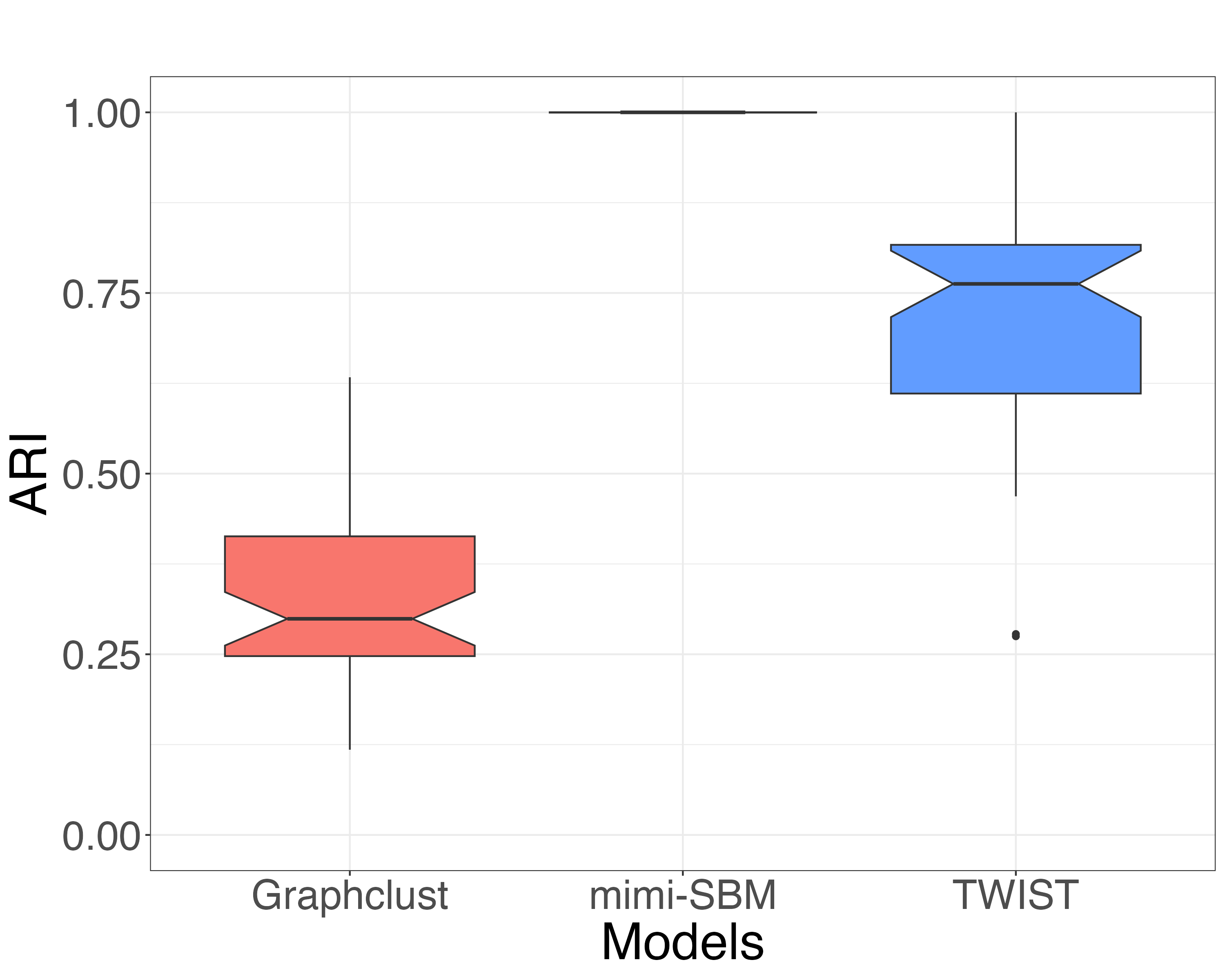}
         \caption{$N=200,V=50,K=10,Q=10$}
     \end{subfigure}
        \caption{Boxplot of ARI measure between true view clustering and output clustering of \textit{graphclust}, \textit{mimi-SBM} and \textit{TWIST} models.}
        \label{fig: competitors views}
\end{figure}


\subsubsection{Simulations with label-switching}
\label{sec:SOTA_noisy_XP}

In this section, we revisit the analyses from the previous section, but with a focus on a more challenging issue: label-switching.
The idea of perturbing the cluster labels within the generation process simulates the fact that an individual has been associated with another cluster during the process of creating adjacency matrices.

In our context, we simulate the fact that an individual belongs to the real cluster, and then we simulate the representation of this clustering by the link between the final clustering and the one specific to each view component, to obtain the different affinity matrices.

The perturbation occurs during the generation of individual component-based clusters. For each view, a perturbation is introduced for each individual with a probability of $p_\text{switch} = 0.1$. In such perturbation, the respective individual is then associated with one of the other available clusters.
As a result, the probability of creating a link between individuals is influenced.

\paragraph{Comparison of clustering.} The analysis summarized in Figure~\ref{fig: competitors clustering LS} reveals that across all examined experimental setups, the \textit{mimi-SBM} consistently attained the most favorable clustering outcomes.  
Furthermore, it is noteworthy that the score variability associated with \textit{mimi-SBM} is notably lower than that observed for other models.
The effectiveness of the \textit{M3C} and \textit{TWIST} model showed improvement as the number of views, clusters, and sources increased, yet it maintained a relatively high level of variance.
The number of individuals to be clustered plays a crucial role in minimizing errors. This effect stems from the fact that a larger number of individuals subject to clustering contributes to a more robust estimation of the parameters.

\begin{figure}[!ht]
     \centering
     \begin{subfigure}[b]{0.49\textwidth}
         \centering
         \includegraphics[width=\textwidth]{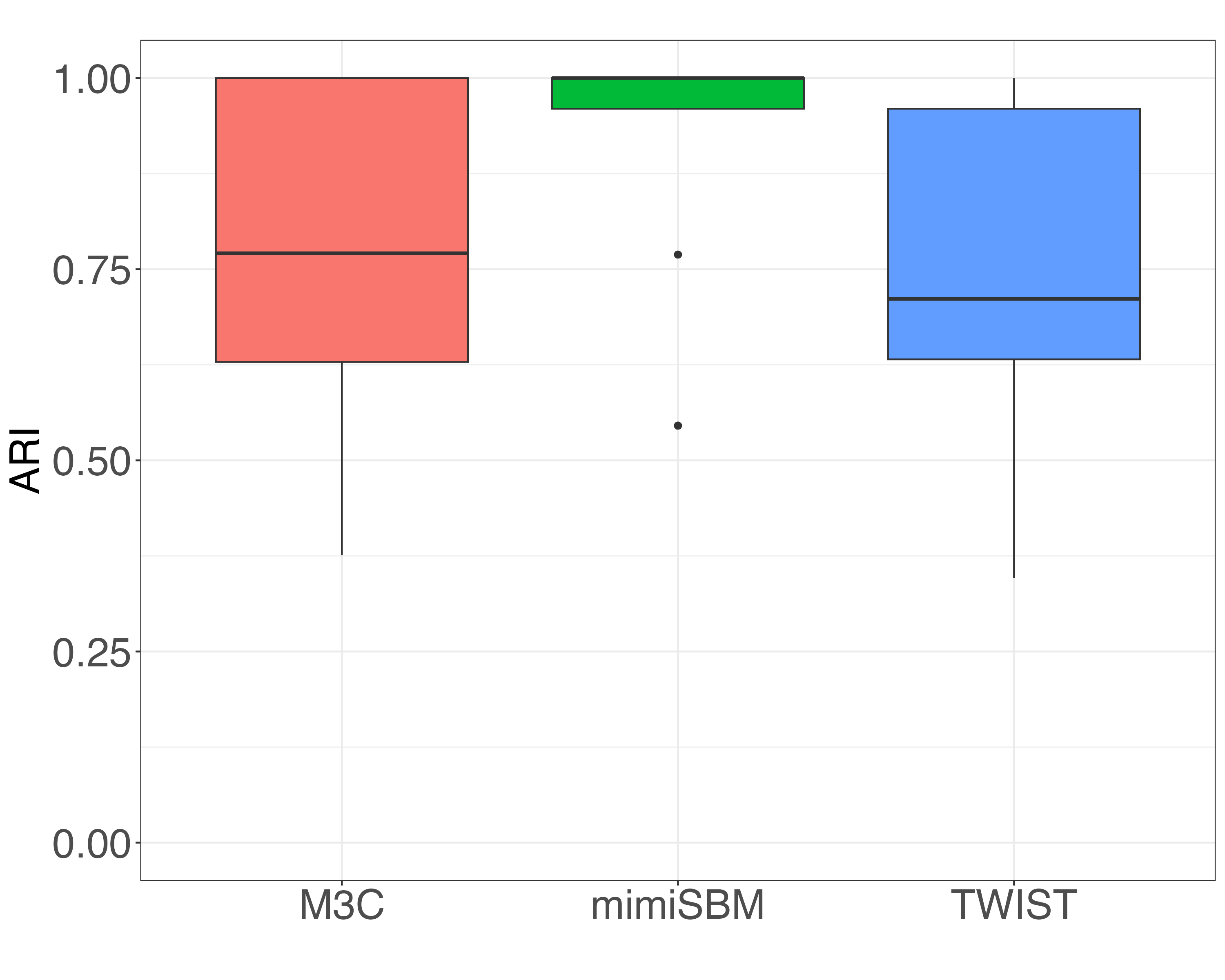}
         \caption{$N=50,V=15,K=5,Q=3$}

     \end{subfigure}
     \hfill
     \begin{subfigure}[b]{0.49\textwidth}
         \centering
         \includegraphics[width=\textwidth]{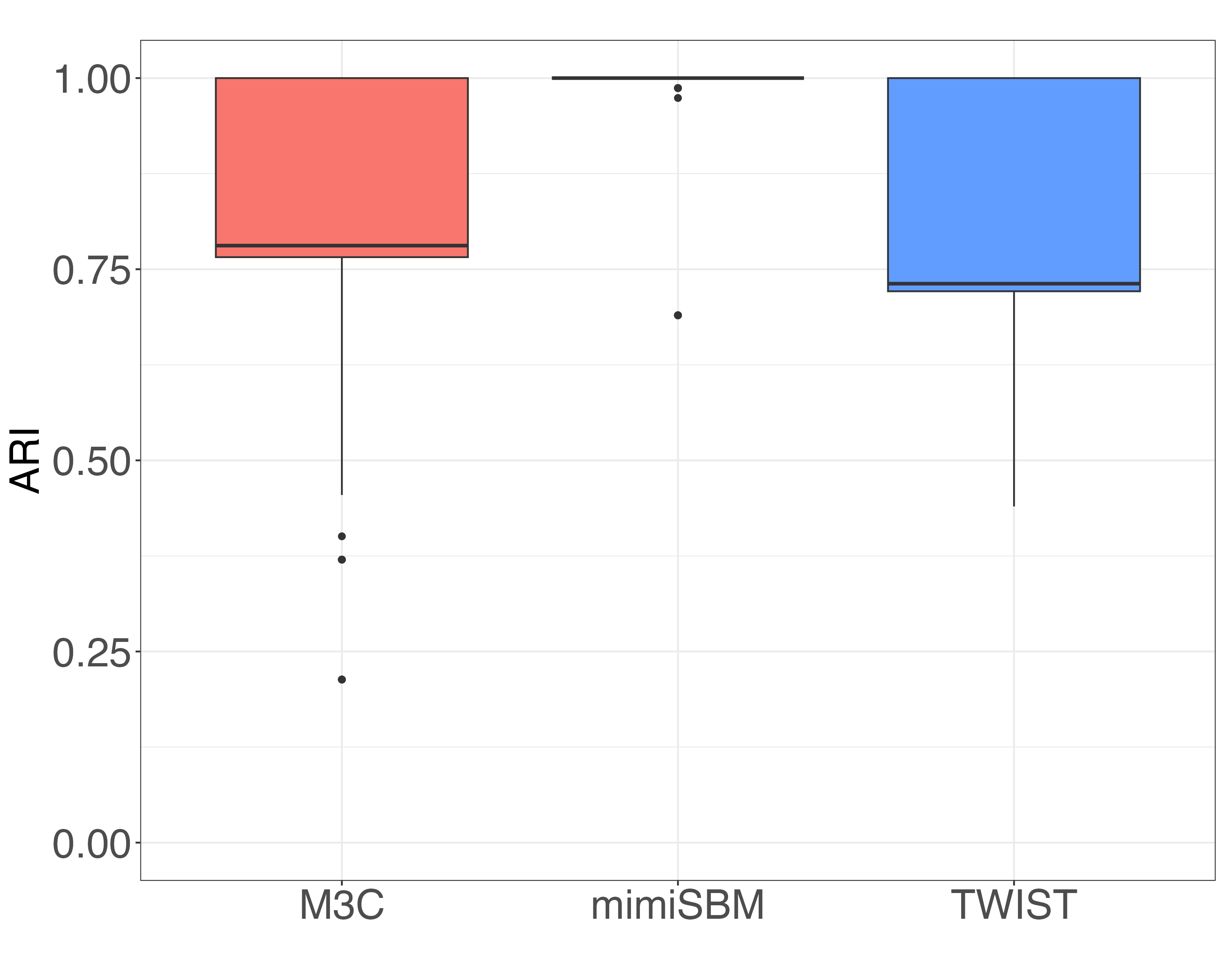}
         \caption{$N=200,V=15,K=5,Q=3$}

     \end{subfigure}
     \hfill
     \begin{subfigure}[b]{0.49\textwidth}
         \centering
         \includegraphics[width=\textwidth]{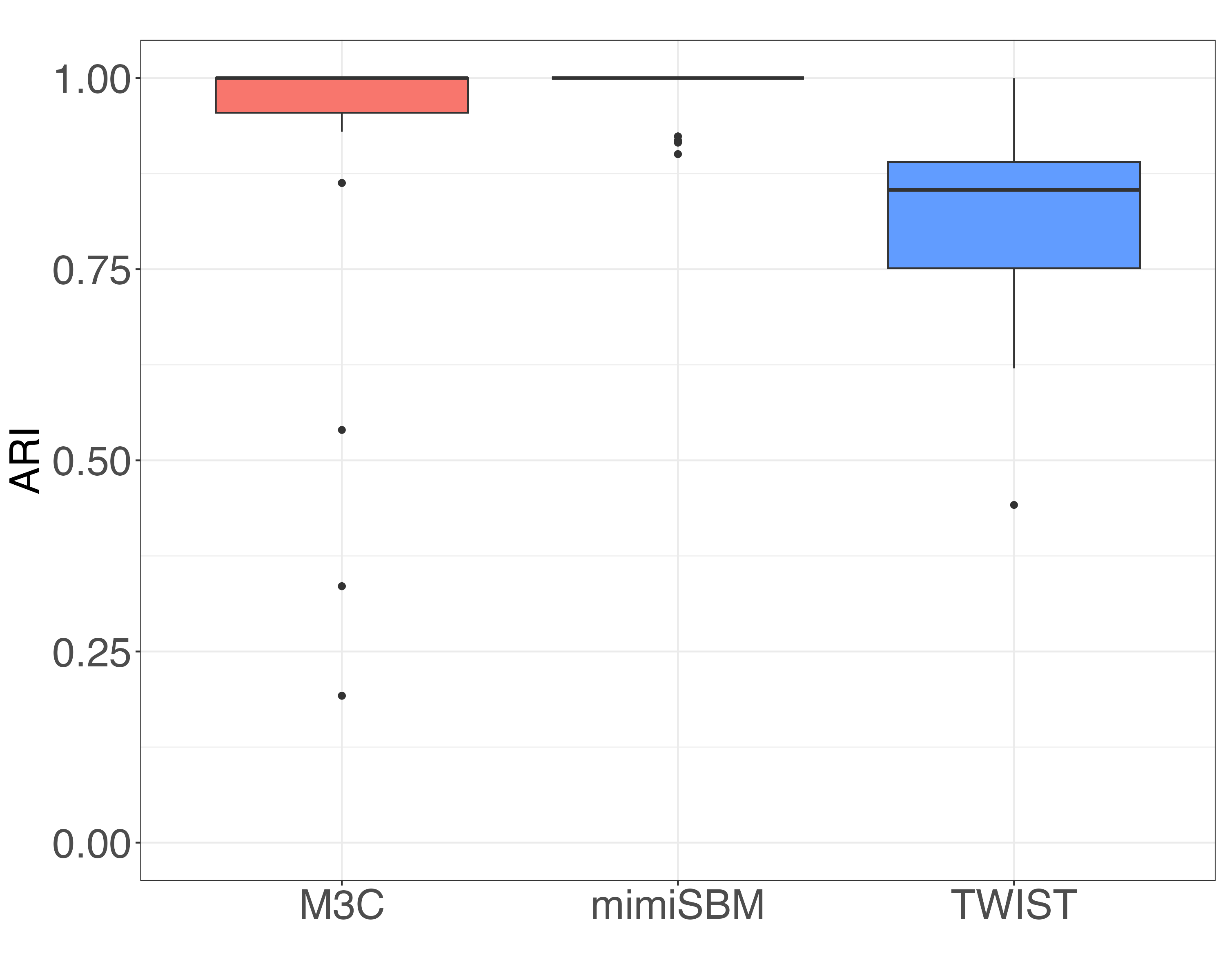} 
         \caption{$N=200,V=50,K=10,Q=10$}

     \end{subfigure}
     
        \caption{Boxplot of ARI measure between true partition and output partition of \textit{M3C}, \textit{mimi-SBM} and \textit{TWIST} models.}
        \label{fig: competitors clustering LS}
\end{figure}

\paragraph{Comparison of view components.} In Figure~\ref{fig: competitors views LS}, as the quantity of observations increases, the models typically exhibit enhanced performance. 
Nevertheless, the \textit{graphclust} model seems to less frequently pinpoint the actual sources compared to the \textit{mimi-SBM} and \textit{TWIST} models.
When faced with a small number of perspectives, \textit{TWIST} model displays significant variability. While it consistently delivers good results, it remains vulnerable to unfavorable initializations, which can lead to notably suboptimal clustering outcomes. Moreover, when label-switching is introduced, the model's performance is observed to be slightly less effective compared to the precedent scenario.

Similar to the scenario without label-switching, the model experiences considerable variability in its estimation when dealing with a limited number of individuals and perspectives. However, as the number of individuals and views increases, the variance of ARI decreases noticeably, accompanied by an improvement in performance. \textit{Mimi-SBM} model consistently demonstrates efficacy across all cases, even including perturbations in the adjacency matrices used for clustering.

\begin{figure}[!ht]
     \centering
     \begin{subfigure}[b]{0.49\textwidth}
         \centering
         \includegraphics[width=\textwidth]{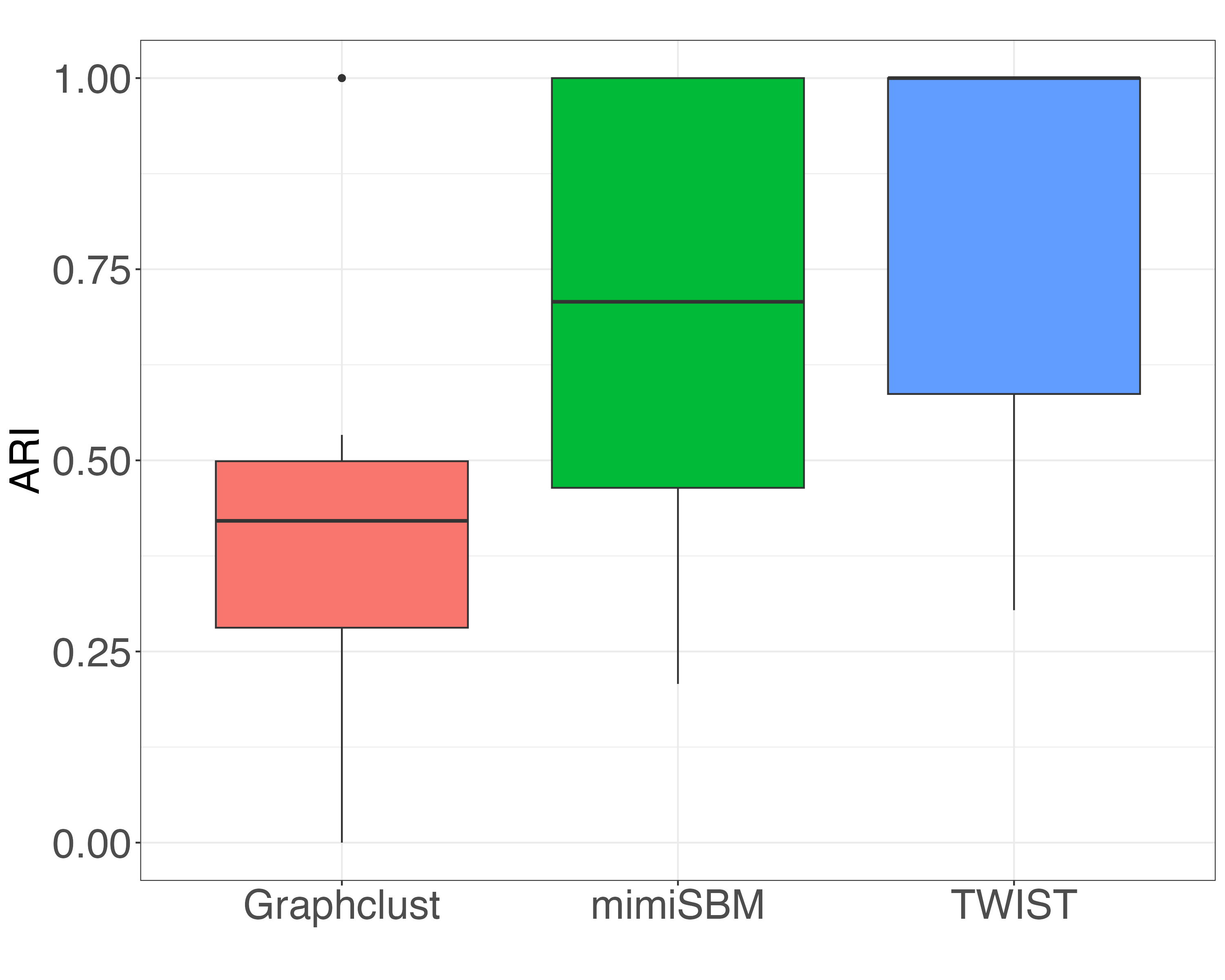}
         \caption{$N=50,V=15,K=5,Q=3$}

     \end{subfigure}
     \hfill
     \begin{subfigure}[b]{0.49\textwidth}
         \centering
         \includegraphics[width=\textwidth]{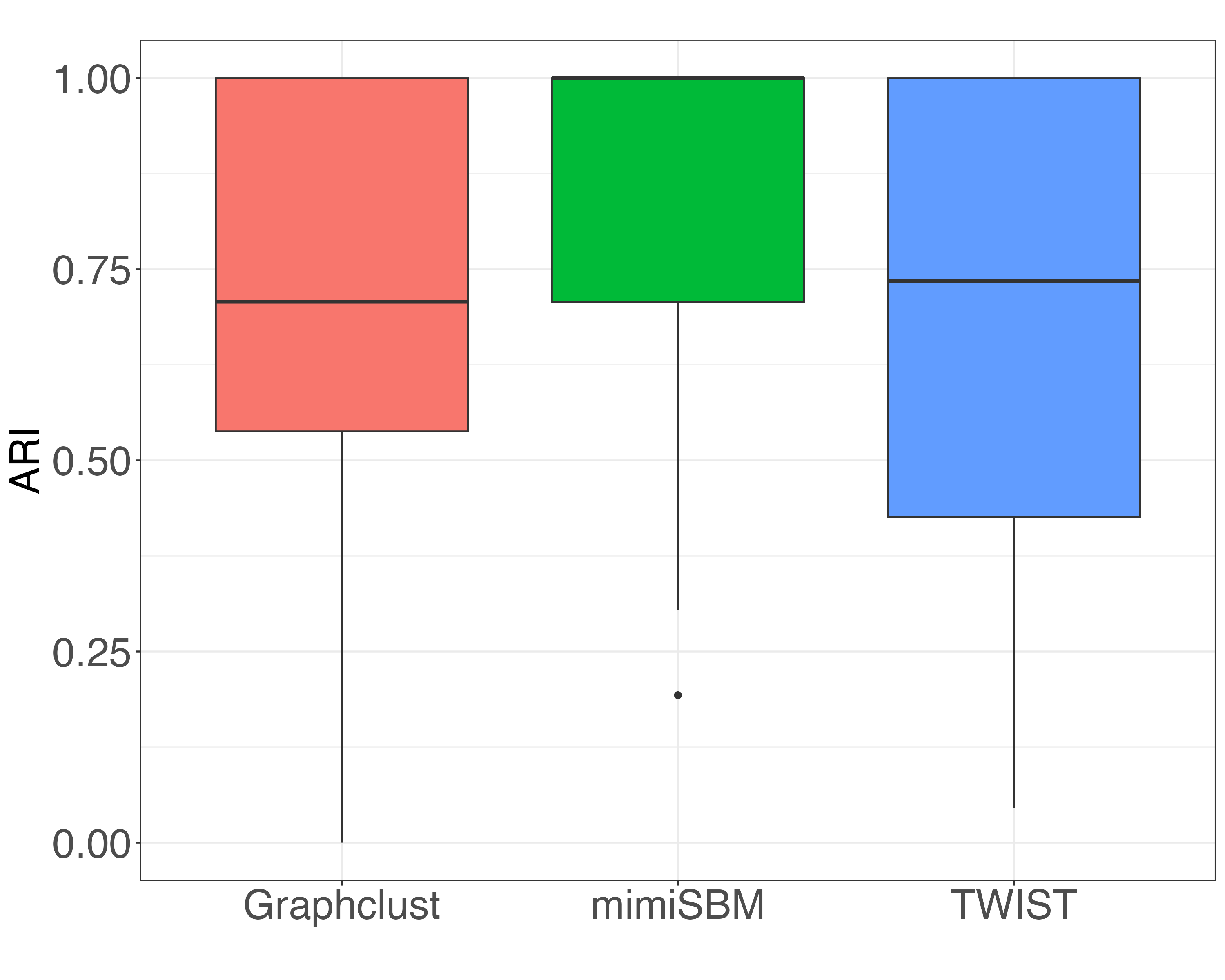}
         \caption{$N=200,V=15,K=5,Q=3$}

     \end{subfigure}
     \hfill
     \begin{subfigure}[b]{0.49\textwidth}
         \centering
         \includegraphics[width=\textwidth]{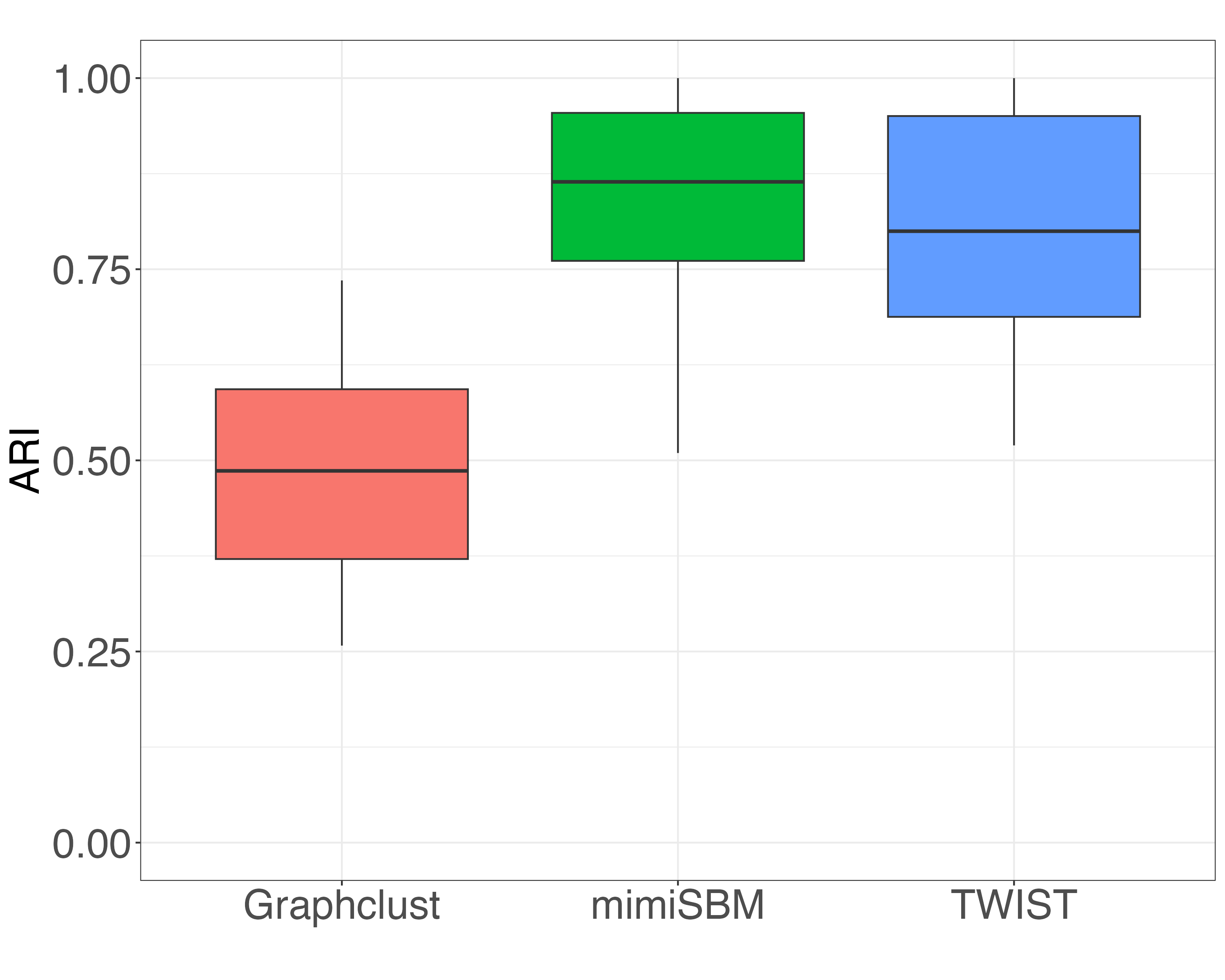}
         \caption{$N=200,V=50,K=10,Q=10$}

     \end{subfigure}
     \hfill
        \caption{Boxplot of ARI measure between true view clustering and output clustering of \textit{graphclust}, \textit{mimi-SBM} and \textit{TWIST} models.}
        \label{fig: competitors views LS}
\end{figure}


\subsubsection{Robustness to label-switching}
\label{sec:robustness_XP}
Given that the \textit{mimi-SBM} showed a satisfactory performance level in the previous section, even under the influence of label-switching perturbation, this section aims to further assess the robustness and limitations of our model concerning this criterion.

By varying the label switching rate, from $0$ to $1$ in steps of $0.10$, in order to see the evolution of clustering capacities on individuals and views.

\begin{figure}[!ht]
     \centering
     \begin{subfigure}[b]{0.49\textwidth}
         \centering
         \includegraphics[width=\textwidth]{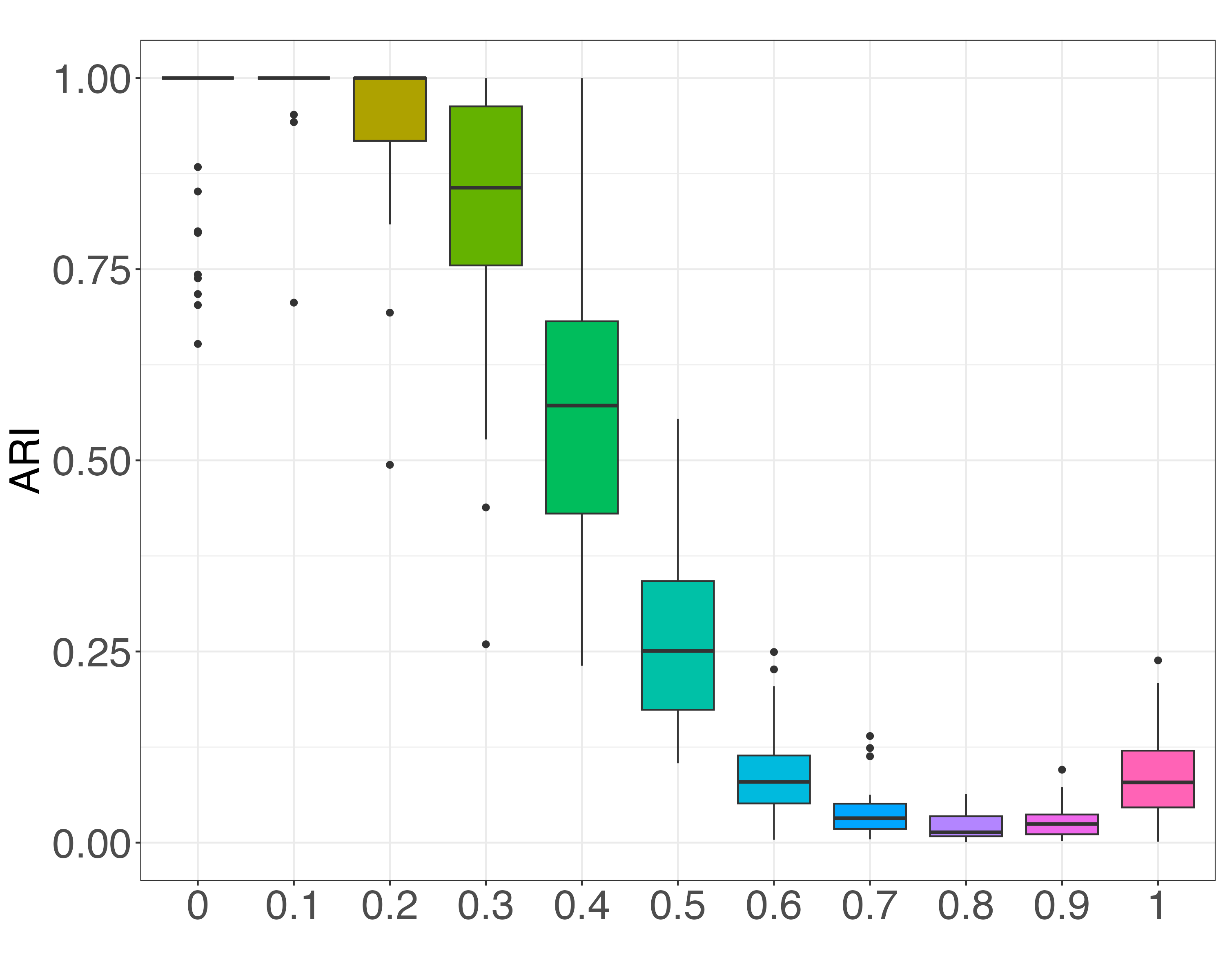}
         \caption{$N=50,V=15,K=5,Q=3$}

     \end{subfigure}
     \hfill
     \begin{subfigure}[b]{0.49\textwidth}
         \centering
         \includegraphics[width=\textwidth]{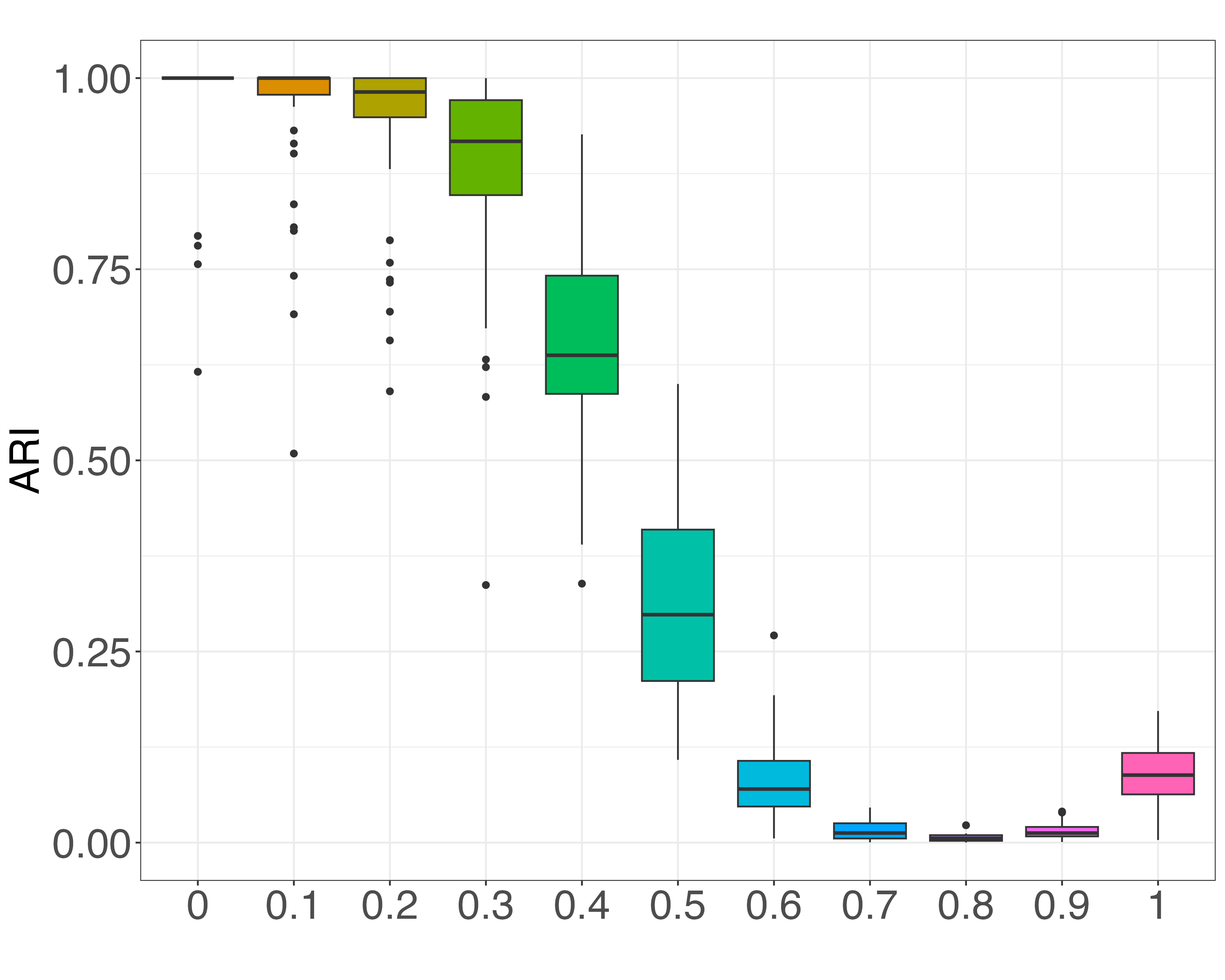}
         \caption{$N=200,V=15,K=5,Q=3$}

     \end{subfigure}
     \hfill
        \caption{Performances of \textit{mimi-SBM} on individual clustering through the evolution of label-switching rate.}
        \label{fig: Error Clustering}
\end{figure}

\begin{figure}[!ht]
     \centering
     \begin{subfigure}[b]{0.49\textwidth}
         \centering
         \includegraphics[width=\textwidth]{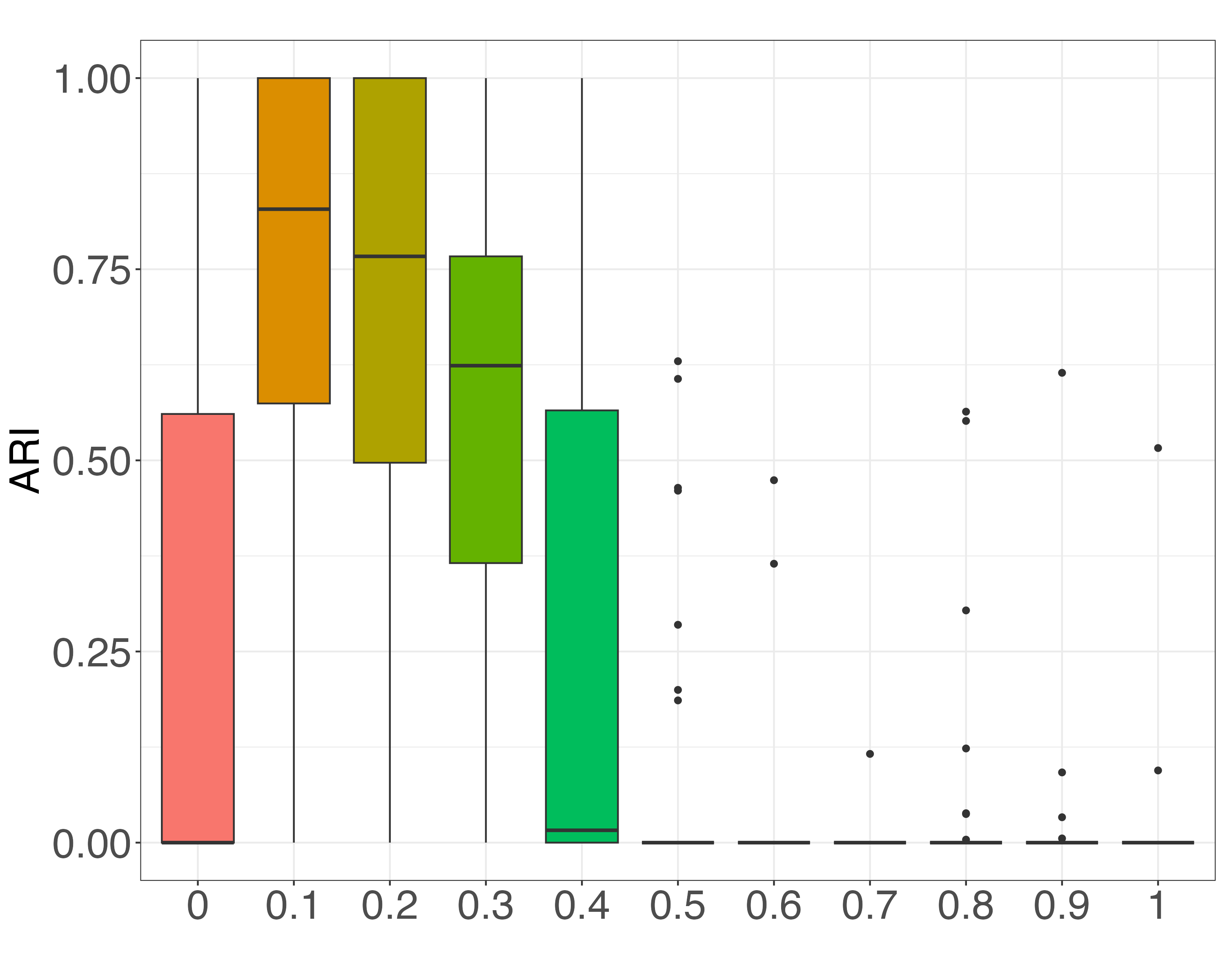}
         \caption{$N=50,V=15,K=5,Q=3$}

     \end{subfigure}
     \hfill
     \begin{subfigure}[b]{0.49\textwidth}
         \centering
         \includegraphics[width=\textwidth]{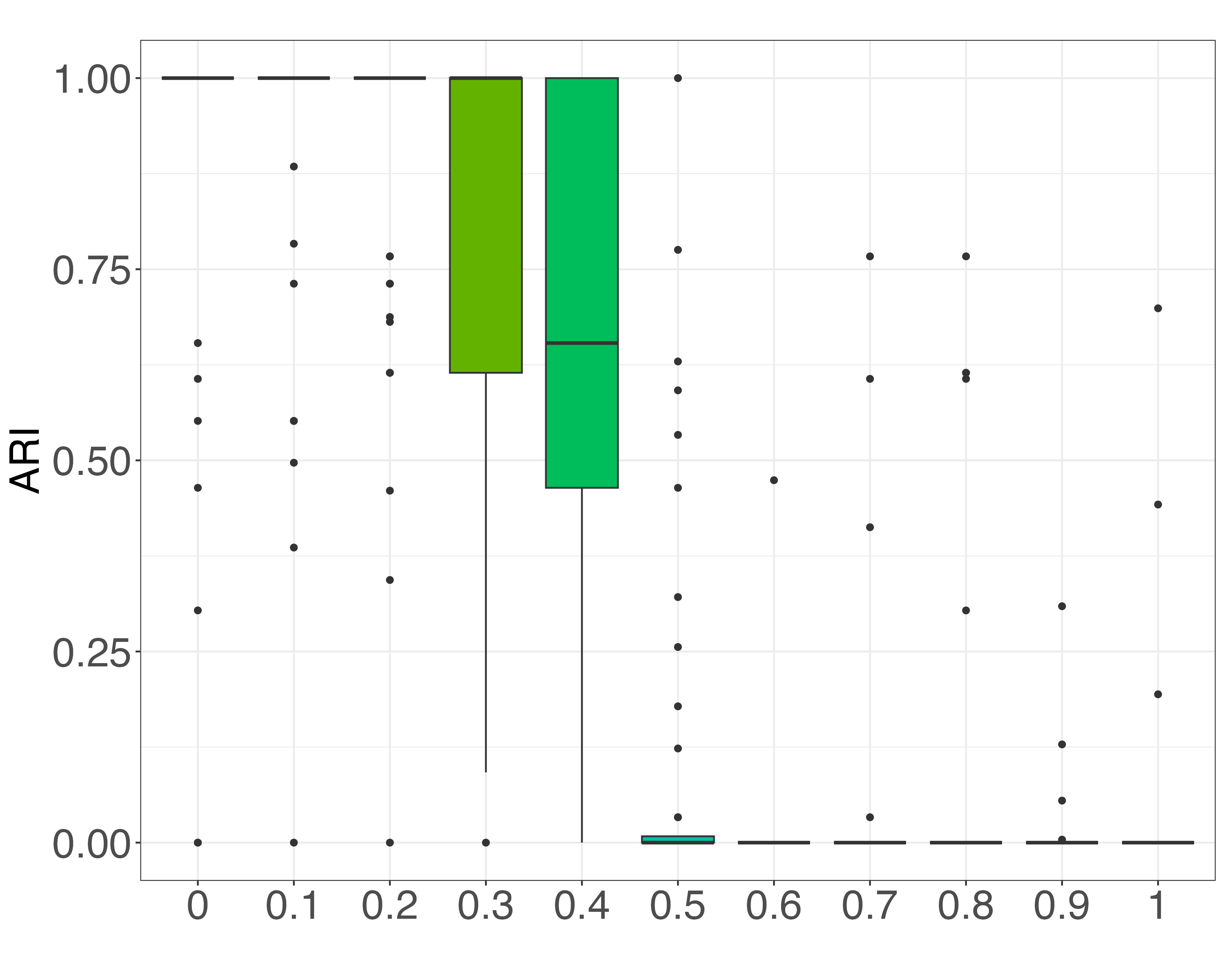}
         \caption{$N=200,V=15,K=5,Q=3$}

     \end{subfigure}
     \hfill
        \caption{Performances of \textit{mimi-SBM} on view clustering through the evolution of label-switching rate.}
        \label{fig: Error view mixture}
\end{figure}

For Figures~\ref{fig: Error Clustering} and~\ref{fig: Error view mixture}, clustering performances demonstrate a significant level of efficacy when the label-switching rate is low. 

As the rate of switched labels exceeds $40\%$, the stability of the individual clustering process progressively diminishes. This trend continues until the clustering process becomes entirely arbitrary when the switch-labeling rate surpasses $60\%$, as contrasted with the true partition.
An observable improvement in performance becomes evident as the switched label rate approaches $1$. This outcome is logically anticipated, as the reassignment of all individuals from one cluster to another results in their distribution across $K-1$ clusters instead of the initial $K$ clusters.

In the context of view-based clustering, we encounter a similar set of observations, albeit with a much more pronounced decline in performance. When the label-switching rate surpasses $20\%$, the ability of \textit{mimi-SBM} to effectively identify view components experiences a drastic reduction. Furthermore, when this rate exceeds $40\%$, the feasibility and relevance of conducting clustering based on these views are severely compromised.
One plausible explanation for this phenomenon is that, due to the perturbation, each adjacency matrix becomes highly noisy, lacking any discernible structure. Consequently, the model struggles to distinguish any specific connections within the mixtures, leading to a notably diminished clustering performance score.

\paragraph{Summary.} The \textit{mimi-SBM}  model has shown its capability in successfully recovering the stratification of individuals and the components of the mixture of views, even when the data is perturbed. However, like any statistical model, its performance, especially regarding the mixture of views, benefits from larger sample sizes. The accurate modeling of mixture components is crucial in various applications, making the mimi-SBM model highly valuable in a wide range of contexts.

\subsection{Worldwide Food Trading Networks }

\paragraph{Data.} This section delves into the analysis of a global food trading dataset initially assembled by ~\cite{de2015structural}, accessible at \url{http://www.fao.org}. The dataset includes economic networks covering a range of products, where countries are represented as nodes and the edges indicate trade links for particular food products.
Following the same preprocessing steps as \cite{jingCommunityDetectionMixture2020a}, we prepared the data to establish a common ground for comparing clustering outcomes. The original directed networks were simplified by omitting their directional features, thereby converting them into undirected networks.

Subsequently, to effectively filter out less significant information from the dataset, we eliminate links with a weight of less than $8$ and layers containing limited information (less than $150$ nodes). Finally, the intersections of the biggest networks of the preselected layers are then extracted.
Each layer reflects the international trade interactions involving 30 distinct food products among $99$ different countries and regions (nodes). 

\begin{figure}[!ht]
     \centering
     
     \begin{subfigure}[b]{\textwidth}
         \centering
         \includegraphics[width=\textwidth]{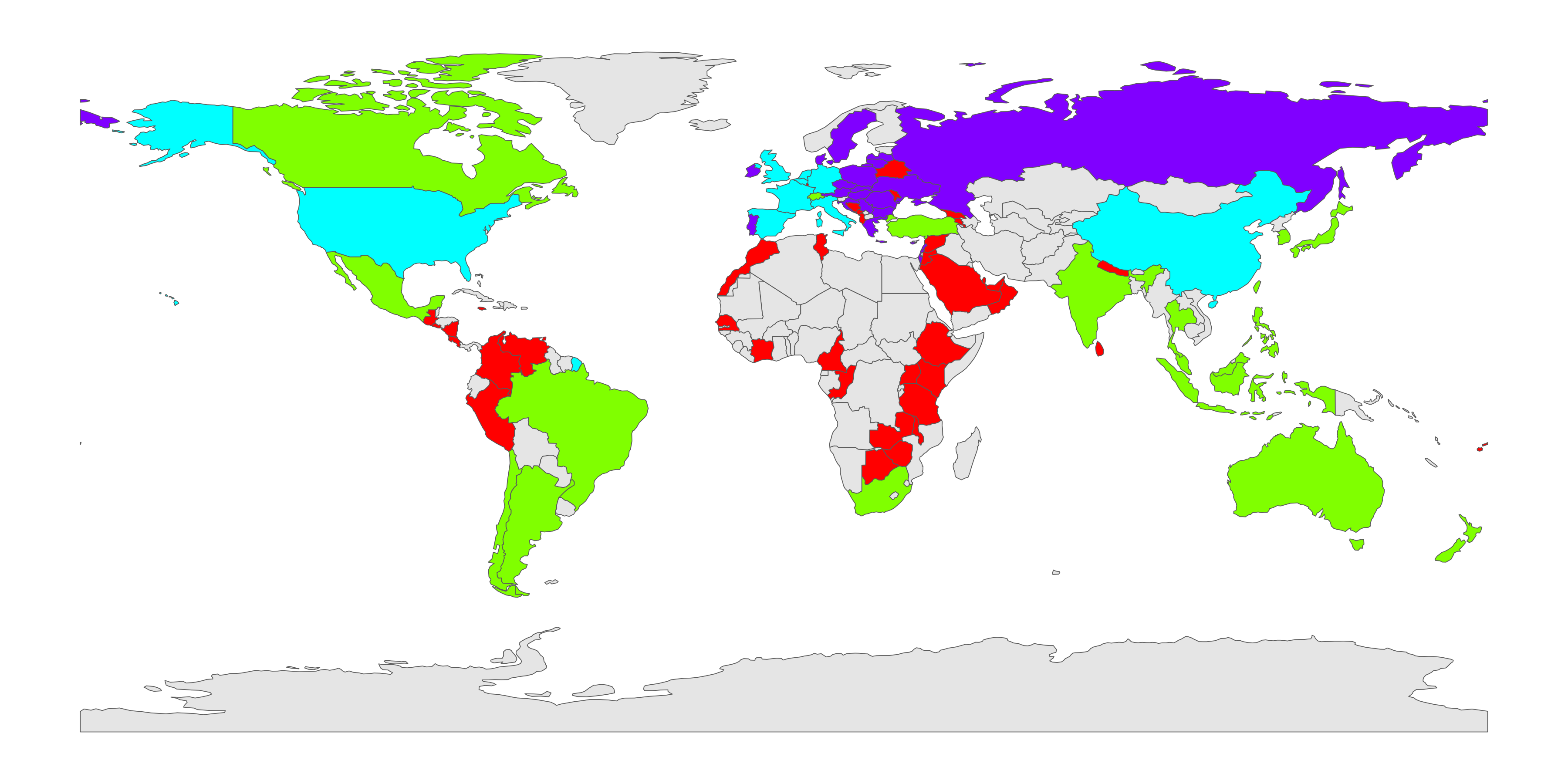}
     \end{subfigure}
     \hfill
        \caption{World Map of Clusters: Countries are color-coded based on the clusters identified by the model. The cyan cluster (cluster $1$) encompasses the West and China; The violet cluster (cluster $2$) consists of Russia and some parts of Western Europe; The red cluster (cluster $3$) includes countries from Africa and Central America; The green cluster (cluster $4$) covers Mexico, Canada, India, Australia, South Africa, Japan, among others; Countries depicted in grey are not included in the database analyzed.}
        \label{fig: Food K4}
\end{figure}

\paragraph{TWIST analysis.} In our research, we followed the analytical process described in \cite{jingCommunityDetectionMixture2020a}, to facilitate reliable comparison of results. Consistent with this methodology, we fixed the number of clusters at $K=4$ for individuals and $Q=2$ for views.


In Figure \ref{fig: Food K4}, clusters have their own interaction patterns:
\begin{itemize}
\item Cluster $1$ serves as a hub due to its centralization of exchanges, exhibiting a high intra-connectivity ($>90\%$) and substantial inter-connectivity ($>70\%$), as revealed by the multilayer adjacency probability analysis.
\item Cluster $2$ displays a robust intra-connectivity, with notable interactions observed with both clusters $1$ and $4$. Conversely, exchanges with cluster $3$ are infrequent for the commodities comprising the database.
\item Cluster $3$ and Cluster $4$ exhibit both intra-cluster and inter-cluster interactions, with a preference for inter-cluster interactions with cluster $1$. However, while Cluster $3$ predominantly interacts with Cluster $1$, Cluster $4$ demonstrates partial interaction with Cluster $2$.
\end{itemize}

\begin{table}[h]
\begin{center}
   \begin{tabular}{| c || c | }
     \hline
     View component 1 & Beverages\_non\_alcoholic ,      Food\_prep\_nes, \\
     & Chocolate\_products\_nes , Crude\_materials, \\
      &  Fruit\_prepared\_nes,  Beverages\_distilled\_alcoholic, \\          
     & Pastry,  Sugar\_confectionery,           Wine     
     \\ \hline
     View component 2 &   Cheese\_whole\_cow\_milk,    Cigarettes,            Flour\_wheat \\
     & Beer\_of\_barley,   Cereals\_breakfast,        Coffee\_green, \\
     & Milk\_skimmed\_dried,       Juice\_fruit\_nes,         Maize,  \\         
     & Macaroni,   Oil\_palm,      Milk\_whole\_dried,  \\      
 & Oil\_essential\_nes,     Rice\_milled,              Sugar\_refined,      Tea \\
 &  Spices\_nes,    Vegetables\_preserved\_nes,  Water\_ice\_etc,    \\      
 & Vegetables\_fresh\_nes,    Tobacco\_unmanufactured  \\ 
     \hline
   \end{tabular}
 \end{center}
    \caption{Table of members in view components.}
     \label{table: components}
 \end{table}

Exploration of the view components in Table \ref{table: components} reveals a marked tendency to distinguish between "processed products" and "unprocessed products", although there are some notable exceptions. In addition, it should be noted that Component $1$ displays more important connections than Component $2$, suggesting that the main flow of transactions is mainly concentrated on products included in Component 1. This observation reinforces Cluster 1's position as a central hub, remaining a predominant actor in the concentration of trade within the various components.

The analysis carried out in this study is reflected in a striking correlation with the steps taken in the precedent analysis. Firstly, we found that the same partitions of individuals were present, with only minor variations in clustering. The links forged within these groups proved to be consistent with market dynamics, highlighting, in particular, the hub role played by cluster $1$ in global trade. Furthermore, the overall partitioning of food types persisted, illustrating the persistent distinction between processed and unprocessed products, although a few exceptions were noted, similar to those observed in the previous analysis. In sum, our results largely converge with those of the \cite{jingCommunityDetectionMixture2020a} study, although a few discrepancies remain, underlining the importance of continuing research in this area to refine our understanding and approach. 

\paragraph{Our optimization.}

First, the criterion for choosing the optimal model was employed to guide the selection of hyperparameters. A grid search was conducted over a range of values, spanning from $1$ to $20$ for the hyperparameter $K$ and from $1$ to $10$ for $Q$, in concordance with parameters of the first core in ~\cite{jingCommunityDetectionMixture2020a} experimentation.
The model selection process led to the choice of hyperparameters $K = 20$ and $Q = 1$ as the most suitable configuration. 
The model found it excessively costly to introduce additional components across the views compared to the information gain achieved, so $Q=1$ was selected.
For individual clustering, $K = 20$ was based on the model's discovery of numerous micro-clusters representing countries based on their interaction habits. 
This indicates that the model successfully identified fine-grained distinctions among countries, revealing intricate subgroups within the data.

\begin{figure}[!h]
     \centering
     
     \begin{subfigure}[b]{\textwidth}
         \centering
         \includegraphics[width=\textwidth]{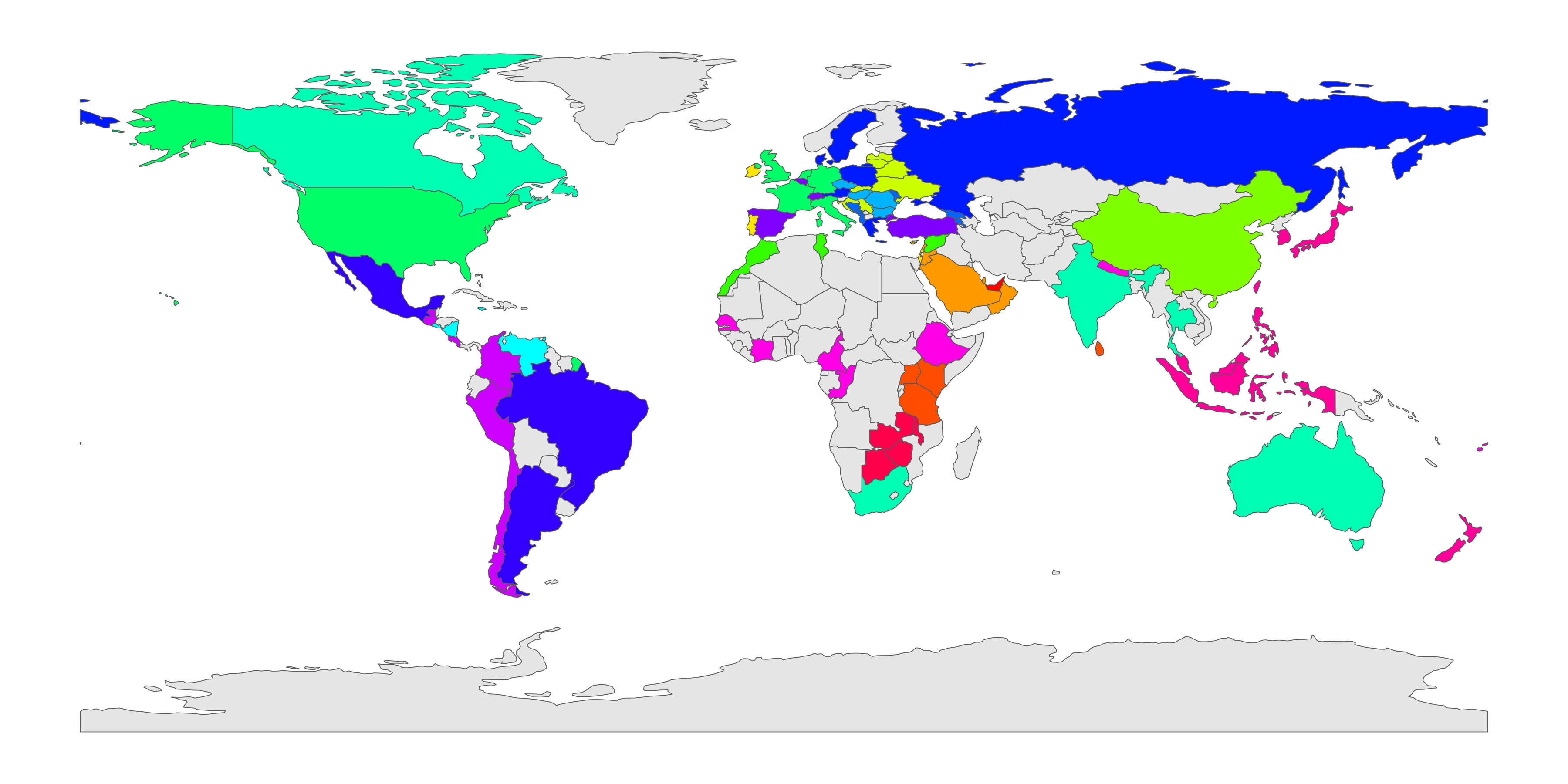}
     \end{subfigure}
     \hfill
        \caption{Clustering world map: countries are colored according to the clusters, and parameters defined by the model $(K=20)$.}
        \label{fig: Food K20}
\end{figure}

The results show that certain clusters have been substantially preserved, in particular Cluster $1$, which remains virtually intact, as does Cluster $4$. However, there has been a significant fragmentation of existing clusters, with China in particular remaining isolated. There have also been significant changes in the configuration of clusters, notably the inclusion of Russia along with other South American countries.
This change could be explained by the fact that we are now considering only one view component,  Russia is closer, in the sense of trading more with, the countries of South America than the rest of the world. 

\section*{Conclusion}
This paper proposes a new framework for Mixture of Multilayer SBM  \textit{mimi-SBM} that stratifies individuals as well as views. 

In order to get a manageable lower bound on the observed log-likelihood,  a variational Bayesian approach has been devised. Each model parameter has been estimated using a Variational Bayes EM algorithm. The advantage of such a Bayesian framework consists in allowing the development of an efficient model selection strategy. Moreover, we have provided the proof of model identifiability for the \textit{mimi-SBM} parameters.

In our simulation setting, the 
\textit{mimi-SBM} related algorithm has been shown to compete with methods based on tensor decomposition, hierarchical model-based SBM, and reference model in consensus clustering in two critical aspects of data analysis: individual clustering and view component identification. Specifically, our algorithm reliably recovered the primary sources of information in the majority of investigated cases. These remarkable performances attest to the efficacy of our approach, underscoring its potential for diverse applications requiring a profound understanding of complex data structures.
In real-world application on \textit{Worldwide Food Trading Networks}, when considering the paradigm provided by \cite{jingCommunityDetectionMixture2020a}, we obtain consistent results. However, upon optimizing our model using our metric, a distinctly different clustering emerges. This alternative clustering not only diverges significantly but also reflects much finer nuances in transactional natures.

An interesting follow-up would be to extend this approach to the context of deep learning, specifically in the context of variational auto-encoders using the Bayesian formulation. 
Additionally, further research is needed to develop theoretical proofs regarding the convergence of parameters for the component-connection probability tensor model.

\acks{KDS is funded by Sensorion and partially by Univ Evry, MS is funded by ENSIIE and partially Univ Evry and CA is funded by Univ Evry.}
%

\newpage
\bibliography{ms}

\newpage

 \begin{appendix}
 \section{Identifiability}
\label{sec:proof_identifiability}

This appendix is dedicated to the proof of the theorem of Section~\ref{sec:identifiability} related to the identifiability of the parameters of \textit{mimi-SBM}, recalled below. The proof is very similar to the one of~\cite{celisse2012consistency} 
and make use of algebraic properties to prove that the parameters depend solely on the marginal distribution of our data.

\begin{theorem}
Let $N \geq \max(2K,4Q)$ and $V \geq 2K$. Assume that for any $1 \leq k,l \leq K$ and every $1 \leq s \leq Q$, the coordinates of $\boldsymbol{\pi}^T  \boldsymbol{\alpha}_{k..}  \boldsymbol{\rho}$ are all different, $(\boldsymbol{\pi}^T  \boldsymbol{\alpha}_{..s} \boldsymbol{\pi})_{s=1:Q}$ are distinct, and each $(\boldsymbol{\alpha}_{kl.} \boldsymbol{\rho})_{k,l=1:K}$  differs. Then, the mimi-SBM parameter $\boldsymbol{\Theta} =\left( \boldsymbol{\pi},\boldsymbol{\rho},\boldsymbol{\alpha}\right)$ is identifiable.
\end{theorem}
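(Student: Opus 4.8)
The plan is to show that the marginal law of the array $\mathbf{A}$ determines $\boldsymbol{\Theta}=(\boldsymbol{\pi},\boldsymbol{\rho},\boldsymbol{\alpha})$ up to a relabeling of the $K$ node-clusters and the $Q$ view-components, by extending the moment/Vandermonde argument of \cite{celisse2012consistency} from a single latent layer to the two latent layers $\mathbf{Z}$ and $\mathbf{W}$. The three hypotheses correspond to three scalar ``signatures'' obtained by averaging $\boldsymbol{\alpha}$ along one mode: the per-cluster signature $u_k=\boldsymbol{\pi}^{T}\boldsymbol{\alpha}_{k\cdot\cdot}\boldsymbol{\rho}=\sum_{l,s}\pi_l\rho_s\alpha_{kls}$, the per-component signature $w_s=\boldsymbol{\pi}^{T}\boldsymbol{\alpha}_{\cdot\cdot s}\boldsymbol{\pi}=\sum_{k,l}\pi_k\pi_l\alpha_{kls}$, and the per-pair signature $v_{kl}=\boldsymbol{\alpha}_{kl\cdot}\boldsymbol{\rho}=\sum_s\rho_s\alpha_{kls}$, which the hypotheses make pairwise distinct within each family. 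The backbone of every step is that, conditionally on the memberships, edges indexed by disjoint triples are independent Bernoulli variables, so carefully chosen joint probabilities factor into products and, after summation over the latent labels, collapse to power sums of a signature weighted by $\boldsymbol{\pi}$ or $\boldsymbol{\rho}$.

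First I would identify $\boldsymbol{\pi}$ and the cluster signatures. Fix node $1$ and consider the event that it is linked to $m$ distinct other nodes in $m$ distinct views; conditioning on $Z_1=k$ and using that the remaining node classes and view components are drawn independently, this probability equals $\sum_{k}\pi_k u_k^{\,m}$. Letting $m$ range over $0,\dots,2K-1$ yields a Hankel/Vandermonde system whose unique solution, under distinctness of the $u_k$ (first hypothesis), returns the discrete measure $\sum_k\pi_k\delta_{u_k}$, hence $\boldsymbol{\pi}$ and the $u_k$; this consumes up to $2K-1$ auxiliary nodes and $2K-1$ distinct views, i.e.\ $N\geq 2K$ and $V\geq 2K$. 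Symmetrically, fixing a single view and taking the event that $m$ node-disjoint pairs are simultaneously linked gives $\sum_s\rho_s w_s^{\,m}$; ranging $m$ over $0,\dots,2Q-1$ and using distinctness of the $w_s$ (second hypothesis) recovers $\boldsymbol{\rho}$ and the $w_s$, which is where $N\geq 4Q$ (enough disjoint pairs, $4Q-2$ nodes) is used. The distinct signatures simultaneously fix canonical labelings of clusters and of components, disposing of the permutation ambiguity inherent to the model.

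The harder step is the reconstruction of the full tensor $\boldsymbol{\alpha}$, since a single edge $A_{ijv}$ entangles $Z_i$, $Z_j$ and $W_v$. Having identified the mixtures above, the class of any node is a measurable function of its connection pattern to a reference family of $(\text{node},\text{view})$ pairs (distinct $u_k$ invert the read-out), and the component of any view is a measurable function of its edge pattern on a reference family of disjoint node-pairs (distinct $w_s$). I would therefore form a cross-moment of the probe edge $A_{ijv}$ with a class-reader for $i$, a class-reader for $j$, and a component-reader for $v$; because the probe is conditionally independent of all three readers given $(Z_i,Z_j,W_v)$, this cross-moment equals $\pi_k\pi_l\rho_s\,\alpha_{kls}$, and dividing by the already-known $\pi_k\pi_l\rho_s$ isolates each $\alpha_{kls}$ (for $k\leq l$, which is all the symmetric model carries) as a function of the marginal law. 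The third hypothesis, distinctness of the $v_{kl}$, enters here: it keeps the mode-$3$ (view-marginalized) system separable, so that the recovered per-component matrices $\boldsymbol{\alpha}_{\cdot\cdot s}$ can be matched to the correct pair-labels $(k,l)$ and to the correct $\rho_s$, rather than merely to the constraints $v_{kl}=\sum_s\rho_s\alpha_{kls}$ and $w_s=\sum_{k,l}\pi_k\pi_l\alpha_{kls}$.

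I expect this third step to be the genuine obstacle. The delicate points are the bookkeeping that places the probe edge, the two class-readers, and the component-reader on index sets disjoint enough for the conditional-independence factorization to be exact, and the verification that the finite counts $N\geq\max(2K,4Q)$ and $V\geq 2K$ still leave enough distinct nodes and views to run all readers at once (the view budget being tight, since reading a class already requires $2K$ views and the probe view must be excluded). The two moment-inversion steps are routine Vandermonde arguments once the conditional-independence reductions are written down; it is the two-layer entanglement, handled through the auxiliary signature $v_{kl}$, that makes the present statement more than a verbatim transcription of the single-layer argument of \cite{celisse2012consistency}.
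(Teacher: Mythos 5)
Your first two steps coincide with the paper's argument: the chain of edges at node $1$ across distinct views yields the power sums $\sum_k \pi_k r_k^m$ with $r_k=\boldsymbol{\pi}^{T}\boldsymbol{\alpha}_{k\cdot\cdot}\boldsymbol{\rho}$, the node-disjoint pairs inside one view yield $\sum_s \rho_s t_s^m$ with $t_s=\boldsymbol{\pi}^{T}\boldsymbol{\alpha}_{\cdot\cdot s}\boldsymbol{\pi}$, and in both cases a Hankel-determinant polynomial has the signatures as its roots, so $\boldsymbol{\pi}$, $\boldsymbol{\rho}$ and the signatures are recovered exactly as you describe, with the same use of $N\geq 2K$, $V\geq 2K$ and $N\geq 4Q$.

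The gap is in your third step. A ``class-reader'' that returns $Z_i$ as a measurable function of finitely many observed edges does not exist: for finite $N$ and $V$ the connection pattern of a node has positive probability under every class, so the latent labels are not functions of the data, and the asserted factorization of the cross-moment into $\pi_k\pi_l\rho_s\alpha_{kls}$ cannot be obtained by ``dividing out'' readers. If you instead replace the readers by monomials in auxiliary edges, the cross-moment becomes a genuine three-way mixture $\sum_{k,l,s}\pi_k\pi_l\rho_s\,\alpha_{kls}\,r_k^{a}r_l^{b}t_s^{c}$ and you still must disentangle the two node indices from each other --- exactly the two-layer obstruction you flag but do not resolve. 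The paper's resolution, which is the idea missing from your sketch, is to collapse the pair $(Z_i,Z_j)$ into a single latent label on \emph{edges}: pass to the matrix $\tilde{\mathbf{A}}$ indexed by the $N(N-1)/2$ node pairs and the $V$ views, with $\tilde K=K(K+1)/2$ row-clusters of weights $\tilde{\pi}_{\tilde k}=\pi_k\pi_l$ and a $\tilde K\times Q$ connectivity matrix $\tilde{\boldsymbol{\alpha}}$. This turns the problem into a latent block model; the third hypothesis says precisely that the row signatures $\tilde r_{\tilde k}=(\tilde{\boldsymbol{\alpha}}\boldsymbol{\rho})_{\tilde k}=\boldsymbol{\alpha}_{kl\cdot}\boldsymbol{\rho}$ are pairwise distinct, so the associated Vandermonde matrix $\tilde{\mathbf{R}}$ is invertible, and the matrix $\mathbf{U}$ of joint probabilities of $i$ edges in the first row and $j-1$ edges in the first column factors as $\mathbf{U}=\tilde{\mathbf{R}}\operatorname{Diag}(\tilde{\boldsymbol{\pi}})\,\tilde{\boldsymbol{\alpha}}\,\operatorname{Diag}(\boldsymbol{\rho})\,\mathbf{T}^{T}$, whence $\tilde{\boldsymbol{\alpha}}$, and then $\boldsymbol{\alpha}$, by inverting matrices already identified in your steps one and two. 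Without this reparametrization, or an equivalent device, your third step does not go through.
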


\subsection{Assumptions}

\begin{enumerate}[{$\mathcal{A}$}1:]
    \item 
    $(\boldsymbol{\pi}^T  \boldsymbol{\alpha}_{k..}  \boldsymbol{\rho})_{k=1:K}$ are all different.
    \item 
    $(\boldsymbol{\pi}^T  \boldsymbol{\alpha}_{..s} \boldsymbol{\pi})_{s=1:Q}$ are all different. 
    \item 
    $N,V \geq 2K$.
    \item 
    $N \geq 4Q$.
    \item 
    $(\boldsymbol{\alpha}_{kl.} \boldsymbol{\rho})_{k,l=1:K}$ are all different. 
\end{enumerate}

\subsection{Identifiability of \texorpdfstring{$\boldsymbol{\pi}$}{pi}}
\label{sec:id_pi}



To prove the identifiability of $\boldsymbol{\pi}$, we first need to establish some correspondences. 
For any \(1 \leq k \leq K\), 
$\forall (i,j,v)$, let $r_k$ be the probability that an edge between $i$ and $j$ in layer $v$ given individual $i$ is in the cluster $k$:
\begin{align*}
r_k &= \mathbb{P}(A_{ijv} = 1 \mid Z_i = k ) \\
&= \sum_l \sum_s \mathbb{P}(A_{ijv} = 1 \mid Z_i = k, Z_j=l, W_v = s ) \ \pi_l \ \rho_s \\
&= \sum_l \sum_s \alpha_{kls} \ \pi_l \ \rho_s \\
&=  \boldsymbol{\pi}^T  \boldsymbol{\alpha}_{k..} \boldsymbol{\rho}\,.
\end{align*}

\begin{proposition}[Invertibility of $\mathbf{R}$]
\label{prop:R}
Let  $\mathbf{R}$ denote a Vandermonde matrix of size $K \times K$ such as 
\(\displaystyle R_{ik} = (r_k)^{i-1}, \ 1 \leq i, k \leq K\). 
$\mathbf{R}$ is invertible, since the coordinates of $r$ are all different according to Assumption~$\mathcal{A}1$.
\end{proposition}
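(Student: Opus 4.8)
The plan is to reduce the statement to the classical nonsingularity criterion for Vandermonde matrices. First I would make the structure explicit: by the defining relation $R_{ik} = (r_k)^{i-1}$, the $k$-th column of $\mathbf{R}$ is $(1, r_k, r_k^2, \dots, r_k^{K-1})^T$, so $\mathbf{R}$ is a Vandermonde matrix generated by the scalars $r_1, \dots, r_K$, where, as computed just above the proposition, $r_k = \boldsymbol{\pi}^T \boldsymbol{\alpha}_{k..} \boldsymbol{\rho}$.

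Next I would invoke the standard determinant identity $\det \mathbf{R} = \prod_{1 \le k < l \le K} (r_l - r_k)$. The only hypothesis this needs is that the generators are pairwise distinct, and this is exactly Assumption $\mathcal{A}1$, which asserts that the coordinates $(\boldsymbol{\pi}^T \boldsymbol{\alpha}_{k..} \boldsymbol{\rho})_{k=1:K}$ are all different. Consequently every factor $r_l - r_k$ with $k < l$ is nonzero, so $\det \mathbf{R} \neq 0$ and $\mathbf{R}$ is invertible.

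There is essentially no obstacle here, since the content is entirely classical; the single nontrivial ingredient, distinctness of the $r_k$, is supplied by $\mathcal{A}1$. If a self-contained argument were preferred over citing the determinant formula, I would instead show directly that the rows are linearly independent: a relation $\sum_{i=1}^{K} a_i (\text{row } i) = 0$ is equivalent to $P(r_k) = 0$ for every $k$, where $P(x) = \sum_{i=1}^{K} a_i x^{i-1}$ has degree at most $K-1$; since the $K$ distinct values $r_k$ would then all be roots, $P \equiv 0$ and hence every $a_i = 0$, forcing invertibility.
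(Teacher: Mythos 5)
Your argument is correct and matches the paper's reasoning: the paper likewise treats $\mathbf{R}$ as a Vandermonde matrix whose generators $r_1,\dots,r_K$ are pairwise distinct by Assumption $\mathcal{A}1$, and concludes invertibility from the classical Vandermonde nonsingularity criterion. The determinant identity (or the equivalent polynomial-root argument you sketch) is exactly the standard justification being invoked.
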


Furthermore, for $2 \leq i \leq K$, the joint probability of having $(i-1)$ edges is given by:
\begin{align*}
& \mathbb{P}(A_{121} = 1, A_{132} = 1, \dots, A_{1i(i-1)} = 1 \mid Z_1 = k ) \\
& \qquad= \sum_l \sum_s \mathbb{P}(A_{121} = 1, A_{132} = 1, \dots, A_{1i(i-1)} = 1 \mid Z_1 = k, Z_2=l, W_1 = s ) \ \pi_l \ \rho_s \\
&\qquad= \mathbb{P}(A_{132} = 1, \dots, A_{1i(1-1)} = 1 \mid Z_1 = k) \times \sum_l \sum_s \mathbb{P}(A_{121} = 1 \mid Z_1 = k, Z_2=l, W_1 = s ) \ \pi_l \ \rho_s \\
&\qquad= \mathbb{P}(A_{132} = 1, \dots, A_{1i(1-1)} = 1 \mid Z_1 = k ) \ r_k \\
&\qquad= (r_k)^{i-1} \\
&\qquad= R_{ik} \, .
\end{align*} 

Now,  
we  define $u_0 = 1$ and for $1 \leq i \leq 2K-1$:
\begin{align*}
 u_i &= \mathbb{P}(A_{121} = 1, A_{132} = 1, \dots, A_{1i(1-1)} = 1,A_{1(i+1)i} = 1) \\
 &= \sum_k \mathbb{P}(A_{121} = 1, A_{132} = 1, \dots, A_{1(1+1)i} = 1 \mid Z_1=k) \ \pi_k \\
 &= \sum_k (r_k)^i \ \pi_k \, .
\end{align*}
By Assumption~$\mathcal{A}$3, $(u_i)_{i=1:(2K-1)}$ are well defined. 
%
Hence, $u_0 = 1$ and 
$(u_i)_{i=1:(2K-1)}$ 
are known and defined from the marginal $\mathbb{P}_A$. 
As a consequence, $(u_i)_{i=1:(2K-1)}$ 
are identifiable.
\medskip


Also, let $\mathbf{M}$ of size $(K+1) \times K$ be the matrix given by $M_{ij} = u_{i+j-2}$ for $1 \leq i \leq K+1$ 
and $1 \leq j \leq K$,  
and let $\mathbf{M}_{-i}$ denote the square matrix obtained by removing the row $i$ from $\mathbf{M}$. 
The coefficients of $\mathbf{M}_{-(K+1)}$,  for $1 \leq i,j \leq K$, are:
\begin{align}
M_{ij} &  = \sum_{k =1}^K (r_k)^{i-1} \ \pi_k\ (r_k)^{j-1}\, , 
\ \text{and } \nonumber \\
\mathbf{M}_{-(K+1)} & = \mathbf{R} \operatorname{Diag}(\boldsymbol{\pi})\mathbf{R}^T\, . \label{eq:M_pi}
\end{align}

\begin{proposition}[Relations between $\mathbf{R}$, $\mathbf{M}$ and $\boldsymbol{\pi}$]
\label{prop:RMpi}
From Proposition~\ref{prop:R} and Equation~\eqref{eq:M_pi}, we can define
\begin{align*}
\mathbf{M}_{-(K+1)} = \mathbf{R} \  \operatorname{Diag}(\boldsymbol{\pi})\ \mathbf{R}^T \, .
\end{align*}
\end{proposition}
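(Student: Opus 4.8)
The plan is to verify the claimed factorization by a direct entry-by-entry computation, exploiting the Hankel structure of $\mathbf{M}_{-(K+1)}$ together with the Vandermonde form of $\mathbf{R}$. The statement is essentially the matrix-algebra repackaging of Equation~\eqref{eq:M_pi}, so no deep argument is needed; the work is purely bookkeeping.

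First I would record a normalization remark that unifies the base case with the general formula. Since $\boldsymbol{\pi}$ is a probability vector, $u_0 = 1 = \sum_{k=1}^K \pi_k = \sum_{k=1}^K (r_k)^0\,\pi_k$, so the single expression $u_m = \sum_{k=1}^K (r_k)^m\,\pi_k$ holds for every $0 \le m \le 2K-2$, not merely for $m \ge 1$ as derived in the computation of the $u_i$ above. This is the only place where the definition $u_0=1$ must be reconciled with the running formula for the $u_i$.

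Next I would restrict the $(K+1)\times K$ matrix $\mathbf{M}$ to its top $K$ rows, i.e.\ pass to $\mathbf{M}_{-(K+1)}$, so that for $1 \le i,j \le K$ the index $i+j-2$ ranges over $\{0,\dots,2K-2\}$ and the formula above applies. Substituting gives
\[
M_{ij} \;=\; u_{i+j-2} \;=\; \sum_{k=1}^K (r_k)^{i+j-2}\,\pi_k \;=\; \sum_{k=1}^K (r_k)^{i-1}\,\pi_k\,(r_k)^{j-1},
\]
where the final step merely splits the exponent into an $i$-dependent and a $j$-dependent factor.

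Finally I would compute the $(i,j)$ entry of $\mathbf{R}\,\operatorname{Diag}(\boldsymbol{\pi})\,\mathbf{R}^T$ directly. Because $\operatorname{Diag}(\boldsymbol{\pi})$ is diagonal, the double sum of the triple matrix product collapses to a single index, and using $R_{ik}=(r_k)^{i-1}$ together with $(\mathbf{R}^T)_{kj}=R_{jk}=(r_k)^{j-1}$ one obtains $[\mathbf{R}\,\operatorname{Diag}(\boldsymbol{\pi})\,\mathbf{R}^T]_{ij} = \sum_{k=1}^K (r_k)^{i-1}\,\pi_k\,(r_k)^{j-1}$, which matches $M_{ij}$ exactly. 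I expect no genuine obstacle: the only things to watch are the indexing involved in deleting the last row of $\mathbf{M}$ and the inclusion of the $m=0$ term via the normalization of $\boldsymbol{\pi}$. The substance of the proposition lies not in the identity itself but in its intended use: combined with the invertibility of $\mathbf{R}$ from Proposition~\ref{prop:R}, it can be inverted as $\operatorname{Diag}(\boldsymbol{\pi}) = \mathbf{R}^{-1}\,\mathbf{M}_{-(K+1)}\,(\mathbf{R}^T)^{-1}$, which is what will let one recover $\boldsymbol{\pi}$ from quantities determined by the marginal law $\mathbb{P}_A$.
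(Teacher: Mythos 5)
Your proof is correct and follows essentially the same route as the paper: the paper obtains the identity by exactly this entry-wise computation, writing $M_{ij}=u_{i+j-2}=\sum_{k=1}^K (r_k)^{i-1}\pi_k (r_k)^{j-1}$ and recognizing the right-hand side as the $(i,j)$ entry of $\mathbf{R}\,\operatorname{Diag}(\boldsymbol{\pi})\,\mathbf{R}^T$. Your explicit remark reconciling the convention $u_0=1$ with the running formula $u_m=\sum_k (r_k)^m\pi_k$ via the normalization of $\boldsymbol{\pi}$ is a small but welcome clarification that the paper leaves implicit.
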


%
%

%
%
%



The correspondence of the different terms being established, we now need to prove the identifiability of $\boldsymbol{\pi}$, which means showing that $\mathbf{M}_{-(K+1)}$ and $\mathbf{R}$ are identifiable.
\medskip

First, for the identifiability of $r_k$, with $\delta_k = \operatorname{Det}(\mathbf{M}_{-k})$, we define a polynomial function $B$ such as:
\begin{align*}
B(x) = \sum_{k=0}^K (-1)^{K+k} \ \delta_{k+1} \ x^k \, .
\end{align*}

This polynomial function has two important properties.

\begin{proposition}
\label{prop:deg_B} Let $\operatorname{deg}(B)$ denote the degree of $B$. We have  $\operatorname{deg}(B) = K$.
\end{proposition}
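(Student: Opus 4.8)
The plan is to exploit that $B$ is, by construction, a polynomial of degree \emph{at most} $K$, since the highest power appearing in $B(x) = \sum_{k=0}^K (-1)^{K+k}\,\delta_{k+1}\,x^k$ is $x^K$. Consequently it suffices to show that the leading coefficient, i.e.\ the coefficient of $x^K$, is nonzero. Reading this coefficient off the definition by setting $k=K$, I get $(-1)^{K+K}\delta_{K+1} = \delta_{K+1} = \operatorname{Det}(\mathbf{M}_{-(K+1)})$. So the entire statement reduces to proving $\operatorname{Det}(\mathbf{M}_{-(K+1)}) \neq 0$.

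To control this determinant I would invoke Proposition~\ref{prop:RMpi}, which gives the factorization $\mathbf{M}_{-(K+1)} = \mathbf{R}\,\operatorname{Diag}(\boldsymbol{\pi})\,\mathbf{R}^T$. Taking determinants and using multiplicativity together with $\operatorname{Det}(\mathbf{R}^T) = \operatorname{Det}(\mathbf{R})$ yields $\operatorname{Det}(\mathbf{M}_{-(K+1)}) = \operatorname{Det}(\mathbf{R})^2 \prod_{k=1}^K \pi_k$. It then remains to argue that each of the two factors is nonzero: $\operatorname{Det}(\mathbf{R}) \neq 0$ follows from Proposition~\ref{prop:R}, since $\mathbf{R}$ is a Vandermonde matrix whose generators $r_k$ are pairwise distinct under Assumption~$\mathcal{A}1$; and $\prod_{k=1}^K \pi_k > 0$ because the mixture proportions $\pi_k$ are strictly positive (no empty cluster). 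Combining these, the leading coefficient $\delta_{K+1}$ is a strictly positive product, hence nonzero, and therefore $\operatorname{deg}(B) = K$.

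There is no genuinely hard computational step here; the proof is essentially a one-line reduction followed by a determinant identity that has already been packaged into Proposition~\ref{prop:RMpi}. The only points requiring care are bookkeeping ones: correctly identifying the $x^K$ coefficient with its sign $(-1)^{2K}=1$, and making explicit the (implicit) nondegeneracy assumption $\pi_k > 0$ for all $k$, without which $\prod_k \pi_k$ could vanish and the argument would fail. The main conceptual obstacle is recognizing that the leading coefficient is exactly the determinant $\delta_{K+1}$ that was factored in Proposition~\ref{prop:RMpi}, which is precisely what makes the degree computation immediate rather than requiring any direct expansion of the $\delta_k$.
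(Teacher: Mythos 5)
Your proof is correct and follows essentially the same route as the paper: identify the leading coefficient of $B$ as $\delta_{K+1}=\operatorname{Det}(\mathbf{M}_{-(K+1)})$ and use the factorization $\mathbf{M}_{-(K+1)}=\mathbf{R}\operatorname{Diag}(\boldsymbol{\pi})\mathbf{R}^T$ together with the invertibility of $\mathbf{R}$ to conclude it is nonzero. Your version is slightly more careful than the paper's, since you make explicit the sign bookkeeping and the implicit requirement $\pi_k>0$ needed for $\operatorname{Diag}(\boldsymbol{\pi})$ to be invertible.
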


\begin{proof} Let $\delta_{K+1} = \operatorname{Det}(\mathbf{M}_{-(K+1)})$, with $M_{-(K+1)} = \mathbf{R} \  \operatorname{Diag}(\boldsymbol{\pi})\ \mathbf{R}^T$ as  stated in Proposition~\ref{prop:RMpi}, and $\mathbf{R}$ being invertible as stated in Proposition~\ref{prop:R}. %
In consequence, $\mathbf{M}_{-(K+1)}$ is the product of invertible matrices, $\delta_{K+1} = \operatorname{Det}(\mathbf{M}_{-(K+1)}) \ne 0$ and, moreover, $\operatorname{deg}(B) = K$.
\end{proof}

\begin{proposition}
\label{prop:root_B} For $ 1 \leq k  \leq K$, $B(r_k) = 0$.
\end{proposition}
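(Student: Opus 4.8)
The plan is to realize $B(x)$ as the determinant of a single $(K+1)\times(K+1)$ matrix and then, for each fixed node $r_m$, to exhibit an explicit linear dependence among that matrix's columns.

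First I would introduce the augmented matrix $C(x) = \bigl[\, \mathbf{M} \mid c(x) \,\bigr]$, where $c(x) = (1, x, x^2, \dots, x^K)^T \in \mathbb{R}^{K+1}$; that is, the $(K+1)\times(K+1)$ matrix whose first $K$ columns are those of $\mathbf{M}$ and whose last column is $c(x)$. Expanding $\operatorname{Det} C(x)$ along this last column, the minor obtained by deleting row $i$ together with the last column is exactly $\mathbf{M}_{-i}$, so $\operatorname{Det} C(x) = \sum_{i=1}^{K+1} (-1)^{i+K+1} x^{i-1} \delta_i$. Reindexing with $k = i-1$ yields precisely $\operatorname{Det} C(x) = B(x)$. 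It therefore suffices to prove $\operatorname{Det} C(r_m) = 0$ for every $1 \le m \le K$.

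Next I would factor $\mathbf{M}$. Writing $w_k = (1, r_k, \dots, r_k^K)^T$ and recalling $M_{ij} = u_{i+j-2} = \sum_k (r_k)^{i+j-2}\pi_k$, one sees that column $j$ of $\mathbf{M}$ equals $\sum_k (r_k)^{j-1}\pi_k\, w_k$. Hence $\mathbf{M} = W P$, where $W = \bigl[\, w_1 \mid \cdots \mid w_K \,\bigr]$ and $P = \operatorname{Diag}(\boldsymbol{\pi})\,\mathbf{R}^T$. Since $\mathbf{R}$ is invertible by Proposition~\ref{prop:R} and $\pi_k > 0$ for all $k$, the factor $P$ is invertible, so right-multiplication by $P$ preserves the column space and $\operatorname{col}(\mathbf{M}) = \operatorname{col}(W) = \operatorname{span}\{w_1, \dots, w_K\}$.

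Finally I would observe that the last column of $C(r_m)$ is $c(r_m) = w_m$, which lies in $\operatorname{span}\{w_1, \dots, w_K\} = \operatorname{col}(\mathbf{M})$. Thus the last column of $C(r_m)$ is a linear combination of its first $K$ columns, forcing $\operatorname{Det} C(r_m) = 0$, that is, $B(r_m) = 0$. The main obstacle here is purely bookkeeping: matching the signs of the cofactor expansion to the alternating coefficients in the definition of $B$, and confirming that $P = \operatorname{Diag}(\boldsymbol{\pi})\,\mathbf{R}^T$ is genuinely invertible, which requires the standard non-degeneracy assumption $\pi_k > 0$ for every $k$ (so that no cluster is vacuous).
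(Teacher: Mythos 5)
Your proposal is correct and follows essentially the same route as the paper: both realize $B(x)$ as the determinant of the matrix $\mathbf{M}$ augmented with the column $(1,x,\dots,x^K)^T$, identify $B(r_k)$ with $\det$ of that augmented matrix via cofactor expansion along the last column, and conclude by a linear-dependence argument among the columns. The only minor difference is that the paper avoids invoking $\pi_k>0$ by noting directly that \emph{all} $K+1$ columns of the augmented matrix lie in $\operatorname{span}\{V_1,\dots,V_K\}$, hence its rank is below $K+1$, whereas your detour through the invertibility of $\operatorname{Diag}(\boldsymbol{\pi})\,\mathbf{R}^T$ needs the (standard, but here implicit) non-vacuity assumption you correctly flag.
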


\begin{proof}
Let  $\mathbf{N}_k$ of size $(K+1) \times (K+1)$ be the concatenation in columns of the matrix $\mathbf{M}$ with the vector \(\displaystyle V_k = [1, r_k, r^2_k, \dots, r_k^K]^T\).
\medskip

Now let's calculate the determinant of $\mathbf{N}_k$ developed by the last column:
\begin{align*} 
\det(\mathbf{N}_k) &= \sum_{l=0}^K (-1)^{K+1 +l+1} \, (r_k)^l \, \det(\mathbf{M}_{l+1}) \\
&= \sum_{l=0}^K (-1)^{K +l}  \, \delta_{l+1} \,  (r_k)^l \\
&= B(r_k).
\end{align*}

In addition, the $j$th column of the $\mathbf{M}$ matrix can be written as $\displaystyle M_{.j} = \sum\nolimits_{k=1}^K r_k^{j-1} \pi_k V_k$.
Therefore, $\operatorname{rank}(\mathbf{N}_k) < K+1$ and $\det(\mathbf{N}_k)= 0$ for $\ 1 \leq k \leq K$. In consequence,  $B(r_k) = 0$ for  $\ 1 \leq k \leq K$.
\end{proof}

With $(r_k)_{k=1:K}$ being the roots of $B$ (proposition \ref{prop:root_B}), they are functions of $(\delta_k)_{k=1:K+1}$ which are themselves derived from $\mathbb{P}_A$. 
Also, $(r_k)_{k=1:K}$ can be expressed in a unique way (up to label switching) from $\mathbb{P}_A$, thus $(r_k)_{k=1:K}$ are identifiable. In consequence, $\mathbf{R}$ is also identifiable by definition. 
Finally, since $\mathbf{M}_{-(K+1)}$ and $\mathbf{R}$ are identifiable and invertible, 
\(\displaystyle\operatorname{Diag}(\boldsymbol{\pi})  = \mathbf{R}^{-1} \mathbf{M}_{-(K+1)} (\mathbf{R}^T)^{-1} \). 
%
%
In conclusion, $\boldsymbol{\pi}$ is identifiable.

\subsection{Identifiability of \texorpdfstring{$\boldsymbol{\rho}$}{}}

Identifiability of $\boldsymbol{\rho}$ is similar to $\boldsymbol{\pi}$, the main difference lies in the assumptions made and the quantities defined. 

For any \(1 \leq s \leq Q\), 
$\forall (i,j,v)$, let $t_s$ be the probability of an edge between $i$ and $j$ in layer $v$ given view $v$ is in the component $s$:
\begin{align*}
t_s &= \mathbb{P}(A_{ijv} = 1 \mid W_v = s ) \\
&= \sum_l \sum_k \mathbb{P}(A_{ijv} = 1 \mid Z_i = k, Z_j=l, W_v = s ) \ \pi_l \ \pi_k \\
&= \sum_l \sum_k \alpha_{kls} \ \pi_l \ \pi_k \\
&=  \boldsymbol{\pi}^T  \boldsymbol{\alpha}_{..s}  \boldsymbol{\pi} \,.
\end{align*}

\begin{proposition}[Invertibility of $\mathbf{T}$]
\label{prop:T}
Let  $\mathbf{T}$ denote a Vandermonde matrix of size $Q \times Q$ such as 
\(\displaystyle T_{is} = (t_s)^{i-1}, \ 1 \leq i, s \leq Q\). 
$\mathbf{T}$ is invertible, since the coordinates of $(t_s)$ are all different according to Assumption~$\mathcal{A}2$.
\end{proposition}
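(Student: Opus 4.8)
The plan is to recognize $\mathbf{T}$ as a standard Vandermonde matrix and reduce its invertibility to the pairwise distinctness of its generating nodes. By construction $T_{is} = (t_s)^{i-1}$ for $1 \leq i,s \leq Q$, so $\mathbf{T}$ is exactly the $Q \times Q$ Vandermonde matrix built from the scalars $t_1,\dots,t_Q$, where each $t_s = \boldsymbol{\pi}^T \boldsymbol{\alpha}_{..s} \boldsymbol{\pi}$ as computed just above the statement. First I would write down the closed-form Vandermonde determinant
\[
\operatorname{Det}(\mathbf{T}) = \prod_{1 \leq i < j \leq Q} (t_j - t_i),
\]
which reduces the problem to checking that no factor vanishes.

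The key step is then to invoke Assumption~$\mathcal{A}2$, which states precisely that $(\boldsymbol{\pi}^T \boldsymbol{\alpha}_{..s} \boldsymbol{\pi})_{s=1:Q} = (t_s)_{s=1:Q}$ are pairwise distinct. Hence $t_j - t_i \neq 0$ for all $i < j$, every factor in the product is nonzero, and $\operatorname{Det}(\mathbf{T}) \neq 0$, so $\mathbf{T}$ is invertible. This is the verbatim analogue of the argument for $\mathbf{R}$ in Proposition~\ref{prop:R}, with the $Q$ view components replacing the $K$ clusters and Assumption~$\mathcal{A}2$ replacing Assumption~$\mathcal{A}1$.

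There is no real obstacle in this proposition: the only point requiring care is confirming that the distinctness demanded by the Vandermonde formula coincides exactly with the hypothesis of Assumption~$\mathcal{A}2$ on the $t_s$, rather than on some other derived quantity. The genuine work is deferred to the subsequent steps of the $\boldsymbol{\rho}$ argument — defining the moment sequence and matrix analogous to $\mathbf{M}$, introducing the companion polynomial analogous to $B$, and thereby recovering the $t_s$ and ultimately $\boldsymbol{\rho}$ from the marginal $\mathbb{P}_A$ — which will mirror the $\boldsymbol{\pi}$ case and rely on the size condition $N \geq 4Q$ (Assumption~$\mathcal{A}4$) to guarantee that enough moments are identifiable.
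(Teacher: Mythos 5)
Your proof is correct and matches the paper's reasoning: the paper justifies the proposition by the same appeal to the Vandermonde structure and the distinctness of the $t_s$ guaranteed by Assumption~$\mathcal{A}2$, merely omitting the explicit determinant formula $\prod_{1 \leq i < j \leq Q}(t_j - t_i)$ that you write out. Nothing is missing.
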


Let's define the joint probability of $i-1$ edges given the latent component of the view $1$:
\begin{align*}
& \mathbb{P}(A_{121} = 1, A_{341} = 1, \dots, A_{2i-1 \, 2i \,  1} = 1 \mid W_1 = s ) \\ 
& \qquad= \sum_l \sum_k \mathbb{P}(A_{121} = 1, A_{341} = 1, \dots, A_{2i-1 \, 2i \,  1} \mid Z_1 = k, Z_2=l, W_1 = s ) \ \pi_l \ \pi_k \\
& \qquad= \mathbb{P}(A_{341} = 1, \dots, A_{2i-1 \, 2i \,  1} = 1 \mid W_1 = s ) \times 
\sum_l \sum_k \mathbb{P}(A_{121} = 1 \mid Z_1 = k, Z_2=l, W_1 = s ) \ \pi_l \ \pi_k \\
& \qquad= \mathbb{P}(A_{341} = 1, \dots, A_{2i-1 \, 2i \,  1} = 1 \mid W_1 = s  ) \times \ t_s \\
& \qquad= (t_s)^{i-1}\,.
\end{align*}

Now,  
we  define $v_0 = 1$ and for $1 \leq i \leq 2Q-1$:
\begin{align*}
 v_i &= \mathbb{P}(A_{121} = 1, A_{341} = 1, \dots, A_{2i \, 2i+1 \,  1} = 1 ) \\
 &= \sum_s \mathbb{P}(A_{121} = 1, A_{341} = 1, \dots, A_{2i \, 2i+1 \,  1} = 1 \mid W_1 = s ) \ \rho_s \\
 &= \sum_s (t_s)^i \ \rho_s .
\end{align*}

By Assumption~$\mathcal{A}$4, $(v_i)_{i=1:(2Q-1)}$ are well defined. 
Hence, $v_0 = 1$ and 
 $(v_i)_{i=1:(2Q-1)}$ are known and defined from the marginal $\mathbb{P}_A$. 
As a consequence, 
$(v_i)_{i=1:(2Q-1)}$ are identifiable.
\medskip

Also, let $\tilde{\mathbf{M}}$ be the matrix of size $(Q+1) \times Q$ given by $\tilde{M}_{ij} = v_{i+j-2}$ for $1 \leq i \leq Q+1$ 
and $1 \leq j \leq Q$,  
and let $\tilde{\mathbf{M}}_{-i}$ denote the square matrix obtained by removing the row $i$ from $\tilde{\mathbf{M}}$. 
The coefficients of $\tilde{\mathbf{M}}_{-(K+1)}$,  for $1 \leq i,j \leq Q$, are:
\begin{align}
\tilde{M}_{ij} &  = \sum_{s =1}^Q (t_s)^{i-1} \ \rho_s\ (r_s)^{j-1}\, , 
\ \text{and } \nonumber \\
\tilde{\mathbf{M}}_{-(Q+1)} & = \mathbf{T} \ \operatorname{Diag}(\boldsymbol{\rho})\ \mathbf{T}^T\, . \label{eq:M_rho}
\end{align}

\begin{proposition}[Relations between $\mathbf{T}$, $\tilde{\mathbf{M}}$ and ${\boldsymbol{\rho}}$]
\label{prop:RMrho}
From Proposition~\ref{prop:T} and Equation~\eqref{eq:M_rho}, we can define
\begin{align*}
\tilde{\mathbf{M}}_{-(Q+1)} = \mathbf{T} \  \operatorname{Diag}(\boldsymbol{\rho})\ \mathbf{T}^T \, .
\end{align*}
\end{proposition}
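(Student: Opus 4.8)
The plan is to proceed exactly as in the $\boldsymbol{\pi}$ case (Proposition~\ref{prop:RMpi}), since the statement is the verbatim $\boldsymbol{\rho}$-analogue: it merely repackages the entrywise identity already recorded in Equation~\eqref{eq:M_rho} as a single matrix factorization. The proof is therefore a direct computation that checks the two sides agree entry by entry, so I would keep it short and purely algebraic.

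First I would recall the two ingredients. On one hand, $\tilde{\mathbf{M}}_{-(Q+1)}$ is the $Q \times Q$ matrix whose $(i,j)$ entry is $\tilde{M}_{ij} = v_{i+j-2}$ for $1 \leq i,j \leq Q$, where $v_m = \sum_{s=1}^Q (t_s)^m \rho_s$. I would note that this formula is consistent even at $m=0$, since $v_0 = \sum_s \rho_s = 1$ because $\boldsymbol{\rho}$ is a probability vector. On the other hand, $\mathbf{T}$ is the Vandermonde matrix with $T_{is} = (t_s)^{i-1}$ supplied by Proposition~\ref{prop:T}.

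Next I would compute the $(i,j)$ entry of the claimed product, using that the middle factor is diagonal:
\begin{align*}
\left(\mathbf{T}\, \operatorname{Diag}(\boldsymbol{\rho})\, \mathbf{T}^T\right)_{ij} = \sum_{s=1}^Q T_{is}\, \rho_s\, T_{js} = \sum_{s=1}^Q (t_s)^{i-1} \rho_s\, (t_s)^{j-1} = \sum_{s=1}^Q (t_s)^{i+j-2} \rho_s = v_{i+j-2} = \tilde{M}_{ij}.
\end{align*}
Since this holds for every $1 \leq i,j \leq Q$, the two $Q\times Q$ matrices coincide, which is the asserted factorization. In passing, the computation makes evident that the $(r_s)^{j-1}$ written in Equation~\eqref{eq:M_rho} should read $(t_s)^{j-1}$, so I would correct that typographical slip.

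There is essentially no obstacle here: the only care needed is the bookkeeping of the exponents $i+j-2 = (i-1)+(j-1)$ and the consistency of $v_0 = 1$ with the general formula, both of which mirror the $\boldsymbol{\pi}$ computation step for step. The statement is definitional once the Vandermonde structure of $\mathbf{T}$ is in hand; the substantive work, namely the invertibility of $\mathbf{T}$ and the eventual recovery $\operatorname{Diag}(\boldsymbol{\rho}) = \mathbf{T}^{-1}\tilde{\mathbf{M}}_{-(Q+1)}(\mathbf{T}^T)^{-1}$, is deferred to the remainder of the identifiability argument, exactly paralleling the $\boldsymbol{\pi}$ section.
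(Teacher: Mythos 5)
Your proof is correct and matches the paper's own treatment: the paper establishes the factorization by exactly this entrywise computation in Equation~\eqref{eq:M_rho}, from which the proposition is read off directly. Your observation that $(r_s)^{j-1}$ in that display should read $(t_s)^{j-1}$ is also right; it is a typographical slip carried over from the $\boldsymbol{\pi}$ case.
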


The correspondence of the different terms being established, we now need to prove the identifiability of $\boldsymbol{\rho}$, which means showing that $\tilde{\mathbf{M}}_{-(Q+1)}$ and $\mathbf{T}$ are identifiable.
\medskip

As for the previous proof regarding the identifiability of $t_s$, with $\delta_s = \operatorname{Det}(\tilde{\mathbf{M}}_{-s})$, we define a polynomial function $\tilde{B}$ such as: 
\begin{align*}
\tilde{B}(x) = \sum_{s=0}^Q (-1)^{Q+s} \ \delta_{s+1} \ x^s
\end{align*}
This polynomial function has again two important properties summarized in the following proposition.
\begin{proposition}
\label{prop:deg_root_B_tilde} Let $\operatorname{deg}(\tilde{B})$ denote the degree of $\tilde{B}$. We have  $\operatorname{deg}(\tilde{B}) = Q$ and
 $\tilde{B}(t_s) = 0$, for $ 1 \leq s  \leq Q$.
\end{proposition}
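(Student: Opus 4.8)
The plan is to replicate verbatim the two-part argument already carried out for Propositions~\ref{prop:deg_B} and~\ref{prop:root_B}, under the substitutions $B \leftrightarrow \tilde{B}$, $\mathbf{R}\leftrightarrow\mathbf{T}$, $\mathbf{M}\leftrightarrow\tilde{\mathbf{M}}$, $K\leftrightarrow Q$ and $(r_k)\leftrightarrow(t_s)$. The structural facts needed are the exact analogues of those used in the $\boldsymbol{\pi}$ case (Proposition~\ref{prop:T} for invertibility and Proposition~\ref{prop:RMrho} for the factorization), so only the combinatorial bookkeeping changes.

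For the degree claim I would read off the leading coefficient of $\tilde{B}$: the coefficient of $x^Q$ is $(-1)^{Q+Q}\delta_{Q+1}=\delta_{Q+1}=\operatorname{Det}(\tilde{\mathbf{M}}_{-(Q+1)})$. By Proposition~\ref{prop:RMrho} this equals $\operatorname{Det}(\mathbf{T}\,\operatorname{Diag}(\boldsymbol{\rho})\,\mathbf{T}^T)=\operatorname{Det}(\mathbf{T})^2\prod_{s=1}^Q\rho_s$, which is nonzero because $\mathbf{T}$ is invertible (Proposition~\ref{prop:T}, itself a consequence of Assumption~$\mathcal{A}2$) and the mixture weights $\rho_s$ are strictly positive. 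Hence $\operatorname{deg}(\tilde{B})=Q$.

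For the root claim I would form the $(Q+1)\times(Q+1)$ matrix $\tilde{\mathbf{N}}_s$ obtained by appending the column $\tilde{V}_s=[1,t_s,\dots,t_s^{\,Q}]^T$ to $\tilde{\mathbf{M}}$, and expand its determinant along that last column. The signed cofactors reproduce exactly $\sum_{l=0}^Q(-1)^{Q+l}\delta_{l+1}\,(t_s)^l=\tilde{B}(t_s)$. Then, writing each column of $\tilde{\mathbf{M}}$ via Equation~\eqref{eq:M_rho} as $\tilde{M}_{.j}=\sum_{s=1}^Q t_s^{\,j-1}\rho_s\tilde{V}_s$, every one of the $Q+1$ columns of $\tilde{\mathbf{N}}_s$ lies in the span of the $Q$ vectors $\{\tilde{V}_1,\dots,\tilde{V}_Q\}$ (the appended column being $\tilde{V}_s$ itself). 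Therefore $\operatorname{rank}(\tilde{\mathbf{N}}_s)<Q+1$, so $\det(\tilde{\mathbf{N}}_s)=\tilde{B}(t_s)=0$ for $1\leq s\leq Q$.

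Since the heavy lifting is done by the already-established Propositions~\ref{prop:T} and~\ref{prop:RMrho}, I do not anticipate a genuine obstacle; the only points deserving care are the sign tracking in the cofactor expansion and confirming that Assumptions~$\mathcal{A}2$ and~$\mathcal{A}4$ guarantee both that $\mathbf{T}$ is invertible and that the quantities $(v_i)_{i=1:(2Q-1)}$ — hence all $\delta_s$ — are well defined. When invoking the column decomposition I would read the exponent in Equation~\eqref{eq:M_rho} as $t_s^{\,j-1}$, so that the spanning-set argument is precisely the one validated in the $\boldsymbol{\pi}$ case.
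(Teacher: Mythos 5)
Your proposal is correct and matches the paper's intent exactly: the paper's own proof of this proposition consists of the single remark that it ``follows the same lines'' as Propositions~\ref{prop:deg_B} and~\ref{prop:root_B}, and you have simply written out that transposed argument in full (including correctly reading the exponent in Equation~\eqref{eq:M_rho} as $t_s^{\,j-1}$ despite the typo there). No gaps.
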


\begin{proof} The proof follow the same lines as  those of Proposition~\ref{prop:deg_B} and Proposition~\ref{prop:root_B}.
\label{proof:deg_root_B_tilde}
\end{proof}  

With $(t_s)_{s=1:Q}$ 
being the roots of $\tilde{B}$ (proposition \ref{prop:deg_root_B_tilde}), they are functions of $(\delta_s)_{s=1:Q+1}$ which are themselves derived from $\mathbb{P}_A$. 
Also, $(t_s)_{s=1:Q}$ can be expressed in a unique way (up to label switching) from $\mathbb{P}_A$, thus $(t_s)_{s=1:Q}$ are identifiable. In consequence, $\mathbf{T}$ is also identifiable by definition. 
Finally, since $\tilde{\mathbf{M}}_{-(Q+1)}$ and $\mathbf{T}$ are identifiable and invertible, 
\(\displaystyle\operatorname{Diag}(\boldsymbol{\rho})  = \mathbf{T}^{-1} \tilde{\mathbf{M}}_{-(Q+1)} (\mathbf{T}^T)^{-1} \). 
In conclusion, $\boldsymbol{\rho}$ is identifiable.

\subsection{Identifiability of \texorpdfstring{$\boldsymbol{\alpha}$}{}}

To establish the identifiability of $\boldsymbol{\alpha}$, the initial proof relies on matrix inversion. However, within our framework, tensor inversion is not as straightforward as uniqueness may not be inherently guaranteed.
To overcome this issue, we will reparametrize our problem to revert to a matrix-based formulation. To do this, we shift from utilizing the reference frame of nodes (individuals) to that of edges (connections).

%

First, the tensor $\mathbf{A}$ is transformed into a matrix $\tilde{\mathbf{A}}$ of size $\tilde{N} \times Q$, with $\tilde{N} = N(N-1)/2$ in an undirected framework. Each column corresponds to a vectorization of the upper triangular matrix  of each layer of $\mathbf{A}$. Thus, the edge $\mathbf{A}_{ijv}$ will be described by $\tilde{\mathbf{A}}_{\tilde{i}v}$, with $\tilde{i}$ being the index corresponding to the edge $(i,j)$ between nodes $i$ and $j$.

Then, we can map the clustering of observations into a clustering of edges, which results 
in a matrix $\tilde{\mathbf{Z}}$ of size $\tilde{N} \times \tilde{K}$, with $\tilde{K} = K(K+1)/2$. Each row of the matrix corresponds to the clustering of the pair of nodes making up the edges $1 \leq \tilde{i} \leq \tilde{N}$.

Also, we denote $\tilde{\boldsymbol{\pi}}$ the proportion vector of pairs such that
$\tilde{\boldsymbol{\pi}}_{\tilde{k}} = \boldsymbol{\pi}_{k} \boldsymbol{\pi}_{l}$, for $1 \leq \tilde{k} \leq \tilde{K}$, 
%
where $1 \leq k,l \leq K$ are the initial clusters corresponding to the index of the $\tilde{k}$ in the reparametrization.


Finally, $\tilde{\boldsymbol{\alpha}}$ is a $\tilde{K} \times Q$ matrix whose rows represents the clusters related to the pairs of nodes while the columns are the components. The terms of $\tilde{\boldsymbol{\alpha}}$ represent the probabilities of connection between these clusters and components.  
\medskip

Now, let's define a function $\phi$ such as:
\begin{align*}
\phi(\mathbf{A},\mathbf{Z},\boldsymbol{\pi},\boldsymbol{\alpha}) = (\tilde{\mathbf{A}}, \tilde{\mathbf{Z}},\tilde{\boldsymbol{\pi}},\tilde{\boldsymbol{\alpha}})\,.
\end{align*}

The function is bijective for $\mathbf{A}$ and $\boldsymbol{\alpha}$ and injective for $\mathbf{Z}$ and $\boldsymbol{\pi}$ ; 
The bijective relationship involving the parameter $\boldsymbol{\alpha}$ and $\tilde{\boldsymbol{\alpha}}$ enables the establishment of identifiability. The aim is therefore to show the identifiability of $\tilde{\boldsymbol{\alpha}}$.

\begin{remark}
These transformations map our problem into a LBM framework (see Figure~\ref{fig: proof lbm}). Hence, the identifiability of $\tilde{\boldsymbol{\alpha}}$ will be developed accordingly.
\end{remark}

\begin{figure}[!ht]
     \centering
     \begin{subfigure}[b]{0.49\textwidth}
         \centering
         \includegraphics[width=\textwidth]{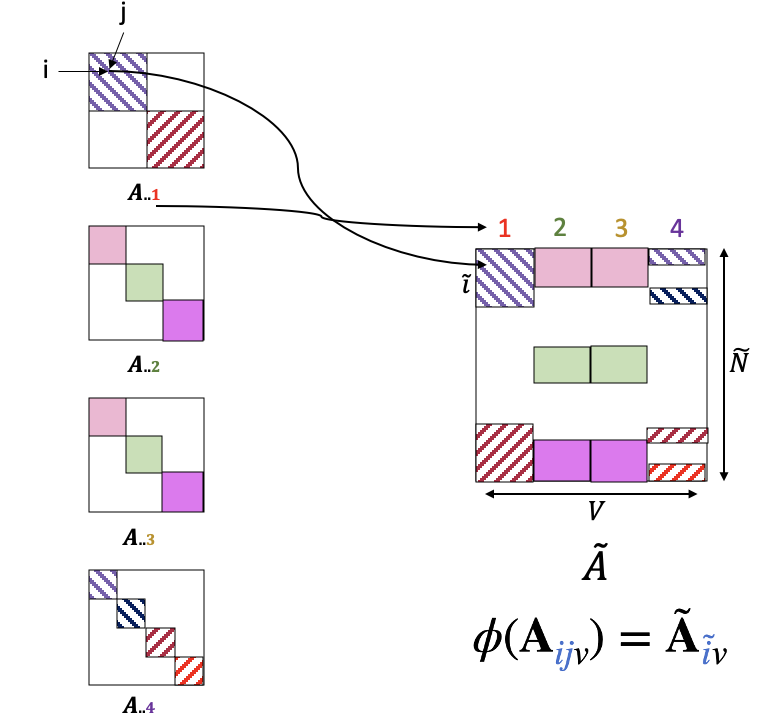}
     \end{subfigure}
     \hfill
     \begin{subfigure}[b]{0.49\textwidth}
         \centering
         \includegraphics[width=\textwidth]{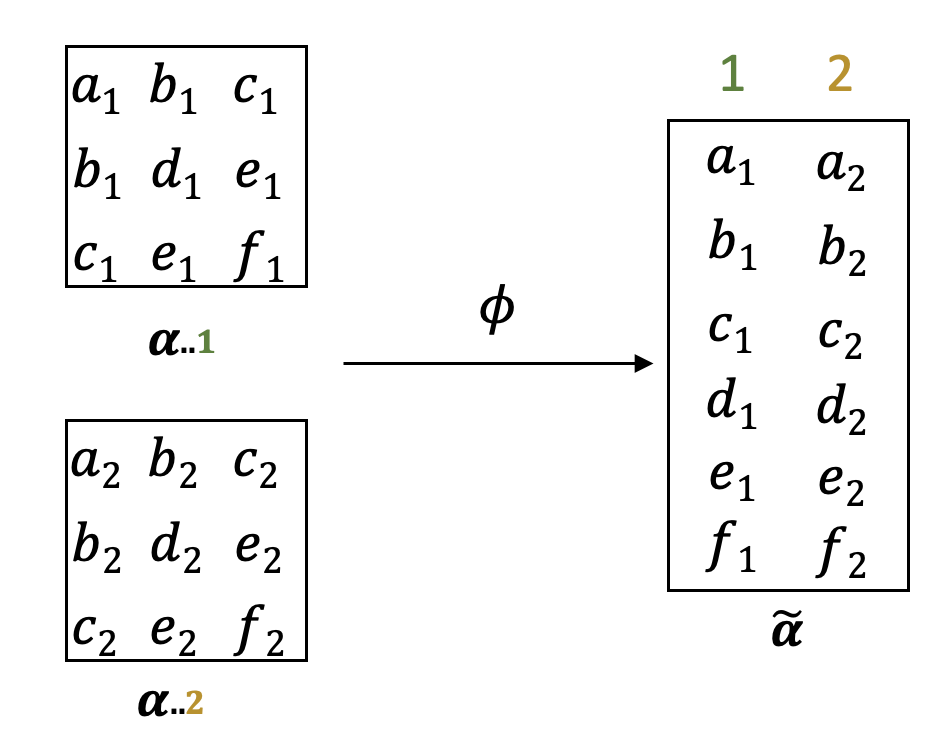}
     \end{subfigure}
     \caption{Illustration of transformation on $\mathbf{A}$ and $\boldsymbol{\alpha}$. }
     
     \label{fig: proof lbm}
\end{figure}

The proof is identical to the ones of Section~\ref{sec:id_pi} 
For any \(1 \leq \tilde{k} \leq \tilde{K}\) and $\forall i,j$, let's define:
\begin{align*}
\tilde{r}_{\tilde{k}} &= \mathbb{P}(\tilde{A}_{ij} = 1 \mid \tilde{Z}_i = \tilde{k} ) \\
&= \sum_s \mathbb{P}(\tilde{A}_{ij} = 1 \mid \tilde{Z}_i = \tilde{k}, W_v = s ) \ \rho_s \\
&=  \sum_s \tilde{\alpha}_{\tilde{k}s} \rho_s \\
&=   (\tilde{\boldsymbol{\alpha}} \boldsymbol{\rho})_{\tilde{k}}\,. 
\end{align*}

\begin{proposition}[Invertibility of $\tilde{\mathbf{R}}$]
\label{prop:R_tilde}
Let  $\tilde{\mathbf{R}}$ denote a Vandermonde matrix of size $\tilde{K} \times \tilde{K}$ such as 
\(\displaystyle \tilde{R}_{i\tilde{k}} = (\tilde{r}_{\tilde{k}})^{i-1} \text{, for } 1\leq i \leq \tilde{K} \text{ and }  1\leq \tilde{k}\leq \tilde{K} \). 
$\tilde{\mathbf{R}}$ is invertible, since the coordinates of $r$ are all different according to Assumption~$\mathcal{A}5$.
\end{proposition}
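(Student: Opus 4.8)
The plan is to reduce the claim to the classical criterion that a Vandermonde matrix is invertible exactly when its generating nodes are pairwise distinct, mirroring Propositions~\ref{prop:R} and~\ref{prop:T}. First I would recall that for generators $\tilde{r}_1,\dots,\tilde{r}_{\tilde{K}}$ the matrix $\tilde{\mathbf{R}}$ with entries $\tilde{R}_{i\tilde{k}} = (\tilde{r}_{\tilde{k}})^{i-1}$ has determinant
\begin{align*}
\det(\tilde{\mathbf{R}}) = \prod_{1 \leq \tilde{k} < \tilde{k}' \leq \tilde{K}} \left(\tilde{r}_{\tilde{k}'} - \tilde{r}_{\tilde{k}}\right)\,,
\end{align*}
so that $\det(\tilde{\mathbf{R}}) \neq 0$ if and only if the coordinates $(\tilde{r}_{\tilde{k}})_{\tilde{k}=1:\tilde{K}}$ are all different. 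Hence the whole argument collapses to verifying pairwise distinctness of the generators.

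The key step is then to express this distinctness in terms of Assumption~$\mathcal{A}5$. By the definition given just above, $\tilde{r}_{\tilde{k}} = (\tilde{\boldsymbol{\alpha}}\boldsymbol{\rho})_{\tilde{k}} = \sum_s \tilde{\alpha}_{\tilde{k}s}\,\rho_s$, and through the reparametrization $\phi$ each edge-cluster index $\tilde{k}$ corresponds to a unique node-cluster pair $(k,l)$ with $\tilde{\alpha}_{\tilde{k}s} = \alpha_{kls}$, so that $\tilde{r}_{\tilde{k}} = \boldsymbol{\alpha}_{kl.}\boldsymbol{\rho}$. Consequently the list $(\tilde{r}_{\tilde{k}})_{\tilde{k}=1:\tilde{K}}$ is precisely the list $(\boldsymbol{\alpha}_{kl.}\boldsymbol{\rho})_{k,l=1:K}$, which Assumption~$\mathcal{A}5$ declares to be pairwise distinct. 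This yields $\det(\tilde{\mathbf{R}}) \neq 0$ and therefore the invertibility of $\tilde{\mathbf{R}}$.

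I expect the only delicate point to be the bookkeeping of the correspondence $\tilde{k} \leftrightarrow (k,l)$ induced by $\phi$: one must confirm that this map enumerates the $\tilde{K} = K(K+1)/2$ unordered pairs without repetition, so that the multiset of generators coincides exactly with $(\boldsymbol{\alpha}_{kl.}\boldsymbol{\rho})_{k,l=1:K}$ and $\mathcal{A}5$ then applies entry by entry. Beyond this indexing verification the statement is immediate from the Vandermonde determinant formula, so I do not anticipate any genuine analytic obstacle.
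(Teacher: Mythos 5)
Your argument is exactly the one the paper uses (implicitly, since the proposition's justification is given in a single line): invertibility of a Vandermonde matrix follows from the determinant formula once the generators $\tilde{r}_{\tilde{k}} = (\tilde{\boldsymbol{\alpha}}\boldsymbol{\rho})_{\tilde{k}}$ are identified, via the correspondence $\tilde{k} \leftrightarrow (k,l)$, with the quantities $(\boldsymbol{\alpha}_{kl.}\boldsymbol{\rho})_{k,l=1:K}$ that Assumption~$\mathcal{A}5$ declares pairwise distinct. Your version merely spells out the determinant computation and the indexing bookkeeping that the paper leaves implicit; the approach is the same and correct.
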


The rest of the proof is identical to the one of Section~\ref{sec:id_pi} so that it can be show that $\mathbf{\tilde{R}}$ is identifiable and so $\tilde{\boldsymbol{\pi}}_{\tilde{k}}$.
\medskip

We now focus on the the identifiability of $\tilde{\boldsymbol{\alpha}}$. Let  $\mathbf{U}$ be a matrix of size  $\tilde{K} \times Q$ such that the $(i,j)$ entry of is the joint probability of having $i$ connections in the first row and $j-1$ connections in the first column:
\begin{align}
\mathbf{U}_{ij} &= \mathbb{P}( \tilde{\mathbf{A}}_{11}=1,\tilde{\mathbf{A}}_{12}=1,\dots,\tilde{\mathbf{A}}_{1i}=1,\tilde{\mathbf{A}}_{21}=1,\dots,\tilde{\mathbf{A}}_{j1}=1 ) \notag \\
&= \sum_{\tilde{k}} \sum_q \tilde{{\pi}}_{\tilde{k}} \, \rho_s \,  \mathbb{P}( \tilde{\mathbf{A}}_{11}=1,\tilde{\mathbf{A}}_{12}=1,\dots,\tilde{\mathbf{A}}_{1i}=1,\tilde{\mathbf{A}}_{21}=1,\dots,\tilde{\mathbf{A}}_{j1}=1 \mid \tilde{\mathbf{Z}}_1 = \tilde{k}, \mathbf{W}_1 = s ) \notag \\
&= \sum_{\tilde{k}} \sum_q \tilde{{\pi}}_{\tilde{k}} \, \rho_s \,  \tilde{{\alpha}}_{\tilde{k}s} \, \mathbb{P}(\tilde{\mathbf{A}}_{12}=1,\dots,\tilde{\mathbf{A}}_{1i}=1,\tilde{\mathbf{A}}_{21}=1,\dots,\tilde{\mathbf{A}}_{j1}=1 \mid \tilde{\mathbf{Z}}_1 = \tilde{k}, \mathbf{W}_1 = s ) \notag \\
&= \sum_{\tilde{k}} \sum_q \tilde{{\pi}}_{\tilde{k}} \, \rho_s \,  \tilde{{\alpha}}_{\tilde{k}s} \, \tilde{r}_{\tilde{k}}^{i-1} t_s^{j-1} . 
\label{eq:U}
\end{align}
%


\begin{proposition}[Relations between  $\tilde{\mathbf{R}}$, ${\mathbf{T}}$,  $\mathbf{U}$, $\tilde{\boldsymbol{\alpha}}$, ${\boldsymbol{\tilde{\pi}}}$ and ${\boldsymbol{\boldsymbol{\rho}}}$]

\label{prop:RMrho2}
From Proposition~\ref{prop:R_tilde} and Equation~\eqref{eq:U}, 
we can define 
\(\mathbf{U} = \tilde{\mathbf{R}}  \operatorname{Diag}(\tilde{\boldsymbol{\pi}}) \, \tilde{\boldsymbol{\alpha}} \, \operatorname{Diag}({\boldsymbol{\rho}}) \, \mathbf{T}^T
\), 
with ${\mathbf{U}}$, $\tilde{\mathbf{R}}$, $\operatorname{Diag}(\tilde{\boldsymbol{\pi}})$, $\operatorname{Diag}({\boldsymbol{\rho}})$ and ${\mathbf{T}}$ being invertible. 
Therefore, 
\begin{align*}
\tilde{\boldsymbol{\alpha}} = \tilde{\mathbf{R}}^{-1} \operatorname{Diag}(\tilde{\boldsymbol{\pi}})^{-1} \, \mathbf{U} \, \operatorname{Diag}(\rho)^{-1} \, (\mathbf{T}^T)^{-1}\,.
\end{align*}
\end{proposition}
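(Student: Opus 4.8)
The plan is to read the claimed factorization straight off the expansion \eqref{eq:U} and then isolate $\tilde{\boldsymbol{\alpha}}$ by inverting only the square matrices that flank it. First I would reconcile the summation indices in \eqref{eq:U} (writing the component index uniformly as $s$, so that $\rho_s$, $\tilde{\alpha}_{\tilde{k}s}$ and $t_s^{\,j-1}$ all refer to the same component $s$), obtaining
\begin{align*}
\mathbf{U}_{ij} = \sum_{\tilde{k}=1}^{\tilde{K}} \sum_{s=1}^{Q} \tilde{r}_{\tilde{k}}^{\,i-1}\, \tilde{\pi}_{\tilde{k}}\, \tilde{\alpha}_{\tilde{k}s}\, \rho_s\, t_s^{\,j-1}.
\end{align*}
I would then compute the $(i,j)$ entry of $\tilde{\mathbf{R}}\,\operatorname{Diag}(\tilde{\boldsymbol{\pi}})\,\tilde{\boldsymbol{\alpha}}\,\operatorname{Diag}(\boldsymbol{\rho})\,\mathbf{T}^T$, using $\tilde{R}_{i\tilde{k}} = \tilde{r}_{\tilde{k}}^{\,i-1}$ from Proposition~\ref{prop:R_tilde} and $(\mathbf{T}^T)_{sj} = T_{js} = t_s^{\,j-1}$ from Proposition~\ref{prop:T}. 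Carrying the two diagonal matrices and $\tilde{\boldsymbol{\alpha}}$ through the chained sums reproduces exactly the double sum above, so the factorization $\mathbf{U} = \tilde{\mathbf{R}}\,\operatorname{Diag}(\tilde{\boldsymbol{\pi}})\,\tilde{\boldsymbol{\alpha}}\,\operatorname{Diag}(\boldsymbol{\rho})\,\mathbf{T}^T$ holds entrywise; this is the only genuine computation and it is routine.

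Next I would verify that each square factor surrounding $\tilde{\boldsymbol{\alpha}}$ is invertible. The Vandermonde matrix $\tilde{\mathbf{R}}$ is invertible by Proposition~\ref{prop:R_tilde}, since its nodes $\tilde{r}_{\tilde{k}} = (\tilde{\boldsymbol{\alpha}}\boldsymbol{\rho})_{\tilde{k}}$ equal the $(\boldsymbol{\alpha}_{kl.}\boldsymbol{\rho})$ and are distinct under Assumption~$\mathcal{A}5$; similarly $\mathbf{T}$ is invertible by Proposition~\ref{prop:T}, its nodes $t_s = \boldsymbol{\pi}^T\boldsymbol{\alpha}_{..s}\boldsymbol{\pi}$ being distinct under Assumption~$\mathcal{A}2$. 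The diagonal matrices $\operatorname{Diag}(\tilde{\boldsymbol{\pi}})$ and $\operatorname{Diag}(\boldsymbol{\rho})$ are invertible because their entries are strictly positive, recalling that $\tilde{\pi}_{\tilde{k}} = \pi_k \pi_l$.

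I would then use that $\mathbf{U}$ is fixed by the marginal $\mathbb{P}_A$: each $\mathbf{U}_{ij}$ is the joint probability of a prescribed edge pattern, hence identifiable. Since $\tilde{\mathbf{R}}$, $\operatorname{Diag}(\tilde{\boldsymbol{\pi}})$, $\operatorname{Diag}(\boldsymbol{\rho})$ and $\mathbf{T}$ were already shown identifiable in the $\boldsymbol{\pi}$ and $\boldsymbol{\rho}$ arguments (Section~\ref{sec:id_pi} and the analogous step for $\tilde{\boldsymbol{\pi}}$, $\tilde{\mathbf{R}}$) and are invertible, I left-multiply by $\tilde{\mathbf{R}}^{-1}\operatorname{Diag}(\tilde{\boldsymbol{\pi}})^{-1}$ and right-multiply by $\operatorname{Diag}(\boldsymbol{\rho})^{-1}(\mathbf{T}^T)^{-1}$ to obtain
\begin{align*}
\tilde{\boldsymbol{\alpha}} = \tilde{\mathbf{R}}^{-1}\,\operatorname{Diag}(\tilde{\boldsymbol{\pi}})^{-1}\,\mathbf{U}\,\operatorname{Diag}(\boldsymbol{\rho})^{-1}\,(\mathbf{T}^T)^{-1},
\end{align*}
which determines $\tilde{\boldsymbol{\alpha}}$ uniquely and hence proves it identifiable. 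Finally, because the reparametrization $\phi$ is bijective in the $\boldsymbol{\alpha}\leftrightarrow\tilde{\boldsymbol{\alpha}}$ coordinate, identifiability transfers back to $\boldsymbol{\alpha}$; combined with the earlier identifiability of $\boldsymbol{\pi}$ and $\boldsymbol{\rho}$, this yields identifiability of $\boldsymbol{\Theta}=(\boldsymbol{\pi},\boldsymbol{\rho},\boldsymbol{\alpha})$.

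The main obstacle I anticipate is purely one of bookkeeping around the rectangular shapes: $\mathbf{U}$ and $\tilde{\boldsymbol{\alpha}}$ are both $\tilde{K}\times Q$ with $\tilde{K}\neq Q$ in general, so the word \emph{invertible} attached to $\mathbf{U}$ in the proposition must be read with care, since $\mathbf{U}$ is never actually inverted. The diligence required is to track dimensions so that only the genuinely square flanking matrices ($\tilde{\mathbf{R}}$, the two diagonals, and $\mathbf{T}$) are inverted, keeping every product conformable and producing a well-defined $\tilde{K}\times Q$ expression for $\tilde{\boldsymbol{\alpha}}$; reconciling the $q$-versus-$s$ index in \eqref{eq:U} beforehand is the only other point that needs to be settled cleanly.
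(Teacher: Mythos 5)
Your proof is correct and follows essentially the same route as the paper: read the factorization $\mathbf{U} = \tilde{\mathbf{R}}\,\operatorname{Diag}(\tilde{\boldsymbol{\pi}})\,\tilde{\boldsymbol{\alpha}}\,\operatorname{Diag}(\boldsymbol{\rho})\,\mathbf{T}^T$ entrywise off Equation~\eqref{eq:U}, invoke invertibility of the square flanking matrices via Propositions~\ref{prop:R_tilde} and~\ref{prop:T} and positivity of the mixture proportions, and solve for $\tilde{\boldsymbol{\alpha}}$. Your observation that $\mathbf{U}$ is $\tilde{K}\times Q$ and hence is never literally inverted is a fair reading of a loose phrase in the proposition's statement, and the rest of your bookkeeping (index reconciliation, conformability of the product) matches the paper's intent.
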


In addition to Proposition~\ref{prop:RMrho2}, $\mathbf{U}$ being defined from $\mathbb{P}_{\tilde{\mathbf{A}}}$,  all its coefficients are identifiable. As a consequence, $\tilde{\boldsymbol{\alpha}}$ is identifiable.  In conclusion, ${\boldsymbol{\alpha}} = \phi^{-1}(\tilde{\boldsymbol{\alpha}})$ is identifiable.

\newpage
\section{Details of VBEM algorithm}

\subsection{Variational parameters of clustering \texorpdfstring{$\tau_{ik}$}{tau\_{ik}}}

The optimal approximation for $q(\bZ_i)$ is 
\begin{equation*}
q(\bZ_i) = \mathcal{M}(\bZ_i; (\tau_{i1},\dots,\tau_{iK})),
\end{equation*}

where $\tau_{ik}$ is the probability of node $i$ to belong to class $k$. It satisfies the relation

\begin{equation*}
\tau_{ik} \propto  e^{\psi(\beta_k) - \psi(\sum_{k'} \beta_{k'}) }   \prod_{j \ne i}^N  \prod_{l=1}^K   \prod_{v=1}^V  \prod_{s=1}^Q e ^{\tau_{jl} \, \nu_{vs}  \Big[ A_{ijv} \Big( 
\psi(\eta_{kls}) - \psi(\xi_{kls})  
\Big) + \psi(\xi_{kls} ) - \psi( \eta_{kls} + \xi_{kls} ) \Big]},
\end{equation*}
where $\psi$ is digamma function. Distribution $q(\bZ)$ is optimized with a fixed point algorithm.

\begin{proof}
According to the model, the optimal distribution $q(\bZ_i$) is given by
\begin{equation*}
\begin{aligned}
\log q(\bZ_i) &= \mathbb{E}_{\bZ^{\backslash i},\alpha,\pi,W,\rho} \left[ \log \p (\bA,\bZ,\bW,\balpha,\bpi,\brho) \right] \\
&\propto  \mathbb{E}_{\bZ^{\backslash i},\balpha,\bW} [ \log \mathbb{P} (\bA|\bZ,\bW,\balpha)] + \mathbb{E}_{\bZ^{\backslash i},\bpi} [ \log \mathbb{P} (\bZ|\pi)] \\
&\propto  \mathbb{E}_{\bZ^{\backslash i},\balpha,\bW} \left[ \sum_{i'=1, j > i'}^N  \sum_{k,l=1}^K   \sum_{v=1}^V  \sum_{s=1}^Q \mathbb{1}_{\bZ_{i'},\bZ_j,\bW_v}  \Big( \log \mathbb{P}(A_{i'jv} | Z_{i'} = k,Z_j = l,W_v = s ,\balpha) \Big) \right] \\
&\quad + \mathbb{E}_{\bZ^{\backslash i},\bpi} \left[ \sum_{i'=1}^N  \sum_{k}^K  \log \mathbb{P}(Z_{i'}=k|\bpi) \right]\\
&\propto  \sum_k \mathbb{1}_{Z_i=k} \Big\{ \mathbb{E}_{\bpi}[\log(\pi_k)] \ + \sum_{j \ne i}^N  \sum_{l=1}^K   \sum_{v=1}^V  \sum_{s=1}^Q \tau_{jl} \, \nu_{vs}  \ \mathbb{E}_{\balpha} \Big[ A_{ijv} \log(\alpha_{kls}) + (1- A_{ijv}) \log(1- \alpha_{kls}) \Big] \Big\} \\
\end{aligned}.
\end{equation*}
Remember that :
\begin{itemize}
    \item $\bpi \sim Dir(\bpi;\bbeta)$, so $\pi_k \sim Beta(\pi_k;\beta_k,\sum_{k'} \beta_{k'} - \beta_k)$ ;
    \item $\mathbb{E}_{\bpi}[\log(\pi_k)] = \psi(\beta_k) - \psi(\sum_{k'} \beta_{k'})$;
    \item $q(\alpha_{kls}) = Beta(\alpha_{kls}; \eta_{kls},\xi_{kls})$ ;
    \item $\mathbb{E}_{\alpha_{kls}}[\log(\alpha_{kls})] = \psi(\eta_{kls}) - \psi(\xi_{kls}  + \eta_{kls})$;
    \item $\mathbb{E}_{\alpha_{kls}}[\log(1- \alpha_{kls})] = \psi(\xi_{kls} ) - \psi( \eta_{kls} + \xi_{kls}).$
\end{itemize}
In consequence,
\begin{equation*}
\begin{aligned}
\log q(Z_i) &\propto  \sum_k \mathbb{1}_{Z_i=k} \Big\{ \psi(\beta_k) - \psi(\sum_{k'} \beta_{k'}) + \ 
\sum_{j \ne i}^N  \sum_{l=1}^K   \sum_{v=1}^V  \sum_{s=1}^Q \tau_{jl} \, \nu_{vs}  \Big[ A_{ijv} \Big( 
( \psi(\eta_{kls}) - \psi(\xi_{kls}  + \eta_{kls})) \\
&\quad - (\psi(\xi_{kls} ) - \psi( \eta_{kls} + \xi_{kls} )) \Big) + \psi(\xi_{kls} ) - \psi( \eta_{kls} + \xi_{kls} ) \Big] \Big\} \\
&= \sum_k \mathbb{1}_{Z_i=k} \Big\{ \psi(\beta_k) - \psi(\sum_{k'} \beta_{k'}) \ +
\sum_{j \ne i}^N  \sum_{l=1}^K   \sum_{v=1}^V  \sum_{s=1}^Q \tau_{jl} \, \nu_{vs}
\Big[ A_{ijv} \Big( 
\psi(\eta_{kls}) - \psi(\xi_{kls} ) 
\Big) \\
&\quad + \psi(\xi_{kls} ) - \psi( \eta_{kls} + \xi_{kls} ) \Big] \Big\} .
\end{aligned}
\end{equation*}
We can therefore deduce that, by applying the exponential :
\begin{equation*}
\begin{aligned}
q(Z_{i}=k) &\propto  e^{\psi(\beta_k) - \psi(\sum_{k'} \beta_{k'}) +\sum_{j \ne i}^N  \sum_{l=1}^K   \sum_{v=1}^V  \sum_{s=1}^Q \tau_{jl} \, \nu_{vs}   
\Big[ A_{ijv} \Big( 
\psi(\eta_{kls}) - \psi(\xi_{kls}  
\Big) + \psi(\xi_{kls} ) - \psi( \eta_{kls} + \xi_{kls} ) \Big] }\\
& = e^{\psi(\beta_k) - \psi(\sum_{k'} \beta_{k'}) }   \prod_{j \ne i}^N  \prod_{l=1}^K   \prod_{v=1}^V  \prod_{s=1}^Q e ^{\tau_{jl} \, \nu_{vs}  \Big[ A_{ijv} \Big( 
\psi(\eta_{kls}) - \psi(\xi_{kls})  
\Big) + \psi(\xi_{kls} ) - \psi( \eta_{kls} + \xi_{kls} ) \Big]} \\
\end{aligned}.
\end{equation*}
Therefore,
\begin{equation*}
\tau_{ik} \propto  e^{\psi(\beta_k) - \psi(\sum_{k'} \beta_{k'}) }   \prod_{j \ne i}^N  \prod_{l=1}^K   \prod_{v=1}^V  \prod_{s=1}^Q e ^{\tau_{jl} \, \nu_{vs}  \Big[ A_{ijv} \Big( 
\psi(\eta_{kls}) - \psi(\xi_{kls})  
\Big) + \psi(\xi_{kls} ) - \psi( \eta_{kls} + \xi_{kls} ) \Big]}.
\end{equation*}
So,
\begin{equation*}
q(\bZ_i) = \mathcal{M}(\bZ_i; (\tau_{i1},\dots,\tau_{iK})).
\end{equation*}
\end{proof}


\subsection{Variational parameters of component membership \texorpdfstring{$\nu_{vs}$}{nu\_vs}}

The optimal approximation for $q(\bW_v)$ is 

\begin{equation*}
q(W_v) = \mathcal{M}(W_v; (\nu_{v1},\dots,\nu_{vQ})),
\end{equation*}

with

\begin{equation*}
\begin{aligned}
\nu_{vs} &\propto  e^{\psi(\theta_s) - \psi(\sum_{s'} \theta_{s'}) }   \prod_{i \ne j}^N  \prod_{k\ne l}^K  e ^{\tau_{ik} \, \tau_{jl}  \Big[ A_{ijv} \Big( 
\psi(\eta_{kls}) - \psi(\xi_{kls}  
\Big) + \psi(\xi_{kls} ) - \psi( \eta_{kls} + \xi_{kls} ) \Big]}  \\
&\quad \prod_{k}^K \prod_{i<j}^N   e ^{\tau_{ik} \, \tau_{jk}  \Big[ A_{ijv} \Big( 
\psi(\eta_{kks}) - \psi(\xi_{kks}  
\Big) + \psi(\xi_{kks} ) - \psi( \eta_{kks} + \xi_{kks} ) \Big] }.
\end{aligned}
\end{equation*}
$\nu_{vs}$ is the probability of layer $v$ to belong to component $s$.

\begin{proof}
As previously mentioned, in accordance with the principles of variational Bayes, the optimal probability distribution can be expressed as follows:
\begin{equation*}
\begin{aligned}
\log q(\bW_v) &= \mathbb{E}_{\bW^{\backslash v},\balpha,\bpi,\bZ,\brho} [ \log \mathbb{P} (\bA,\bZ,\bW,\balpha,\bpi,\brho)] \\
&\propto  \mathbb{E}_{\bW^{\backslash v},\balpha,\bZ} [ \log \mathbb{P} (\bA|\bZ,\bW,\balpha)] + \mathbb{E}_{\bW^{\backslash v},\brho} \left[ \ln \mathbb{P} (\bW|\rho) \right] \\
&\propto    \mathbb{E}_{W^{\backslash v},\balpha,\bZ} \big[ \sum_{i=1, j > i}^N  \sum_{k,l=1}^K   \sum_{v=1}^V  \sum_{s=1}^Q \mathbb{1}_{\bZ_i,\bZ_j,W_v}  \Big( \log \mathbb{P}(A_{ijv} | Z_i = k,Z_j = l,W_v = s ,\balpha) \Big)] \\
&+ \mathbb{E}_{\bW^{\backslash v},\brho} [ \sum_{v=1}^V  \sum_{s}^Q  \log \mathbb{P}(W_v=s|\brho) \big]\\
&\propto  \sum_q \mathbb{1}_{W_v=s} \Big\{ \mathbb{E}_{\rho}[\log(\rho_s)] \ + \sum_{k\ne l}^K \sum_{i=1,j \ne i}^N      \tau_{ik} \, \tau_{jl}  \  \mathbb{E}_{\balpha} \Big[ A_{ijv} \log(\alpha_{kls}) + (1- A_{ijv}) \log(1- \alpha_{kls}) \Big] \\
&+ \sum_{k}^K \sum_{i=1,i < j }^N     \tau_{ik} \, \tau_{jk}  \ \mathbb{E}_{\balpha} \Big[ A_{ijv} \log(\alpha_{kks}) + (1- A_{ijv}) \log(1- \alpha_{kks}) \Big]
\Big\}.
\end{aligned}
\end{equation*}
Reminder : 
\begin{itemize}
    \item $\rho \sim Dir(\pi;\theta)$,  so $\rho_s \sim Beta(\rho_s;\theta_s,\sum_{s'} \theta_{s'} - \theta_s)$;
    \item $\mathbb{E}_{\rho}[\log(\rho_s)] = \psi(\theta_s) - \psi(\sum_{s'} \theta_{s'})$ .
\end{itemize} 
Hence,
\begin{equation*}
\begin{aligned}
\log q(W_v) &\propto  \sum_q \mathbb{1}_{W_v=q} \Big\{ \psi(\theta_s) - \psi(\sum_{s'} \theta_{s'}) \ +
\sum_{i=1,j > i}^N  \sum_{k,l=1}^K  \tau_{ik} \, \tau_{jl}  \Big[ A_{ijv} \Big( 
( \psi(\eta_{kls}) - \psi(\xi_{kls}  + \eta_{kls})) \\
&\quad - (\psi(\xi_{kls} ) - \psi( \eta_{kls} + \xi_{kls} )) \Big) + \psi(\xi_{kls} ) - \psi( \eta_{kls} + \xi_{kls} ) \Big] \Big\} \\
&= \sum_q \mathbb{1}_{W_v=q} \Big\{ \psi(\theta_s) - \psi(\sum_{s'} \theta_{s'}) \ +
\sum_{i=1,j > i}^N  \sum_{k,l=1}^K   \tau_{ik} \, \tau_{jl}   
\Big[ A_{ijv} \Big( 
\psi(\eta_{kls}) - \psi(\xi_{kls})  
\Big) \\
&\quad + \psi(\xi_{kls} ) - \psi( \eta_{kls} + \xi_{kls} ) \Big] \Big\}.
\end{aligned}
\end{equation*}
Consequently,
\begin{equation*}
\begin{aligned}
q(W_{v}=q) &\propto  e^{\psi(\theta_s) - \psi(\sum_{s'} \theta_{s'}) \ +
\sum_{i=1,j > i}^N  \sum_{k,l=1}^K   \tau_{ik} \, \tau_{jl}   
\Big[ A_{ijv} \Big( 
\psi(\eta_{kls}) - \psi(\xi_{kls}  
\Big) + \psi(\xi_{kls} ) - \psi( \eta_{kls} + \xi_{kls} ) \Big] }\\
& = e^{\psi(\theta_s) - \psi(\sum_{s'} \theta_{s'}) }  \prod_{k\ne l}^K  \prod_{i=1,j \ne i}^N   e ^{\tau_{ik} \, \tau_{jl}  \Big[ A_{ijv} \Big( 
\psi(\eta_{kls}) - \psi(\xi_{kls}  
\Big) + \psi(\xi_{kls} ) - \psi( \eta_{kls} + \xi_{kls} ) \Big] }\\
& \quad \prod_{k}^K \prod_{i<j}^N   e ^{\tau_{ik} \, \tau_{jk}  \Big[ A_{ijv} \Big( 
\psi(\eta_{kks}) - \psi(\xi_{kks}  
\Big) + \psi(\xi_{kks} ) - \psi( \eta_{kks} + \xi_{kks} ) \Big] }.
\end{aligned}
\end{equation*}
So,
\begin{equation*}
\begin{aligned}
\nu_{vs} &\propto  e^{\psi(\theta_s) - \psi(\sum_{s'} \theta_{s'}) }   \prod_{i \ne j}^N  \prod_{k\ne l}^K  e ^{\tau_{ik} \, \tau_{jl}  \Big[ A_{ijv} \Big( 
\psi(\eta_{kls}) - \psi(\xi_{kls})  
\Big) + \psi(\xi_{kls} ) - \psi( \eta_{kls} + \xi_{kls} ) \Big]}  \\
&\quad \prod_{k}^K \prod_{i<j}^N   e ^{\tau_{ik} \, \tau_{jk}  \Big[ A_{ijv} \Big( 
\psi(\eta_{kks}) - \psi(\xi_{kks}  
\Big) + \psi(\xi_{kks} ) - \psi( \eta_{kks} + \xi_{kks} ) \Big] }
\end{aligned},
\end{equation*}
and 
\begin{equation*}
q(W_v) = \mathcal{M}(W_v; (\nu_{v1},\dots,\nu_{vQ})).
\end{equation*}
\end{proof}
\subsection{Optimization of \texorpdfstring{$q(\bpi)$ ($\beta_k$)}{q(pi) (beta\_k)}}
Due to the selection of prior distributions, the distribution $q(\bpi)$ remains within the same family of distributions as the prior distribution $\p(\bpi)$.

\begin{equation*}
q(\boldsymbol{\pi}) = \mathrm{Dir}(\boldsymbol{\pi};\boldsymbol{\beta}),
\end{equation*}
with 

\begin{equation*}
\beta_k = \beta_{k}^0 +\sum_{i}^N \tau_{i k}.
\end{equation*}

\begin{proof}
The optimal probability distribution can be formulated in the following manner:
\begin{equation*}
\begin{aligned}
\log q(\boldsymbol{\pi}) &\propto \mathbb{E}_{\bW,\balpha,\bZ,\brho} [ \log \mathbb{P} (\bA,\bZ,\bW,\balpha,\bpi,\brho)] \\
&\propto \mathbb{E}_{\bZ}[\log \p(\mathbf{Z} \mid \boldsymbol{\pi})] + \log p(\boldsymbol{\pi}) \\
&\propto \sum_{i}^N \sum_{k}^K  \tau_{i k} \log \pi_k  +\sum_{k =1 }^K \left(\beta_{k}^0-1\right) \log \pi_k\\
&\propto  \sum_{k}^K  \left( \beta_{k}^0 + (\sum_{i}^N \tau_{i k} ) -1\right) \log \pi_k\\
\end{aligned}\ .
\end{equation*}
After exponentiation and normalization, we obtain:
\begin{equation*}
q(\boldsymbol{\pi}) = \mathrm{Dir}(\boldsymbol{\pi};\boldsymbol{\beta}),
\end{equation*}
with
\begin{equation*}
\beta_k = \beta_{k}^0 +\sum_{i}^N \tau_{i k}.
\end{equation*}
\end{proof} 
 \subsection{Optimization of \texorpdfstring{$q(\brho)$ ($\theta_s$)}{q(rho) (theta\_s)}}

As previously mentioned, the selection of prior distributions enables us to remain within the same family of distributions.

\begin{equation*}
q(\boldsymbol{\rho}) = \mathrm{Dir}(\boldsymbol{\rho};\boldsymbol{\theta}),
\end{equation*}
with
\begin{equation*}
\theta_s = \theta_{s}^0 +\sum_{v=1}^V \nu_{v s}.
\end{equation*}

\begin{proof}
According to variational Bayes, the optimal probability distribution can be expressed as follows:
\begin{equation*}
\begin{aligned}
\log q(\boldsymbol{\rho}) &\propto \mathbb{E}_{\bW,\balpha,Z} [ \log \mathbb{P} (\bA,\bZ,\bW,\balpha,\bpi,\brho)] \\
&\propto \mathrm{E}_{\mathbf{W}}[\log p(\mathbf{W} \mid \boldsymbol{\rho})] + \log \p(\boldsymbol{\rho}) \\
&\propto \sum_{v}^V \sum_{s}^Q  \nu_{v s} \log \rho_s  +\sum_{q =1 }^Q \left(\theta_{s}^0-1\right) \log \rho_s\\
&\propto  \sum_{s}^Q  \left( \theta_{s}^0 + (\sum_{v}^V \nu_{v s} ) -1\right) \log \rho_s\\
\end{aligned}.
\end{equation*}
After exponentiation and normalization, we have
\begin{equation*}
q(\boldsymbol{\rho}) = \mathrm{Dir}(\boldsymbol{\rho};\boldsymbol{\theta}),
\end{equation*}
with
\begin{equation*}
\theta_s = \theta_{s}^0 +\sum_{v=1}^V \nu_{v s}.
\end{equation*}
\end{proof}


\subsection{Optimization of \texorpdfstring{$q(\balpha)$ ($\eta_{kls}$ and $\xi_{kls}$)}{q(alpha) (eta\_kls and xi\_kls)}}

Once again, the distribution form of the prior distribution $\p(\balpha)$ is preserved through the variational optimization process.

\begin{equation*}
q(\alpha_{k l s } ) = \mathrm{Beta} (\alpha_{k l s }; \eta_{kls} , \xi_{kls}).
\end{equation*}
When $k \ne l$, parameters $\eta_{kls}$ and $\xi_{kls}$ are given by:
\begin{equation*}
\begin{aligned}
\eta_{kls} &= \eta_{k l s}^0+\sum_{i \ne j}^N \sum_v^V \tau_{i k} \tau_{j l} \nu_{v s} A_{i j v}\\
\xi_{kls} &= \xi_{k l s}^0+\sum_{i \ne j}^N \sum_v^V \tau_{i k} \tau_{j l} \nu_{v s} \left(1-A_{i j v}\right)
\end{aligned}.
\end{equation*}
Otherwise, when $k$ equals $l$, the parameters $\eta_{kks}$ and $\xi_{kks}$  are determined by:
\begin{equation*}
\begin{aligned}
\eta_{kks} &= \eta_{k k s}^0+\sum_{i < j}^N \sum_v^V \tau_{i k} \tau_{j k} \nu_{v s} A_{i j v}\\
\xi_{kks} &= \xi_{k k s}^0+\sum_{i < j}^N \sum_v^V \tau_{i k} \tau_{j k} \nu_{v s} \left(1-A_{i j v}\right)
\end{aligned}.
\end{equation*}

\begin{proof}
In accordance with the principles of variational Bayes, the optimal probability distribution can be formulated as follows:
\begin{equation*}
\begin{aligned}
 \log q(\boldsymbol{\balpha}) &\propto \mathrm{E}_{\mathbf{Z},\bW}[\log \p(\mathbf{A}, \mathbf{Z}, \boldsymbol{\alpha}, \mathbf{W})] \\
&\propto \mathrm{E}_{\mathbf{Z},\bW}[\log p(\mathbf{A} \mid \mathbf{Z},\bW, \boldsymbol{\alpha})]+\log \p(\boldsymbol{\alpha}) \\
&= \sum_{i<j}^N \sum_{k, l}^K \sum_{v}^V \sum_{s}^Q \tau_{i k} \tau_{j l} \nu_{v s} \left(A_{i j v} \log (\alpha_{k l s})+\left(1-A_{i j v}\right) \log \left(1-\alpha_{k l s}\right)\right) \\
& \quad+\sum_{k \leq l}^K \sum_{s}^Q \left(\left(\eta_{k l s}^0-1\right) \log (\alpha_{k l s})+\left(\xi_{k l s}^0-1\right) \log \left(1-\alpha_{k l s}\right)\right)\\
&= \sum_{k<l}^K \sum_{i \neq j}^N \sum_{v}^V \sum_{s}^Q \tau_{i k} \tau_{j l} \nu_{v s} \left(A_{i j v} \log (\alpha_{k l s})+\left(1-A_{i j v}\right) \log \left(1-\alpha_{k l s}\right)\right) \\
& \quad +\sum_{k=1}^K \sum_{i<j}^N \sum_{v}^V \sum_{s=1}^Q  \tau_{i k} \tau_{j k} \nu_{v s}\left(A_{i j v} \log (\alpha_{k k s})+\left(1-A_{i j v}\right) \log \left(1-\alpha_{k k s}\right)\right) \\
& \quad+\sum_{k \leq l}^K \sum_{s}^Q \left(\left(\eta_{k l s}^0-1\right) \log (\alpha_{k l s})+\left(\xi_{k l s}^0-1\right) \log \left(1-\alpha_{k l s}\right)\right)\\
&= \sum_{k<l}^K \sum_{s}^Q  \left(\eta_{k l s}^0-1+\sum_{i \ne j}^N \sum_v^V \tau_{i k} \tau_{j l} \nu_{v s} A_{i j v}\right) \log (\alpha_{k l s}) \,+ \\
&\quad \quad \left(\xi_{k l s}^0-1+\sum_{i \ne j}^N \sum_v^V \tau_{i k} \tau_{j l} \nu_{v s} \left(1-A_{i j v}\right)\right) \log \left(1-\alpha_{k l s}\right) \\
& \quad+\sum_{k=1}^K \sum_{s}^Q  \left(\eta_{k k s}^0-1+\sum_{i<j}^N \sum_v^V \tau_{i k} \tau_{j k} \nu_{v s} A_{i jv}\right) \log \alpha_{k k s} \,+\\
&\quad \quad \left(\xi_{k k s}^0-1+\sum_{i<j}^N \sum_v^V \tau_{i k} \tau_{j k} \nu_{v s} \left(1-A_{i j v}\right)\right) \log \left(1-\alpha_{k k s}\right)
\end{aligned}.
\end{equation*}
Therefore,
\begin{equation*}
q(\alpha_{k l s } ) = \mathrm{Beta} (\alpha_{k l s }; \eta_{kls} , \xi_{kls}),
\end{equation*}
if $k \ne l$,
\begin{equation*}
\begin{aligned}
\eta_{kls} &= \eta_{k l s}^0+\sum_{i \ne j}^N \sum_v^V \tau_{i k} \tau_{j l} \nu_{v s} A_{i j v}\\
\xi_{kls} &= \xi_{k l s}^0+\sum_{i \ne j}^N \sum_v^V \tau_{i k} \tau_{j l} \nu_{v s} \left(1-A_{i j v}\right)
\end{aligned};
\end{equation*}
otherwise,
\begin{equation*}
\begin{aligned}
\eta_{kks} &= \eta_{k k s}^0+\sum_{i < j}^N \sum_v^V \tau_{i k} \tau_{j k} \nu_{v s} A_{i j v}\\
\xi_{kks} &= \xi_{k k s}^0+\sum_{i < j}^N \sum_v^V \tau_{i k} \tau_{j k} \nu_{v s} \left(1-A_{i j v}\right)
\end{aligned}.
\end{equation*}
\end{proof}


\newpage
\section{Evidence Lower Bound}
The lower bound assumes a simplified form after the variational Bayes M-step. It relies solely on the posterior probabilities $\tau_{ik}$ and $\nu_{vs}$ and the normalizing constants of the Dirichlet and Beta distributions.
\begin{equation*}
\begin{aligned}
\mathcal{L}\left( \ q(.) \ \right) &=  \log \left\{\frac{\Gamma\left(\sum_{k=1}^K \beta_k^0\right) \prod_{k=1}^K \Gamma\left(\beta_k\right)}{\Gamma\left(\sum_{k=1}^K \beta_k\right) \prod_{k=1}^K \Gamma\left(\beta_k^0\right)}\right\}
+\log \left\{\frac{\Gamma\left(\sum_{s=1}^Q \theta_s^0\right) \prod_{s=1}^Q \Gamma\left(\theta_s\right)}{\Gamma\left(\sum_{s=1}^Q \theta_s\right) \prod_{s=1}^Q \Gamma\left(\theta_s^0\right)}\right\}\\
&\quad +\sum_{k \leq l}^K  \sum_{s=1}^Q  \log \left\{\frac{\Gamma\left(\eta_{k l s}^0+\xi_{k l s }^0\right) \Gamma\left(\eta_{k  l s}\right) \Gamma\left(\xi_{k l s}\right)} {\Gamma\left(\eta_{k l s} +\xi_{k  l s} \right) \Gamma\left(\eta_{k l s}^0\right) \Gamma\left(\xi_{k l s}^0\right)}\right\} \\
&\quad -\sum_{i}^N \sum_{k}^K  \tau_{i k} \log \tau_{i k} \ - \sum_{v}^V \sum_{s}^Q  \nu_{v s} \log \nu_{v s}\\
\end{aligned}
\end{equation*}
\begin{proof}
The lower bound can be expressed as:
\begin{equation*}
\begin{aligned}
\mathcal{L}\left(q(.)\right) &= \sum_\bZ \sum_\bW \int \int \int q(\bZ,\bW,\balpha,\bpi,\brho) \log\dfrac{\p\left( \bA,\bZ,\bW, \balpha,\bpi,\brho \right)}{q(\bZ,\bW,\balpha,\bpi,\brho)} \  d\balpha \ d\bpi \ d\brho\\
&= \mathbb{E}_{\mathbf{Z}, \mathbf{W},\boldsymbol{\alpha},\boldsymbol{\rho},\boldsymbol{\pi}}[\log \p(\mathbf{A}, \mathbf{Z}, \boldsymbol{\alpha}, \mathbf{W}, \boldsymbol{\rho},\boldsymbol{\pi}) ] - \mathbb{E}_{\mathbf{Z},\mathbf{W},\boldsymbol{\alpha},\boldsymbol{\rho},\boldsymbol{\pi}}[\log q(\mathbf{Z}, \boldsymbol{\alpha}, \mathbf{W}, \boldsymbol{\rho},\boldsymbol{\pi}) ]
\end{aligned} .
\end{equation*}

We can decompose the following terms as:
\begin{equation*}
\begin{aligned}
\mathbb{E}_{\mathbf{Z},\mathbf{W},\boldsymbol{\alpha},\boldsymbol{\rho},\boldsymbol{\pi}}[\log p(\mathbf{A}, \mathbf{Z}, \boldsymbol{\alpha}, \mathbf{W}, \boldsymbol{\rho},\boldsymbol{\pi}) ] =& \  \mathbb{E}_{\mathbf{Z},\mathbf{W},\boldsymbol{\alpha}}[\log \p(\bA \mid \mathbf{Z},\mathbf{W},\boldsymbol{\alpha})] + \mathbb{E}_{\boldsymbol{\alpha}}[\log p(\boldsymbol{\alpha}) ] \\ 
+& \ \mathbb{E}_{\mathbf{Z},\boldsymbol{\pi}}[\log p( \mathbf{Z} \mid \boldsymbol{\pi}) ] + \mathbb{E}_{\boldsymbol{\pi}}[\log p(\boldsymbol{\pi}) ] \\
+& \ \mathbb{E}_{\mathbf{W},\boldsymbol{\rho}}[\log p( \mathbf{W} \mid \boldsymbol{\rho}) ] + \mathbb{E}_{\boldsymbol{\rho}}[\log p(\boldsymbol{\rho}) ]
\end{aligned} \ ,
\end{equation*}
and
\begin{equation*}
\begin{aligned}
\mathbb{E}_{\mathbf{Z},\mathbf{W},\boldsymbol{\alpha},\boldsymbol{\rho},\boldsymbol{\pi}}[\log q( \mathbf{Z}, \boldsymbol{\alpha}, \mathbf{W}, \boldsymbol{\rho},\boldsymbol{\pi}) ] &= \ 
\mathbb{E}_{\mathbf{Z}}[\log q( \mathbf{Z} ) ] + \ \mathbb{E}_{\boldsymbol{\pi}}[\log q(\boldsymbol{\pi}) ] \\
&+ \ \mathbb{E}_{\mathbf{Z}}[\log q( \mathbf{W} ) ] +  \mathbb{E}_{\boldsymbol{\rho}}[\log q(\boldsymbol{\rho}) ] \\
&+ \mathbb{E}_{\boldsymbol{\alpha}}[\log q(\boldsymbol{\alpha}) ]
\end{aligned} \ .
\end{equation*}

Now, the next step involves developing each of these terms and simplifying them as extensively as possible.

\begin{equation*}
\begin{aligned}
\mathbb{E}_{\mathbf{Z},\mathbf{W},\boldsymbol{\alpha}}[\log \p(\bA \mid \mathbf{Z},\mathbf{W},\boldsymbol{\alpha})] + \mathbb{E}_{\boldsymbol{\alpha}}[\log \p(\boldsymbol{\alpha}) ] 
&=\sum_{i<j}^N \sum_{k, l}^K \sum_{v}^V \sum_{s}^Q \tau_{i k} \tau_{j l} \nu_{v s} \Big\{  A_{ijv} \Big( 
\psi(\eta_{kls}) - \psi(\xi_{kls})  
\Big) + \psi(\xi_{kls} ) \\
&\quad - \psi( \eta_{kls} + \xi_{kls} ) 
\Big\}   +\sum_{k \leq l}^K \sum_{s}^Q  \Big\{ \log \Gamma (\eta_{k l s}^0 + \xi_{k l s}^0)   
 - \log \Gamma (\eta_{k l s}^0) \\ &\quad- \log \Gamma (\xi_{k l s}^0) 
+\left(\eta_{k l s}^0-1\right) \left( \psi(\eta_{kls}) - \psi(\xi_{kls}  + \eta_{kls})\right)+ \\
&\quad \left(\xi_{k l s}^0-1\right) \left( \psi(\xi_{kls} ) - \psi( \eta_{kls} + \xi_{kls} ) \right)\Big\}\\
\end{aligned}
\end{equation*}

\begin{equation*}
\begin{aligned}
\mathbb{E}_{\mathbf{Z},\boldsymbol{\pi}}[\log p( \mathbf{Z} \mid \boldsymbol{\pi}) ] + \mathbb{E}_{\boldsymbol{\pi}}[\log p(\boldsymbol{\pi}) ] 
&=  \sum_{i}^N \sum_{k}^K  \tau_{i k} \left( \psi(\beta_k) - \psi(\sum_{k'} \beta_{k'})\right) \\ 
&+ \log \Gamma(\sum_{k'}\beta_{k'}^0) - \log \left( \sum_{k'} \Gamma(\beta_{k'}^0) \right) + \sum_{k =1 }^K \left(\beta_{k}^0-1\right)  \left( \psi(\beta_k) - \psi(\sum_{k'} \beta_{k'})\right)\\
\end{aligned}
\end{equation*}

\begin{equation*}
\begin{aligned}
\mathbb{E}_{\mathbf{W},\boldsymbol{\rho}}[\log p( \mathbf{W} \mid \boldsymbol{\rho}) ] + \mathbb{E}_{\boldsymbol{\rho}}[\log p(\boldsymbol{\rho}) ] &=  \sum_{v}^V \sum_{s}^Q  \nu_{v s} \left( \psi(\theta_s) - \psi(\sum_{s'} \theta_{s'})\right) \\ 
&+ \log \Gamma(\sum_{s'}\theta_{s'}^0) - \log \left( \sum_{s'} \Gamma(\theta_{s'}^0) \right) + \sum_{s =1 }^Q \left(\theta_{s}^0-1\right)  \left( \psi(\theta_s) - \psi(\sum_{s'} \theta_{s'})\right)\\
\end{aligned}
\end{equation*}

\begin{equation*}
\begin{aligned}
\mathbb{E}_{\mathbf{Z}}[\log q( \mathbf{Z} ) ] + \ \mathbb{E}_{\boldsymbol{\pi}}[\log q(\boldsymbol{\pi}) ] =& \sum_{i}^N \sum_{k}^K  \tau_{i k} \log \tau_{i k}  \\
&+ \log \Gamma(\sum_{k'}\beta_{k'}) - \log \left( \sum_{k'} \Gamma(\beta_{k'}) \right) + \sum_{k =1 }^K \left(\beta_{k}-1\right)  \left( \psi(\beta_k) - \psi(\sum_{k'} \beta_{k'})\right)\\
\end{aligned}
\end{equation*}

\begin{equation*}
\begin{aligned}
\mathbb{E}_{\mathbf{Z}}[\log q( \mathbf{W} ) ] +  \mathbb{E}_{\boldsymbol{\rho}}[\log q(\boldsymbol{\rho}) ] =& \sum_{v}^V \sum_{s}^Q  \nu_{v s} \log \nu_{v s}  \\\\
&+ \log \Gamma(\sum_{s'}\theta_{s'}) - \log \left( \sum_{s'} \Gamma(\theta_{s'}) \right) + \sum_{s =1 }^Q \left(\theta_{s}-1\right)  \left( \psi(\theta_s) - \psi(\sum_{s'} \theta_{s'})\right)\\
\end{aligned}
\end{equation*}

\begin{equation*}
\begin{aligned}
\mathbb{E}_{\boldsymbol{\alpha}}[\log q(\boldsymbol{\alpha}) ] =& \sum_{k \leq l}^K \sum_{s}^Q  \Big\{ \log \Gamma (\eta_{k l s} + \xi_{k l s})   - \log \Gamma (\eta_{k l s}) - \log \Gamma (\xi_{k l s}) \\
&+\left(\eta_{k l s}-1\right) \left( \psi(\eta_{kls}) - \psi(\xi_{kls}  + \eta_{kls})\right)+\left(\xi_{k l s}-1\right) \left( \psi(\xi_{kls} ) - \psi( \eta_{kls} + \xi_{kls} ) \right)\Big\}\\
\end{aligned}
\end{equation*}

Now that all the terms have been developed, it's just a matter of grouping them together, to obtain the ELBO below.

\begin{equation*}
\begin{aligned}
\mathcal{L}\left( \ q(.) \ \right) &= \sum_{k < l}^K \sum_{s}^Q \left( \eta_{kls}^0 + \left(   \sum_{i \ne j}^N  \sum_{v}^V  \tau_{i k} \tau_{j l} \nu_{v s} A_{ijv} \right)  -\eta_{kls} \right)\Big( \psi(\eta_{kls}) - \psi( \eta_{kls} + \xi_{kls} ) \Big)  \\
&+ \sum_{k = 1}^K \sum_{s}^Q \left( \eta_{kks}^0 + \left(   \sum_{i<j}^N  \sum_{v}^V  \tau_{i k} \tau_{j k} \nu_{v s} A_{ijv} \right)  -\eta_{kks} \right)\Big( \psi(\eta_{kks}) - \psi( \eta_{kks} + \xi_{kks} ) \Big)  \\
&+ \sum_{k < l}^K \sum_{s}^Q \left( \xi_{kls}^0 + \left(   \sum_{i \ne j}^N  \sum_{v}^V  \tau_{i k} \tau_{j l} \nu_{v s} (1-A_{ijv}) \right)  -\eta_{kls} \right)\Big( \psi(\xi_{kls}) - \psi( \eta_{kls} + \xi_{kls} ) \Big)  \\
&+ \sum_{k = 1}^K \sum_{s}^Q \left( \xi_{kks}^0 + \left(   \sum_{i<j}^N  \sum_{v}^V  \tau_{i k} \tau_{j k} \nu_{v s} (1-A_{ijv}) \right)  -\xi_{kks} \right)\Big( \psi(\xi_{kks}) - \psi( \eta_{kks} + \xi_{kks} ) \Big)  \\
&+ \sum_{k =1}^K \left( \beta_k^0 +    \sum_{i =1}^N  \tau_{i k}   \ -\beta_k \right) \left( \psi(\beta_k) - \psi(\sum_{k'} \beta_{k'})\right)  \\
&+ \sum_{q =1}^Q \left( \theta_s^0 +    \sum_{v =1}^V  \nu_{v s}   \ -\theta_s \right) \left( \psi(\theta_s) - \psi(\sum_{s'} \theta_{s'})\right)  \\
&- \sum_{i}^N \sum_{k}^K  \tau_{i k} \log \tau_{i k} \ - \sum_{v}^V \sum_{s}^Q  \nu_{v s} \log \nu_{v s}\\
&+ \log \left\{\frac{\Gamma\left(\sum_{k=1}^K \beta_k^0\right) \prod_{k=1}^K \Gamma\left(\beta_k\right)}{\Gamma\left(\sum_{k=1}^K \beta_k\right) \prod_{k=1}^K \Gamma\left(\beta_k^0\right)}\right\}
+\log \left\{\frac{\Gamma\left(\sum_{s=1}^Q \theta_s^0\right) \prod_{s=1}^Q \Gamma\left(\theta_s\right)}{\Gamma\left(\sum_{s=1}^Q \theta_s\right) \prod_{s=1}^Q \Gamma\left(\theta_s^0\right)}\right\}\\
& \quad + \sum_{k \leq l}^K  \sum_{s=1}^Q  \log \left\{\frac{\Gamma\left(\eta_{k l s}^0+\xi_{k l q }^0\right) \Gamma\left(\eta_{k  l s}\right) \Gamma\left(\xi_{k l s}\right)} {\Gamma\left(\eta_{k l s} +\xi_{k  l s} \right) \Gamma\left(\eta_{k l s}^0\right) \Gamma\left(\xi_{k l s}^0\right)}\right\}
\end{aligned}
\end{equation*}
However, by definition of the parameters, we have many terms that cancel each other out:
\begin{itemize}
    \item $ \eta_{kls} = \eta_{kls}^0 + \left(   \sum_{i \ne j}^N  \sum_{v}^V  \tau_{i k} \tau_{j l} \nu_{v s} A_{ijv} \right)$
    \item  $\eta_{kks} = \eta_{kks}^0 + \left(  \sum_{i<j}^N  \sum_{v}^V  \tau_{i k} \tau_{j k} \nu_{v s} A_{ijv} \right)$
    \item $\eta_{kls} = \xi_{kls}^0 + \left(   \sum_{i \ne j}^N  \sum_{v}^V  \tau_{i k} \tau_{j l} \nu_{v s} (1-A_{ijv}) \right)$
    \item $\xi_{kks}= \xi_{kks}^0 + \left(   \sum_{i<j}^N  \sum_{v}^V  \tau_{i k} \tau_{j k} \nu_{v s} (1-A_{ijv}) \right)$
    \item $\beta_k = \beta_k^0 +    \sum_{i =1}^N  \tau_{i k}$
    \item $\theta_s  = \theta_s^0 +    \sum_{v =1}^V  \nu_{v s}$  
\end{itemize}
Hence:
\begin{equation*}
\begin{aligned}
\mathcal{L}\left(q(.)\right) &=  \log \left\{\frac{\Gamma\left(\sum_{k=1}^K \beta_k^0\right) \prod_{k=1}^K \Gamma\left(\beta_k\right)}{\Gamma\left(\sum_{k=1}^K \beta_k\right) \prod_{k=1}^K \Gamma\left(\beta_k^0\right)}\right\}
+\log \left\{\frac{\Gamma\left(\sum_{s=1}^Q \theta_s^0\right) \prod_{s=1}^Q \Gamma\left(\theta_s\right)}{\Gamma\left(\sum_{s=1}^Q \theta_s\right) \prod_{s=1}^Q \Gamma\left(\theta_s^0\right)}\right\}\\
&\quad +\sum_{k \leq l}^K  \sum_{s=1}^Q  \log \left\{\frac{\Gamma\left(\eta_{k l s}^0+\xi_{k l q }^0\right) \Gamma\left(\eta_{k  l s}\right) \Gamma\left(\xi_{k l s}\right)} {\Gamma\left(\eta_{k l s} +\xi_{k  l s} \right) \Gamma\left(\eta_{k l s}^0\right) \Gamma\left(\xi_{k l s}^0\right)}\right\} \\
&\quad -\sum_{i}^N \sum_{k}^K  \tau_{i k} \log \tau_{i k} \ - \sum_{v}^V \sum_{s}^Q  \nu_{v s} \log \nu_{v s}\\
\end{aligned}
\end{equation*}
\end{proof}
\end{appendix}


\end{document}